\documentclass[12pt]{article}
\usepackage{amsmath}
\usepackage{mathtools}
\usepackage{amsfonts}
\usepackage{tikz}
\usepackage{booktabs, multirow} % for borders and merged
\usepackage[letterpaper,top=1in,bottom=1in,left=1in,right=1in]{geometry}
\usepackage{bm}
\usepackage{float}
\usepackage{subfig}
\usepackage{graphicx}
\usepackage[shortlabels]{enumitem}
\usepackage{url}
\usepackage{xcolor}
% \usepackage{lineno}
% \linenumbers
%\usepackage{xcolor,cancel}
%\usepackage[nocomma]{optidef}
\usepackage{amsthm}
\usepackage{multirow}

\renewcommand\a{\bm{a}}
\newcommand\bmu{\bm{\mu}}
\renewcommand\b{\bm{b}}

\newcommand\q{\bm{q}}
\newcommand\w{\bm{w}}
\newcommand\x{\bm{x}}
\newcommand\y{\bm{y}}

\renewcommand\v{\bm{v}}

\newcommand{\LF}{\mathrm{LF}}
\newcommand{\R}{\mathbb{R}}
\newcommand{\Diag}{\mathop{\mathrm{Diag}}}
\newcommand{\rank}{\mathop{\mathrm{rank}}}
\newcommand{\eps}{\epsilon}
\newcommand{\be}{\begin{equation}}
\newcommand{\ee}{\end{equation}}
\newcommand{\vol}{\mathrm{vol}}
\newcommand{\supp}{\mathrm{supp}}
\newcommand{\dist}{\mathrm{dist}}

\newcommand{\norm}[1] {\left \| #1 \right \|} %norm

\newtheorem{theorem}{Theorem}

\newtheorem{lemma}[theorem]{Lemma}

\newtheorem{claim}{Claim}

\newenvironment{manualtheorem}[1]{%
  \manualtheoreminner
}{\endmanualtheoreminner}

\title{Re-embedding data to strengthen recovery guarantees of clustering}

\author{Tao Jiang\thanks{School of Operations Research and Information Engineering, Cornell University, Ithaca, New York, USA 14850, {\tt tj293@cornell.edu}.} \and
Samuel Tan\thanks{School of Operations Research and Information Engineering, Cornell University, Ithaca, New York, USA 14850, {\tt sst76@cornell.edu}.} \and
Stephen Vavasis\thanks{Department of Combinatorics \&  Optimization, University of Waterloo, Waterloo, Ontario,  Canada N2L 3G1, {\tt vavasis@uwaterloo.ca}.  Research supported in part by a Discovery Grant from the Natural Science and Engineering Research Council (NSERC) of Canada.}}
\date{}

\begin{document}

\maketitle

\begin{abstract}
We propose a clustering method that involves chaining four known techniques into a pipeline yielding an algorithm with stronger recovery guarantees than any of the four components separately.  Given $n$ points in $\R^d$, the first component of our pipeline, which we call leapfrog distances, is reminiscent of density-based clustering, yielding an $n\times n$ distance matrix.  The leapfrog distances are then translated to new embeddings using multidimensional scaling and spectral methods, two other known techniques, yielding new embeddings of the $n$ points in $\R^{d'}$, where $d'$ satisfies $d'\ll d$ in general.  Finally, sum-of-norms (SON) clustering is applied to the re-embedded points.  Although the fourth step (SON clustering) can in principle be replaced by any other clustering method, our focus is on provable guarantees of recovery of underlying structure.
Therefore, we establish that the re-embedding improves recovery SON clustering, since SON clustering is a well-studied method that already has provable guarantees.

\end{abstract}
%\newpage

\section{Introduction}

Clustering is a well-established unsupervised learning technique that, informally speaking, partitions a group of $n$ points denoted $\a_1,\ldots,\a_n\in \mathbb{R}^d$ into $K$ ``clusters'' such that the intra-cluster distances are small while inter-cluster distances are large.

Sum-of-norms (SON) clustering is a method of formulating the above problem as a convex optimization problem, proposed independently by Pelckmans et al.~\cite{pelckmans}, Hocking et al.~\cite{hocking}, and Lindsten et al.~\cite{lindsten}. The formulation is the following:
\begin{equation}
    \min_{\x_1,\ldots,\x_n\in\R^d} \frac{1}{2}\sum_{i=1}^n \norm{\x_i-\a_i}^2 +\lambda\sum_{1 \le i<j \le n}\norm{\x_i-\x_j}.
    \label{eq:son-clustering}
\end{equation}

We interpret the solution $\x_1^*,\ldots,\x_n^*$ as the cluster centroids.  When $\x_i^* = \x_{i'}^*$  points $i$ and $i'$ are in the same cluster, and otherwise not. The parameter $\lambda>0$ in \eqref{eq:son-clustering} controls the number of clusters.  When $\lambda = 0$, every point is its own cluster.  In the opposite direction, there is a finite data-dependent $\bar\lambda$ such that the solution to \eqref{eq:son-clustering} for $\lambda\ge\bar\lambda$ yields a single cluster, i.e., a solution in which $\x_1^*=\cdots=\x_n^*$.  It is known \cite{chiquet} that clusters merge but never break apart as $\lambda$ increases from $0$ to $\bar\lambda$, and thus \eqref{eq:son-clustering} encodes a clustering hierarchy.

Much of the recent work on SON clustering is concerned with recovery guarantees. When clusters are generated by well-separated hypercubes, Zhu et al.~\cite{Zhu} shows perfect recovery. Tan and Witten~\cite{tan2015} draws connections between sum-of-norms clustering and single-linkage hierarchical and $k$-means clustering. Panahi et al.~\cite{Panahi} and Jiang et al.~\cite{jiang2020recovery} analyze the case when clusters are assumed to be mixtures of Gaussians. Efficient algorithms are developed in Chi and Lange~\cite{chiLange} and Sun et al.~\cite{dsun1}.

On the other hand, other recent work has pointed to limitations of SON clustering.
Dunlap and Mourrat~\cite{dunlap2021local} consider the asymptotic limit as the number of points scale to infinity, which yields the related problem of clustering of measures. They prove that even when the points are sampled uniformly at random from two disjoint unit closed balls, SON clustering is not able to distinguish the two balls unless there is a positive gap $\rho_d$ separating the balls.
Nguyen and Mamitsuka~\cite{2021nguyen} show that SON clustering can  find only convex clusters, i.e. clusters for which the interiors of their convex hulls are disjoint.

The negative results in previous paragraph motivate this paper.  We develop a method to re-embed $\a_1,\ldots,\a_n$ with new coordinates $\b_1,\ldots,\b_n\in\R^{d'}$ that unravels nonconvex clusters into convex ones and separates the peaks of points sampled from Gaussian mixtures so that SON clustering can succeed on the re-embedded points.

The first component of our approach is to make a density-based distance matrix using {\em leapfrog} distances, introduced in
Section \ref{sec:LF_dis}. Section \ref{sec:lf_property} develops bounds for the leapfrog distance of points in one dimension sampled from a distribution.  Section \ref{sec:LF_Rd} subsequently derives bounds on the leapfrog distance in $\mathbb{R}^d$. Section \ref{sec:dist2coord} describes the second and third components of our approach, namely multidimensional scaling and spectral coordinates.
Section \ref{sec:b_property} derives properties of the new embedded coordinates.  The final component of our pipeline, sum-of-norms clustering, has known theoretical guarantees described in
Section \ref{sec:chiquetproof}, which leads to our new results on guaranteed recovery for the re-embedded points.  Section \ref{sec:exper} shows numerical results of our pipeline on well known data sets such as the half-moons dataset and concentric circles, which cannot be recovered using \eqref{eq:son-clustering} directly.  Section \ref{sec:conc} concludes and comments on potential future work.

%{\bf The next sentence is incorrect since we also use $K,L$ as integers and $X$ as a scalar random variable.}
%\textcolor{red}{L is sometimes used for Lipschitz  constants, R for remainder terms, and T to denote whole expressions}
%Throughout the paper, we generally use capital letters to denote matrices, bold and small letters to denote vectors and small letters to denote scalars; the exceptions are $X, K, L$, which are reserved for random variables, the number of clusters, and the embedding dimension, respectively.
% For any matrix $A$, we use $\a_i$ to denote the $i$-th column and $a_{i,j}$ to denote the $(i,j)$-th entry of the matrix $A$. For any diagonal matrix $\Lambda$, let $\lambda_i$ denote the $i$-th entry on the diagonal of $\Lambda$. For any vector $\x$, the symbol $\a[i]$ amd $\a[I]$ denotes the $i$-th entry and the entries indexed by set $I$ of the vector $\a$ respectively. For each point $i$, let $C(i)$ indicate the cluster that $i$ belongs to and $\bmu_{C(i)}$ denote the corresponding cluster centroid.

% Whenever we compute the eigenpairs, we assume that the eigenvalues are sorted in the order of descending magnitude, and the eigenvectors are sorted in correspondence to their eigenvalues.

% Let $\norm{\cdot}_p$ denote the operator $p$-norm with $\norm{A}_p = \sup_{\|v\|_p=1} \norm{Av}_p$. Let $\norm{\cdot}_{p,q}$ denote the entry-wise matrix norm defined by $\norm{A}_{p,q}=\norm{\left (\norm{\a_1}_p, \dots, \norm{\a_d}_p \right)}_q$.

We conclude this introduction by noting that Dunlap and Mourrat~\cite{dunlap2021local} show that recovery of clusters supported on disjoint sets is also assured if exponentially decaying weights are introduced into the second summation of the right-hand side of \eqref{eq:son-clustering}.  However, introduction of weights apparently loses some of the advantages of SON clustering.  For example, it is not known whether exponential weights yield a hierarchical clustering as $\lambda$ increases; the ADMM algorithm of Chi and Lange~\cite{chiLange} can no longer use the efficient Sherman-Morrison-Woodbury formula and apparently must resort to an $O(n^3)$ operation on each iteration; and it is not known whether exponential weights can recover mixtures of Gaussians.

\section{From coordinates to distances: leapfrog distance}
\label{sec:LF_dis}
In this section, we introduce and motivate the ``leapfrog'' distance.
Leapfrog distance is similar in spirit to density scanning as in the well-known DBScan clustering algorithm \cite{ester1996density}.

Consider a complete graph $G=(\{\a_1,\ldots,\a_n\},E)$ constructed from the set of data points, and $E$, the set of undirected edges which connect every pair of distinct vertices. For each edge $(i,j) \in E$, we assign a cost of $c_{i,j}:= \norm{\a_i - \a_j}^2$ to be the squared Euclidean distance between $\a_i, \a_j$. We define the leapfrog distance between $i,j$ to be the total cost of the shortest path between nodes $i,j$ on the weighted graph $(G,c)$. Figure \ref{fig:twod} illustrates an example of the leapfrog distance for five points in two dimensions.

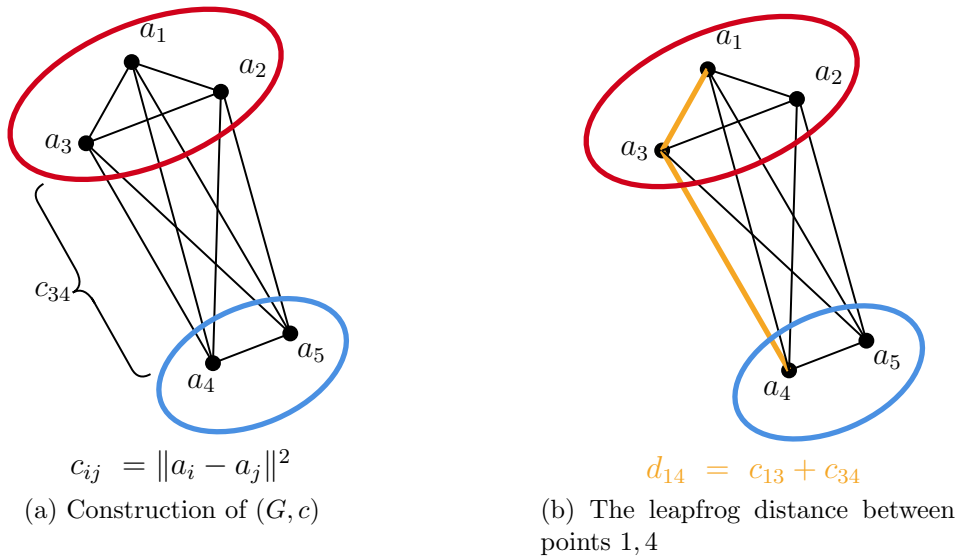
\begin{figure}[ht]
\vskip -0.05in
\centering
\subfloat[][Construction of $(G,c)$]{

\tikzset{every picture/.style={line width=0.75pt}} %set default line width to 0.75pt
\scalebox{1.0}{
\begin{tikzpicture}[x=0.75pt,y=0.75pt,yscale=-1,xscale=1]
%uncomment if require: \path (0,300); %set diagram left start at 0, and has height of 300

%Shape: Circle [id:dp4634933866775566]
\draw  [fill={rgb, 255:red, 0; green, 0; blue, 0 }  ,fill opacity=1 ] (69,89.5) .. controls (69,87.57) and (70.57,86) .. (72.5,86) .. controls (74.43,86) and (76,87.57) .. (76,89.5) .. controls (76,91.43) and (74.43,93) .. (72.5,93) .. controls (70.57,93) and (69,91.43) .. (69,89.5) -- cycle ;
%Shape: Circle [id:dp6633863049221029]
\draw  [fill={rgb, 255:red, 0; green, 0; blue, 0 }  ,fill opacity=1 ] (133,200.5) .. controls (133,198.57) and (134.57,197) .. (136.5,197) .. controls (138.43,197) and (140,198.57) .. (140,200.5) .. controls (140,202.43) and (138.43,204) .. (136.5,204) .. controls (134.57,204) and (133,202.43) .. (133,200.5) -- cycle ;
%Shape: Circle [id:dp6908806247699417]
\draw  [fill={rgb, 255:red, 0; green, 0; blue, 0 }  ,fill opacity=1 ] (172,185.5) .. controls (172,183.57) and (173.57,182) .. (175.5,182) .. controls (177.43,182) and (179,183.57) .. (179,185.5) .. controls (179,187.43) and (177.43,189) .. (175.5,189) .. controls (173.57,189) and (172,187.43) .. (172,185.5) -- cycle ;
%Shape: Circle [id:dp042054043290254306]
\draw  [fill={rgb, 255:red, 0; green, 0; blue, 0 }  ,fill opacity=1 ] (92,48.5) .. controls (92,46.57) and (93.57,45) .. (95.5,45) .. controls (97.43,45) and (99,46.57) .. (99,48.5) .. controls (99,50.43) and (97.43,52) .. (95.5,52) .. controls (93.57,52) and (92,50.43) .. (92,48.5) -- cycle ;
%Shape: Circle [id:dp8028939495786371]
\draw  [color={rgb, 255:red, 0; green, 0; blue, 0 }  ,draw opacity=1 ][fill={rgb, 255:red, 0; green, 0; blue, 0 }  ,fill opacity=1 ] (137,63.5) .. controls (137,61.57) and (138.57,60) .. (140.5,60) .. controls (142.43,60) and (144,61.57) .. (144,63.5) .. controls (144,65.43) and (142.43,67) .. (140.5,67) .. controls (138.57,67) and (137,65.43) .. (137,63.5) -- cycle ;
%Straight Lines [id:da6126160520985098]
\draw    (72.5,89.5) -- (136.5,200.5) ;
%Straight Lines [id:da9459853370936262]
\draw    (72.5,89.5) -- (175.5,185.5) ;
%Straight Lines [id:da23247537998266132]
\draw    (140.5,60) -- (175.5,185.5) ;
%Straight Lines [id:da6680494082618087]
\draw    (136.5,200.5) -- (175.5,185.5) ;
%Straight Lines [id:da23155551080389203]
\draw    (136.5,200.5) -- (95.5,48.5) ;
%Straight Lines [id:da6300283949453789]
\draw    (72.5,89.5) -- (95.5,48.5) ;
%Straight Lines [id:da7383854076344605]
\draw    (95.5,48.5) -- (140.5,63.5) ;
%Straight Lines [id:da10346660390253204]
\draw    (140.5,63.5) -- (72.5,89.5) ;
%Straight Lines [id:da9110312417164164]
\draw    (95.5,48.5) -- (175.5,185.5) ;
%Straight Lines [id:da7697678466510591]
\draw    (140.5,63.5) -- (136.5,200.5) ;
%Shape: Ellipse [id:dp48348841112608953]
\draw  [color={rgb, 255:red, 208; green, 2; blue, 27 }  ,draw opacity=1 ][line width=2.0]  (35.53,91.9) .. controls (28.09,73.5) and (52.05,46.45) .. (89.05,31.48) .. controls (126.05,16.51) and (162.08,19.29) .. (169.52,37.69) .. controls (176.96,56.08) and (153.01,83.14) .. (116.01,98.11) .. controls (79.01,113.08) and (42.98,110.3) .. (35.53,91.9) -- cycle ;
%Shape: Ellipse [id:dp10949860960441016]
\draw  [color={rgb, 255:red, 74; green, 144; blue, 226 }  ,draw opacity=1 ][line width=2.0]  (110.7,221.2) .. controls (104.24,206.18) and (119.5,185.18) .. (144.8,174.3) .. controls (170.09,163.41) and (195.83,166.77) .. (202.3,181.8) .. controls (208.76,196.82) and (193.5,217.82) .. (168.2,228.7) .. controls (142.91,239.59) and (117.17,236.23) .. (110.7,221.2) -- cycle ;
%Shape: Brace [id:dp8328820891150415]
\draw   (51,110) .. controls (46.93,112.29) and (46.04,115.46) .. (48.33,119.53) -- (67,152.72) .. controls (70.27,158.53) and (69.87,162.57) .. (65.8,164.86) .. controls (69.87,162.57) and (73.53,164.34) .. (76.8,170.15)(75.33,167.53) -- (95.47,203.33) .. controls (97.76,207.4) and (100.93,208.29) .. (105,206) ;

% Text Node
\draw (98,27.4) node [anchor=north west][inner sep=0.75pt]    {$a_{1}$};
% Text Node
\draw (148,45.4) node [anchor=north west][inner sep=0.75pt]    {$a_{2}$};
% Text Node
\draw (50,83.4) node [anchor=north west][inner sep=0.75pt]    {$a_{3}$};
% Text Node
\draw (122,204.4) node [anchor=north west][inner sep=0.75pt]    {$a_{4}$};
% Text Node
\draw (177.5,188.9) node [anchor=north west][inner sep=0.75pt]    {$a_{5}$};
% Text Node
\draw (63,242.4) node [anchor=north west][inner sep=0.75pt]    {$c_{ij} \ =\| a_{i} -a_{j} \| ^{2} \ $};
% Text Node
\draw (45,158.4) node [anchor=north west][inner sep=0.75pt]    {$c_{34}$};

\end{tikzpicture}
}
}
\hspace{2.0cm}
\subfloat[][The leapfrog distance between points $1,4$]{
\tikzset{every picture/.style={line width=0.75pt}} %set default line width to 0.75pt
\scalebox{1.0}{
\begin{tikzpicture}[x=0.75pt,y=0.75pt,yscale=-1,xscale=1]
%uncomment if require: \path (0,300); %set diagram left start at 0, and has height of 300

%Shape: Circle [id:dp6459530858219487]
\draw  [fill={rgb, 255:red, 0; green, 0; blue, 0 }  ,fill opacity=1 ] (69,89.5) .. controls (69,87.57) and (70.57,86) .. (72.5,86) .. controls (74.43,86) and (76,87.57) .. (76,89.5) .. controls (76,91.43) and (74.43,93) .. (72.5,93) .. controls (70.57,93) and (69,91.43) .. (69,89.5) -- cycle ;
%Shape: Circle [id:dp2596400733425299]
\draw  [fill={rgb, 255:red, 0; green, 0; blue, 0 }  ,fill opacity=1 ] (133,200.5) .. controls (133,198.57) and (134.57,197) .. (136.5,197) .. controls (138.43,197) and (140,198.57) .. (140,200.5) .. controls (140,202.43) and (138.43,204) .. (136.5,204) .. controls (134.57,204) and (133,202.43) .. (133,200.5) -- cycle ;
%Shape: Circle [id:dp7204612937543888]
\draw  [fill={rgb, 255:red, 0; green, 0; blue, 0 }  ,fill opacity=1 ] (172,185.5) .. controls (172,183.57) and (173.57,182) .. (175.5,182) .. controls (177.43,182) and (179,183.57) .. (179,185.5) .. controls (179,187.43) and (177.43,189) .. (175.5,189) .. controls (173.57,189) and (172,187.43) .. (172,185.5) -- cycle ;
%Shape: Circle [id:dp05111658157403487]
\draw  [fill={rgb, 255:red, 0; green, 0; blue, 0 }  ,fill opacity=1 ] (92,48.5) .. controls (92,46.57) and (93.57,45) .. (95.5,45) .. controls (97.43,45) and (99,46.57) .. (99,48.5) .. controls (99,50.43) and (97.43,52) .. (95.5,52) .. controls (93.57,52) and (92,50.43) .. (92,48.5) -- cycle ;
%Shape: Circle [id:dp3904702470469934]
\draw  [color={rgb, 255:red, 0; green, 0; blue, 0 }  ,draw opacity=1 ][fill={rgb, 255:red, 0; green, 0; blue, 0 }  ,fill opacity=1 ] (137,63.5) .. controls (137,61.57) and (138.57,60) .. (140.5,60) .. controls (142.43,60) and (144,61.57) .. (144,63.5) .. controls (144,65.43) and (142.43,67) .. (140.5,67) .. controls (138.57,67) and (137,65.43) .. (137,63.5) -- cycle ;
%Straight Lines [id:da9252565513552906]
\draw [color={rgb, 255:red, 245; green, 166; blue, 35 }  ,draw opacity=1 ][line width=2.0]    (72.5,89.5) -- (136.5,200.5) ;
%Straight Lines [id:da12523873856051426]
\draw    (72.5,89.5) -- (175.5,185.5) ;
%Straight Lines [id:da8612446862550065]
\draw    (140.5,60) -- (175.5,185.5) ;
%Straight Lines [id:da7001712691476307]
\draw    (136.5,200.5) -- (175.5,185.5) ;
%Straight Lines [id:da9105870157234806]
\draw    (136.5,200.5) -- (95.5,48.5) ;
%Straight Lines [id:da45727183631560475]
\draw [color={rgb, 255:red, 245; green, 166; blue, 35 }  ,draw opacity=1 ][line width=2.0]    (72.5,89.5) -- (95.5,48.5) ;
%Straight Lines [id:da8500107850595653]
\draw    (95.5,48.5) -- (140.5,63.5) ;
%Straight Lines [id:da7421606860223344]
\draw    (140.5,63.5) -- (72.5,89.5) ;
%Straight Lines [id:da9233925975990447]
\draw    (95.5,48.5) -- (175.5,185.5) ;
%Straight Lines [id:da6053300851118188]
\draw    (140.5,63.5) -- (136.5,200.5) ;
%Shape: Ellipse [id:dp7143019141110074]
\draw  [color={rgb, 255:red, 208; green, 2; blue, 27 }  ,draw opacity=1 ][line width=2.0]  (35.53,91.9) .. controls (28.09,73.5) and (52.05,46.45) .. (89.05,31.48) .. controls (126.05,16.51) and (162.08,19.29) .. (169.52,37.69) .. controls (176.96,56.08) and (153.01,83.14) .. (116.01,98.11) .. controls (79.01,113.08) and (42.98,110.3) .. (35.53,91.9) -- cycle ;
%Shape: Ellipse [id:dp159886720017]
\draw  [color={rgb, 255:red, 74; green, 144; blue, 226 }  ,draw opacity=1 ][line width=2.0]  (110.7,221.2) .. controls (104.24,206.18) and (119.5,185.18) .. (144.8,174.3) .. controls (170.09,163.41) and (195.83,166.77) .. (202.3,181.8) .. controls (208.76,196.82) and (193.5,217.82) .. (168.2,228.7) .. controls (142.91,239.59) and (117.17,236.23) .. (110.7,221.2) -- cycle ;

% Text Node
\draw (98,27.4) node [anchor=north west][inner sep=0.75pt]    {$a_{1}$};
% Text Node
\draw (148,45.4) node [anchor=north west][inner sep=0.75pt]    {$a_{2}$};
% Text Node
\draw (50,83.4) node [anchor=north west][inner sep=0.75pt]    {$a_{3}$};
% Text Node
\draw (122,204.4) node [anchor=north west][inner sep=0.75pt]    {$a_{4}$};
% Text Node
\draw (177.5,188.9) node [anchor=north west][inner sep=0.75pt]    {$a_{5}$};
% Text Node
\draw (63,242.4) node [anchor=north west][inner sep=0.75pt]    {$\textcolor[rgb]{0.96,0.65,0.14}{d_{14} \ =\ c_{13} +c_{34}}$};
\end{tikzpicture}
}
}
\vskip 0.1in
\caption{Example of constructing the leapfrog distance for 5 datapoints in 2D. (a) Step 1: construct a complete graph with edge cost $c_{i,j}=\norm{\a_i-\a_j}^2$ to be the squared Euclidean distance between the two endpoints. (b) Step 2: define the leapfrog distance to be the total cost of the shortest path on $(G,c)$. In our example, even though the Euclidean distance between points 1 and 4 is $\norm{\a_1-\a_4}$ and corresponds to the length of the line segment between $\a_1,\a_4$, the leapfrog distance between point 1 and point 4 is $\norm{\a_1-\a_3}^2 + \norm{\a_3-\a_4}^2$ and corresponds to the path $(\a_1,\a_3),(\a_3,\a_4)$.}
\label{fig:twod}
\vskip 0.1in
\end{figure}

The name ``leapfrog'' comes from the following thought experiment. Imagine  a frog tries to get from point $i$ to $j$ on the weighted graph $(G,c)$. Due to the definition of our costs $c$, our frog prefers a path with many small leaps over one consisting of a few large leaps when minimizing the total cost of the trip. The leapfrog distance corresponds to the path of many small leaps that the frog takes. As the number of data points increases, the neighboring points get closer. As this happens, we have simultaneously that (1) the frog makes more leaps and (2) the leaps it takes also become smaller in order to keep the total cost low. However, when the frog tries to travel through a gap, a big leap is inevitable. Such gaps correspond to the frog jumping from one cluster to another, where this gap remains large even as the sample size grows. In this way, our constructed graph $(G,c)$ separates clusters so that the frog may find paths of lower cost within clusters relative to the cost of jumping between clusters. This intuition underpins the development of our re-embedded coordinates.

Figure \ref{fig:1d_disjoint_support} makes this intuition more concrete, in the particular case of uniformly distributed points on $\left[0,\frac{1}{3} \right]\cup \left[\frac{2}{3},1 \right]$ where the two disjoint parts are two clusters. There we see that the expected leapfrog distance for consecutive points within the same cluster is on the order $O\left(\frac{1}{n^2}\right)$, yielding an expected intra-cluster distance of at most $O\left(\frac{1}{n}\right).$ On the other hand, for points between the two clusters, the expected leapfrog distance is at least $\frac{1}{9} + O\left(\frac{1}{n}\right)$. By a simple concentration inequality argument, we may conclude that the intra-cluster leapfrog distances tend to 0 asymptotically, while the inter-cluster distance tends to $\frac{1}{9}$.  If the points are re-embedded so that their pairwise distances are leapfrog distances, then clustering becomes trivial.

\begin{figure}[ht]
% \vskip 0.2in
\centering
\subfloat[][Intra-cluster LF distance]{
\tikzset{every picture/.style={line width=0.75pt}} %set default line width to 0.75pt
\scalebox{1.0}{
\begin{tikzpicture}[x=0.75pt,y=0.75pt,yscale=-1,xscale=1]
%uncomment if require: \path (0,300); %set diagram left start at 0, and has height of 300

%Straight Lines [id:da76748335668733]
\draw    (130,170) -- (310,170) ;
%Shape: Rectangle [id:dp21295076292306891]
\draw  [color={rgb, 255:red, 74; green, 144; blue, 226 }  ,draw opacity=1 ][fill={rgb, 255:red, 74; green, 144; blue, 226 }  ,fill opacity=1 ] (130,160) -- (190,160) -- (190,170) -- (130,170) -- cycle ;
%Shape: Rectangle [id:dp07219949729366681]
\draw  [color={rgb, 255:red, 74; green, 144; blue, 226 }  ,draw opacity=1 ][fill={rgb, 255:red, 74; green, 144; blue, 226 }  ,fill opacity=1 ] (250,160) -- (310,160) -- (310,170) -- (250,170) -- cycle ;
%Shape: Circle [id:dp7691081719562614]
\draw  [fill={rgb, 255:red, 0; green, 0; blue, 0 }  ,fill opacity=1 ] (138,170) .. controls (138,168.9) and (138.9,168) .. (140,168) .. controls (141.1,168) and (142,168.9) .. (142,170) .. controls (142,171.1) and (141.1,172) .. (140,172) .. controls (138.9,172) and (138,171.1) .. (138,170) -- cycle ;
%Shape: Circle [id:dp1635029714688745]
\draw  [fill={rgb, 255:red, 0; green, 0; blue, 0 }  ,fill opacity=1 ] (148,170) .. controls (148,168.9) and (148.9,168) .. (150,168) .. controls (151.1,168) and (152,168.9) .. (152,170) .. controls (152,171.1) and (151.1,172) .. (150,172) .. controls (148.9,172) and (148,171.1) .. (148,170) -- cycle ;
%Shape: Circle [id:dp5922049852385807]
\draw  [fill={rgb, 255:red, 0; green, 0; blue, 0 }  ,fill opacity=1 ] (168,170) .. controls (168,168.9) and (168.9,168) .. (170,168) .. controls (171.1,168) and (172,168.9) .. (172,170) .. controls (172,171.1) and (171.1,172) .. (170,172) .. controls (168.9,172) and (168,171.1) .. (168,170) -- cycle ;
%Shape: Circle [id:dp018335296113615174]
\draw  [fill={rgb, 255:red, 0; green, 0; blue, 0 }  ,fill opacity=1 ] (157,170) .. controls (157,168.9) and (157.9,168) .. (159,168) .. controls (160.1,170) and (161,168.9) .. (161,170) .. controls (161,171.1) and (160.1,172) .. (159,172) .. controls (157.9,172) and (157,171.1) .. (157,170) -- cycle ;
%Shape: Circle [id:dp3534011399730015]
\draw  [fill={rgb, 255:red, 0; green, 0; blue, 0 }  ,fill opacity=1 ] (179,170) .. controls (179,168.9) and (179.9,168) .. (181,168) .. controls (182.1,168) and (183,168.9) .. (183,170) .. controls (183,171.1) and (182.1,172) .. (181,172) .. controls (179.9,172) and (179,171.1) .. (179,170) -- cycle ;
%Shape: Circle [id:dp41886499720794834]
\draw  [fill={rgb, 255:red, 0; green, 0; blue, 0 }  ,fill opacity=1 ] (258,170) .. controls (258,168.9) and (258.9,168) .. (260,168) .. controls (261.1,168) and (262,168.9) .. (262,170) .. controls (262,171.1) and (261.1,172) .. (260,172) .. controls (258.9,172) and (258,171.1) .. (258,170) -- cycle ;
%Shape: Circle [id:dp6400793996907661]
\draw  [fill={rgb, 255:red, 0; green, 0; blue, 0 }  ,fill opacity=1 ] (268,170) .. controls (268,168.9) and (268.9,168) .. (270,168) .. controls (271.1,168) and (272,168.9) .. (272,170) .. controls (272,171.1) and (271.1,172) .. (270,172) .. controls (268.9,172) and (268,171.1) .. (268,170) -- cycle ;
%Shape: Circle [id:dp09677053050273776]
\draw  [fill={rgb, 255:red, 0; green, 0; blue, 0 }  ,fill opacity=1 ] (278,170) .. controls (278,168.9) and (278.9,168) .. (280,168) .. controls (281.1,168) and (282,168.9) .. (282,170) .. controls (282,171.1) and (281.1,172) .. (280,172) .. controls (278.9,172) and (278,171.1) .. (278,170) -- cycle ;
%Shape: Circle [id:dp26487147806029676]
\draw  [fill={rgb, 255:red, 0; green, 0; blue, 0 }  ,fill opacity=1 ] (287,170) .. controls (287,168.9) and (287.9,168) .. (289,168) .. controls (290.1,168) and (291,168.9) .. (291,170) .. controls (291,171.1) and (290.1,172) .. (289,172) .. controls (287.9,172) and (287,171.1) .. (287,170) -- cycle ;
%Shape: Circle [id:dp8983293343109942]
\draw  [fill={rgb, 255:red, 0; green, 0; blue, 0 }  ,fill opacity=1 ] (298,170) .. controls (298,168.9) and (298.9,168) .. (300,168) .. controls (301.1,168) and (302,168.9) .. (302,170) .. controls (302,171.1) and (301.1,172) .. (300,172) .. controls (298.9,172) and (298,171.1) .. (298,170) -- cycle ;
%Shape: Circle [id:dp11773427434699268]
\draw  [fill={rgb, 255:red, 0; green, 0; blue, 0 }  ,fill opacity=1 ] (128,170) .. controls (128,168.9) and (128.9,168) .. (130,168) .. controls (131.1,168) and (132,168.9) .. (132,170) .. controls (132,171.1) and (131.1,172) .. (130,172) .. controls (128.9,172) and (128,171.1) .. (128,170) -- cycle ;
%Shape: Circle [id:dp8818765229814223]
\draw  [fill={rgb, 255:red, 0; green, 0; blue, 0 }  ,fill opacity=1 ] (188,170) .. controls (188,168.9) and (188.9,168) .. (190,168) .. controls (191.1,168) and (192,168.9) .. (192,170) .. controls (192,171.1) and (191.1,172) .. (190,172) .. controls (188.9,172) and (188,171.1) .. (188,170) -- cycle ;
%Shape: Circle [id:dp6523346905939884]
\draw  [fill={rgb, 255:red, 0; green, 0; blue, 0 }  ,fill opacity=1 ] (248,170) .. controls (248,168.9) and (248.9,168) .. (250,168) .. controls (251.1,168) and (252,168.9) .. (252,170) .. controls (252,171.1) and (251.1,172) .. (250,172) .. controls (248.9,172) and (248,171.1) .. (248,170) -- cycle ;
%Shape: Circle [id:dp4399816244720718]
\draw  [fill={rgb, 255:red, 0; green, 0; blue, 0 }  ,fill opacity=1 ] (308,170) .. controls (308,168.9) and (308.9,168) .. (310,168) .. controls (311.1,168) and (312,168.9) .. (312,170) .. controls (312,171.1) and (311.1,172) .. (310,172) .. controls (308.9,172) and (308,171.1) .. (308,170) -- cycle ;
%Shape: Brace [id:dp9268289293449918]
\draw   (131,196) .. controls (131,200.67) and (133.33,203) .. (138,203) -- (150.5,203) .. controls (157.17,203) and (160.5,205.33) .. (160.5,210) .. controls (160.5,205.33) and (163.83,203) .. (170.5,203)(167.5,203) -- (183,203) .. controls (187.67,203) and (190,200.67) .. (190,196) ;
%Shape: Brace [id:dp6427788039291857]
\draw   (151,155) .. controls (151,153.21) and (150.11,152.32) .. (148.32,152.32) -- (148.32,152.32) .. controls (145.77,152.32) and (144.5,151.43) .. (144.5,149.65) .. controls (144.5,151.43) and (143.23,152.32) .. (140.68,152.32)(141.82,152.32) -- (140.68,152.32) .. controls (138.89,152.32) and (138,153.21) .. (138,155) ;
%Shape: Brace [id:dp5488471328672493]
\draw   (302,158) .. controls (302.11,153.33) and (299.84,150.95) .. (295.17,150.84) -- (291.16,150.74) .. controls (284.5,150.58) and (281.22,148.17) .. (281.33,143.51) .. controls (281.22,148.17) and (277.84,150.42) .. (271.17,150.27)(274.17,150.34) -- (267.16,150.17) .. controls (262.5,150.06) and (260.11,152.33) .. (260,157) ;

% Text Node
\draw (122,173.4) node [anchor=north west][inner sep=0.75pt]  [font=\small]  {$0$};
% Text Node
\draw (312,173.4) node [anchor=north west][inner sep=0.75pt]  [font=\small]  {$1$};
% Text Node
\draw (186,175.4) node [anchor=north west][inner sep=0.75pt]  [font=\footnotesize]  {$\frac{1}{3}$};
% Text Node
\draw (249,175.4) node [anchor=north west][inner sep=0.75pt]  [font=\footnotesize]  {$\frac{2}{3}$};
% Text Node
\draw (74,208.4) node [anchor=north west][inner sep=0.75pt]  [font=\footnotesize]  {$\textcolor[rgb]{0.96,0.65,0.14}{\mathbb{E}[ LF( a_{1} ,a_{n/2})] =O\left(\frac{1}{n}\right)}$};
% Text Node
\draw (60,121.4) node [anchor=north west][inner sep=0.75pt]  [font=\footnotesize]  {$\textcolor[rgb]{0.96,0.65,0.14}{\mathbb{E}[ LF( a_{i} ,a_{i+1})] =O\left(\frac{1}{n^2}\right)}$};
% Text Node
\draw (219,121.4) node [anchor=north west][inner sep=0.75pt]  [font=\footnotesize]  {$\textcolor[rgb]{0.96,0.65,0.14}{\mathbb{E}[ LF( a_{i} ,a_{j})] =O\left(\frac{1}{n}\right)}$};

\end{tikzpicture}
}
}
% \hfill

\subfloat[][Inter-cluster LF distance]{

\tikzset{every picture/.style={line width=0.75pt}} %set default line width to 0.75pt
\scalebox{1.0}{
\begin{tikzpicture}[x=0.75pt,y=0.75pt,yscale=-1,xscale=1]
%uncomment if require: \path (0,300); %set diagram left start at 0, and has height of 300

%Straight Lines [id:da6319884180092785]
\draw    (130,170) -- (310,170) ;
%Shape: Rectangle [id:dp5630196408856358]
\draw  [color={rgb, 255:red, 74; green, 144; blue, 226 }  ,draw opacity=1 ][fill={rgb, 255:red, 74; green, 144; blue, 226 }  ,fill opacity=1 ] (130,160) -- (190,160) -- (190,170) -- (130,170) -- cycle ;
%Shape: Rectangle [id:dp8388959053737113]
\draw  [color={rgb, 255:red, 74; green, 144; blue, 226 }  ,draw opacity=1 ][fill={rgb, 255:red, 74; green, 144; blue, 226 }  ,fill opacity=1 ] (250,160) -- (310,160) -- (310,170) -- (250,170) -- cycle ;
%Shape: Circle [id:dp7691081719562614]
\draw  [fill={rgb, 255:red, 0; green, 0; blue, 0 }  ,fill opacity=1 ] (138,170) .. controls (138,168.9) and (138.9,168) .. (140,168) .. controls (141.1,168) and (142,168.9) .. (142,170) .. controls (142,171.1) and (141.1,172) .. (140,172) .. controls (138.9,172) and (138,171.1) .. (138,170) -- cycle ;
%Shape: Circle [id:dp1635029714688745]
\draw  [fill={rgb, 255:red, 0; green, 0; blue, 0 }  ,fill opacity=1 ] (148,170) .. controls (148,168.9) and (148.9,168) .. (150,168) .. controls (151.1,168) and (152,168.9) .. (152,170) .. controls (152,171.1) and (151.1,172) .. (150,172) .. controls (148.9,172) and (148,171.1) .. (148,170) -- cycle ;
%Shape: Circle [id:dp5922049852385807]
\draw  [fill={rgb, 255:red, 0; green, 0; blue, 0 }  ,fill opacity=1 ] (168,170) .. controls (168,168.9) and (168.9,168) .. (170,168) .. controls (171.1,168) and (172,168.9) .. (172,170) .. controls (172,171.1) and (171.1,172) .. (170,172) .. controls (168.9,172) and (168,171.1) .. (168,170) -- cycle ;
%Shape: Circle [id:dp018335296113615174]
\draw  [fill={rgb, 255:red, 0; green, 0; blue, 0 }  ,fill opacity=1 ] (157,170) .. controls (157,168.9) and (157.9,168) .. (159,168) .. controls (160.1,170) and (161,168.9) .. (161,170) .. controls (161,171.1) and (160.1,172) .. (159,172) .. controls (157.9,172) and (157,171.1) .. (157,170) -- cycle ;
%Shape: Circle [id:dp3534011399730015]
\draw  [fill={rgb, 255:red, 0; green, 0; blue, 0 }  ,fill opacity=1 ] (179,170) .. controls (179,168.9) and (179.9,168) .. (181,168) .. controls (182.1,168) and (183,168.9) .. (183,170) .. controls (183,171.1) and (182.1,172) .. (181,172) .. controls (179.9,172) and (179,171.1) .. (179,170) -- cycle ;
%Shape: Circle [id:dp41886499720794834]
\draw  [fill={rgb, 255:red, 0; green, 0; blue, 0 }  ,fill opacity=1 ] (258,170) .. controls (258,168.9) and (258.9,168) .. (260,168) .. controls (261.1,168) and (262,168.9) .. (262,170) .. controls (262,171.1) and (261.1,172) .. (260,172) .. controls (258.9,172) and (258,171.1) .. (258,170) -- cycle ;
%Shape: Circle [id:dp6400793996907661]
\draw  [fill={rgb, 255:red, 0; green, 0; blue, 0 }  ,fill opacity=1 ] (268,170) .. controls (268,168.9) and (268.9,168) .. (270,168) .. controls (271.1,168) and (272,168.9) .. (272,170) .. controls (272,171.1) and (271.1,172) .. (270,172) .. controls (268.9,172) and (268,171.1) .. (268,170) -- cycle ;
%Shape: Circle [id:dp09677053050273776]
\draw  [fill={rgb, 255:red, 0; green, 0; blue, 0 }  ,fill opacity=1 ] (278,170) .. controls (278,168.9) and (278.9,168) .. (280,168) .. controls (281.1,168) and (282,168.9) .. (282,170) .. controls (282,171.1) and (281.1,172) .. (280,172) .. controls (278.9,172) and (278,171.1) .. (278,170) -- cycle ;
%Shape: Circle [id:dp26487147806029676]
\draw  [fill={rgb, 255:red, 0; green, 0; blue, 0 }  ,fill opacity=1 ] (287,170) .. controls (287,168.9) and (287.9,168) .. (289,168) .. controls (290.1,168) and (291,168.9) .. (291,170) .. controls (291,171.1) and (290.1,172) .. (289,172) .. controls (287.9,172) and (287,171.1) .. (287,170) -- cycle ;
%Shape: Circle [id:dp8983293343109942]
\draw  [fill={rgb, 255:red, 0; green, 0; blue, 0 }  ,fill opacity=1 ] (298,170) .. controls (298,168.9) and (298.9,168) .. (300,168) .. controls (301.1,168) and (302,168.9) .. (302,170) .. controls (302,171.1) and (301.1,172) .. (300,172) .. controls (298.9,172) and (298,171.1) .. (298,170) -- cycle ;
%Shape: Circle [id:dp11773427434699268]
\draw  [fill={rgb, 255:red, 0; green, 0; blue, 0 }  ,fill opacity=1 ] (128,170) .. controls (128,168.9) and (128.9,168) .. (130,168) .. controls (131.1,168) and (132,168.9) .. (132,170) .. controls (132,171.1) and (131.1,172) .. (130,172) .. controls (128.9,172) and (128,171.1) .. (128,170) -- cycle ;
%Shape: Circle [id:dp8818765229814223]
\draw  [fill={rgb, 255:red, 0; green, 0; blue, 0 }  ,fill opacity=1 ] (188,170) .. controls (188,168.9) and (188.9,168) .. (190,168) .. controls (191.1,168) and (192,168.9) .. (192,170) .. controls (192,171.1) and (191.1,172) .. (190,172) .. controls (188.9,172) and (188,171.1) .. (188,170) -- cycle ;
%Shape: Circle [id:dp6523346905939884]
\draw  [fill={rgb, 255:red, 0; green, 0; blue, 0 }  ,fill opacity=1 ] (248,170) .. controls (248,168.9) and (248.9,168) .. (250,168) .. controls (251.1,168) and (252,168.9) .. (252,170) .. controls (252,171.1) and (251.1,172) .. (250,172) .. controls (248.9,172) and (248,171.1) .. (248,170) -- cycle ;
%Shape: Circle [id:dp4399816244720718]
\draw  [fill={rgb, 255:red, 0; green, 0; blue, 0 }  ,fill opacity=1 ] (308,170) .. controls (308,168.9) and (308.9,168) .. (310,168) .. controls (311.1,168) and (312,168.9) .. (312,170) .. controls (312,171.1) and (311.1,172) .. (310,172) .. controls (308.9,172) and (308,171.1) .. (308,170) -- cycle ;
%Shape: Brace [id:dp4003447458478644]
\draw   (268,152) .. controls (267.95,147.33) and (265.59,145.03) .. (260.92,145.08) -- (232.42,145.39) .. controls (225.75,145.46) and (222.4,143.17) .. (222.35,138.5) .. controls (222.4,143.17) and (219.09,145.54) .. (212.42,145.61)(215.42,145.58) -- (183.92,145.92) .. controls (179.25,145.97) and (176.95,148.33) .. (177,153) ;
%Shape: Brace [id:dp7117076559828202]
\draw   (189,195) .. controls (189,199.67) and (191.33,202) .. (196,202) -- (209.5,202) .. controls (216.17,202) and (219.5,204.33) .. (219.5,209) .. controls (219.5,204.33) and (222.83,202) .. (229.5,202)(226.5,202) -- (243,202) .. controls (247.67,202) and (250,199.67) .. (250,195) ;

% Text Node
\draw (122,173.4) node [anchor=north west][inner sep=0.75pt]  [font=\small]  {$0$};
% Text Node
\draw (312,173.4) node [anchor=north west][inner sep=0.75pt]  [font=\small]  {$1$};
% Text Node
\draw (186,175.4) node [anchor=north west][inner sep=0.75pt]  [font=\footnotesize]  {$\frac{1}{3}$};
% Text Node
\draw (249,175.4) node [anchor=north west][inner sep=0.75pt]  [font=\footnotesize]  {$\frac{2}{3}$};
% Text Node
\draw (159,108.4) node [anchor=north west][inner sep=0.75pt]  [font=\footnotesize]  {$\textcolor[rgb]{0.25,0.46,0.02}{\mathbb{E}[ LF( a_{i} ,a_{j})] =\frac{1}{9} +O\left(\frac{1}{n}\right)}$};
% Text Node
\draw (159,205.4) node [anchor=north west][inner sep=0.75pt]  [font=\footnotesize]  {$\textcolor[rgb]{0.25,0.46,0.02}{\mathbb{E}[ LF( a_{n/2} ,a_{n/2+1})] =\frac{1}{9}}$};

\end{tikzpicture}
}
}
% \hfill

\subfloat[][Leapfrog distance]{

\tikzset{every picture/.style={line width=0.75pt}} %set default line width to 0.75pt

\scalebox{1.0}{
\begin{tikzpicture}[x=0.75pt,y=0.75pt,yscale=-1,xscale=1]
%uncomment if require: \path (0,300); %set diagram left start at 0, and has height of 300

%Straight Lines [id:da9447947411307303]
\draw    (130,170) -- (310,170) ;
%Shape: Rectangle [id:dp74559658163811]
\draw  [color={rgb, 255:red, 74; green, 144; blue, 226 }  ,draw opacity=1 ][fill={rgb, 255:red, 74; green, 144; blue, 226 }  ,fill opacity=1 ] (130,160) -- (190,160) -- (190,170) -- (130,170) -- cycle ;
%Shape: Rectangle [id:dp8726312594474102]
\draw  [color={rgb, 255:red, 74; green, 144; blue, 226 }  ,draw opacity=1 ][fill={rgb, 255:red, 74; green, 144; blue, 226 }  ,fill opacity=1 ] (250,160) -- (310,160) -- (310,170) -- (250,170) -- cycle ;
%Shape: Circle [id:dp7691081719562614]
\draw  [fill={rgb, 255:red, 0; green, 0; blue, 0 }  ,fill opacity=1 ] (138,170) .. controls (138,168.9) and (138.9,168) .. (140,168) .. controls (141.1,168) and (142,168.9) .. (142,170) .. controls (142,171.1) and (141.1,172) .. (140,172) .. controls (138.9,172) and (138,171.1) .. (138,170) -- cycle ;
%Shape: Circle [id:dp1635029714688745]
\draw  [fill={rgb, 255:red, 0; green, 0; blue, 0 }  ,fill opacity=1 ] (148,170) .. controls (148,168.9) and (148.9,168) .. (150,168) .. controls (151.1,168) and (152,168.9) .. (152,170) .. controls (152,171.1) and (151.1,172) .. (150,172) .. controls (148.9,172) and (148,171.1) .. (148,170) -- cycle ;
%Shape: Circle [id:dp5922049852385807]
\draw  [fill={rgb, 255:red, 0; green, 0; blue, 0 }  ,fill opacity=1 ] (168,170) .. controls (168,168.9) and (168.9,168) .. (170,168) .. controls (171.1,168) and (172,168.9) .. (172,170) .. controls (172,171.1) and (171.1,172) .. (170,172) .. controls (168.9,172) and (168,171.1) .. (168,170) -- cycle ;
%Shape: Circle [id:dp018335296113615174]
\draw  [fill={rgb, 255:red, 0; green, 0; blue, 0 }  ,fill opacity=1 ] (157,170) .. controls (157,168.9) and (157.9,168) .. (159,168) .. controls (160.1,170) and (161,168.9) .. (161,170) .. controls (161,171.1) and (160.1,172) .. (159,172) .. controls (157.9,172) and (157,171.1) .. (157,170) -- cycle ;
%Shape: Circle [id:dp3534011399730015]
\draw  [fill={rgb, 255:red, 0; green, 0; blue, 0 }  ,fill opacity=1 ] (179,170) .. controls (179,168.9) and (179.9,168) .. (181,168) .. controls (182.1,168) and (183,168.9) .. (183,170) .. controls (183,171.1) and (182.1,172) .. (181,172) .. controls (179.9,172) and (179,171.1) .. (179,170) -- cycle ;
%Shape: Circle [id:dp41886499720794834]
\draw  [fill={rgb, 255:red, 0; green, 0; blue, 0 }  ,fill opacity=1 ] (258,170) .. controls (258,168.9) and (258.9,168) .. (260,168) .. controls (261.1,168) and (262,168.9) .. (262,170) .. controls (262,171.1) and (261.1,172) .. (260,172) .. controls (258.9,172) and (258,171.1) .. (258,170) -- cycle ;
%Shape: Circle [id:dp6400793996907661]
\draw  [fill={rgb, 255:red, 0; green, 0; blue, 0 }  ,fill opacity=1 ] (268,170) .. controls (268,168.9) and (268.9,168) .. (270,168) .. controls (271.1,168) and (272,168.9) .. (272,170) .. controls (272,171.1) and (271.1,172) .. (270,172) .. controls (268.9,172) and (268,171.1) .. (268,170) -- cycle ;
%Shape: Circle [id:dp09677053050273776]
\draw  [fill={rgb, 255:red, 0; green, 0; blue, 0 }  ,fill opacity=1 ] (278,170) .. controls (278,168.9) and (278.9,168) .. (280,168) .. controls (281.1,168) and (282,168.9) .. (282,170) .. controls (282,171.1) and (281.1,172) .. (280,172) .. controls (278.9,172) and (278,171.1) .. (278,170) -- cycle ;
%Shape: Circle [id:dp26487147806029676]
\draw  [fill={rgb, 255:red, 0; green, 0; blue, 0 }  ,fill opacity=1 ] (287,170) .. controls (287,168.9) and (287.9,168) .. (289,168) .. controls (290.1,168) and (291,168.9) .. (291,170) .. controls (291,171.1) and (290.1,172) .. (289,172) .. controls (287.9,172) and (287,171.1) .. (287,170) -- cycle ;
%Shape: Circle [id:dp8983293343109942]
\draw  [fill={rgb, 255:red, 0; green, 0; blue, 0 }  ,fill opacity=1 ] (298,170) .. controls (298,168.9) and (298.9,168) .. (300,168) .. controls (301.1,168) and (302,168.9) .. (302,170) .. controls (302,171.1) and (301.1,172) .. (300,172) .. controls (298.9,172) and (298,171.1) .. (298,170) -- cycle ;
%Shape: Circle [id:dp11773427434699268]
\draw  [fill={rgb, 255:red, 0; green, 0; blue, 0 }  ,fill opacity=1 ] (128,170) .. controls (128,168.9) and (128.9,168) .. (130,168) .. controls (131.1,168) and (132,168.9) .. (132,170) .. controls (132,171.1) and (131.1,172) .. (130,172) .. controls (128.9,172) and (128,171.1) .. (128,170) -- cycle ;
%Shape: Circle [id:dp8818765229814223]
\draw  [fill={rgb, 255:red, 0; green, 0; blue, 0 }  ,fill opacity=1 ] (188,170) .. controls (188,168.9) and (188.9,168) .. (190,168) .. controls (191.1,168) and (192,168.9) .. (192,170) .. controls (192,171.1) and (191.1,172) .. (190,172) .. controls (188.9,172) and (188,171.1) .. (188,170) -- cycle ;
%Shape: Circle [id:dp6523346905939884]
\draw  [fill={rgb, 255:red, 0; green, 0; blue, 0 }  ,fill opacity=1 ] (248,170) .. controls (248,168.9) and (248.9,168) .. (250,168) .. controls (251.1,168) and (252,168.9) .. (252,170) .. controls (252,171.1) and (251.1,172) .. (250,172) .. controls (248.9,172) and (248,171.1) .. (248,170) -- cycle ;
%Shape: Circle [id:dp4399816244720718]
\draw  [fill={rgb, 255:red, 0; green, 0; blue, 0 }  ,fill opacity=1 ] (308,170) .. controls (308,168.9) and (308.9,168) .. (310,168) .. controls (311.1,168) and (312,168.9) .. (312,170) .. controls (312,171.1) and (311.1,172) .. (310,172) .. controls (308.9,172) and (308,171.1) .. (308,170) -- cycle ;
%Shape: Brace [id:dp46573235472096974]
\draw   (268,152) .. controls (267.95,147.33) and (265.59,145.03) .. (260.92,145.08) -- (232.42,145.39) .. controls (225.75,145.46) and (222.4,143.17) .. (222.35,138.5) .. controls (222.4,143.17) and (219.09,145.54) .. (212.42,145.61)(215.42,145.58) -- (183.92,145.92) .. controls (179.25,145.97) and (176.95,148.33) .. (177,153) ;
%Shape: Brace [id:dp8386525965670477]
\draw   (267,197) .. controls (267.12,201.67) and (269.51,203.94) .. (274.18,203.81) -- (276.19,203.76) .. controls (282.85,203.59) and (286.24,205.83) .. (286.37,210.49) .. controls (286.24,205.83) and (289.51,203.41) .. (296.18,203.23)(293.18,203.31) -- (298.19,203.18) .. controls (302.85,203.06) and (305.12,200.67) .. (305,196) ;

% Text Node
\draw (122,173.4) node [anchor=north west][inner sep=0.75pt]  [font=\small]  {$0$};
% Text Node
\draw (312,173.4) node [anchor=north west][inner sep=0.75pt]  [font=\small]  {$1$};
% Text Node
\draw (186,175.4) node [anchor=north west][inner sep=0.75pt]  [font=\footnotesize]  {$\frac{1}{3}$};
% Text Node
\draw (249,175.4) node [anchor=north west][inner sep=0.75pt]  [font=\footnotesize]  {$\frac{2}{3}$};
% Text Node
\draw (162,111.4) node [anchor=north west][inner sep=0.75pt]  [font=\footnotesize]  {$\textcolor[rgb]{0.25,0.46,0.02}{LF( a_{i} ,a_{j}) \to \frac{1}{9}}$};
% Text Node
\draw (242,209.4) node [anchor=north west][inner sep=0.75pt]  [font=\footnotesize]  {$\textcolor[rgb]{0.96,0.65,0.14}{LF( a_{i} ,a_{j}) \to 0}$};

\end{tikzpicture}
}
}
\vskip 0.1in
\caption{Example of intra-/inter-cluster leapfrog distances for points sampled from a uniform distribution supported on $[0,\frac{1}{3}]\cup [\frac{2}{3},1]$. (a) The expected leapfrog distance between two neighboring points is $O\left(\frac{1}{n^2}\right)$, hence the expected intra-cluster leapfrog distance is $O\left(\frac{1}{n}\right)$.  (b) The cost to travel through the gap between $\frac{1}{3}$ and $\frac{2}{3}$ is $\frac{1}{9}$. The inter-cluster distance is at least $\frac{1}{9}$. Hence, the expected leapfrog distance is $\frac{1}{9}+O\left(\frac{1}{n}\right)$. (c) Coupled with a concentration inequality, the intra-cluster leapfrog distance tends to 0 asymptotically while the inter-cluster leapfrog distance tends to $\frac{1}{9}$ asymptotically.}
\label{fig:1d_disjoint_support}
\vskip -0.05in
\end{figure}
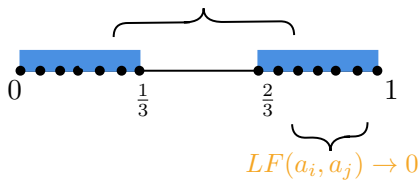

We also point out that the leapfrog distance $\mathrm{LF}(\cdot, \cdot)$ is indeed a metric on the set of data points.
The leapfrog distance between two fixed points changes as more points get sampled. In $\R^1$, the leapfrog distance between two arbitrary points $a,b \in \R^1$ is given by $\mathrm{LF}(a,b):=(a_1-a)^2 + \sum_{i=2}^m (a_i-a_{i-1})^2 + (b-a_m)^2$, where $a_1,\ldots,a_m$ are the points between $a,b$ and satisfy $a<a_1<\ldots<a_m<b$. In higher dimensions, the leapfrog distance does not have a closed-form expression.

\section{Characterization of the leapfrog distance in $\R^1$}

\label{sec:lf_property}
In this section, we analyze the leapfrog distance for data points which are $n$ i.i.d.\ samples from a probability density function $f$ in $\R^1$.
We assume that $f$ is continuous when restricted to its support.
These assumptions hold for the two main cases considered in this paper, that is, a Gaussian mixture model and also a function $f$ supported on a disjoint union of intervals that is positive on its support.

We also note that the bounds in this section all hold for $n$ sufficiently large, e.g., we often upper bound $an^c+bn^d$ by $(a+1)n^c$, where $a,b,c,d$ are all positive and $c>d$. The threshold value of $n$ that validates all of our bounds is an impractically large number.  However, the form of the bounds correctly describes the behavior we have observed in our experiments even for moderate $n$.  We suspect that similar bounds could be shown for more reasonable ranges of $n$ using more specialized techniques instead of general-purpose results like Hoeffding’s inequality.

\subsection{Expected value of leapfrog distance}
\label{subsec:explf}

% \subsection{Expected leapfrog distance across a compactly supported positive distribution}
% \label{subsec:explfcompact}

Consider a probability density function $f$ supported on a union of finite intervals and continuous on each interval. Fix a value $\underline{f}>0$, and let $U$ be the set
\[
U:=\{x\in \R:f(x)\ge \underline{f}\}.
\]
By continuity, $U$ is a union of closed intervals.
Furthermore, its total measure is finite since $f$ is a probability density function.
Therefore, define
\[
v_U:=\vol(U).
\]

The following theorem describes the expected value of the leapfrog distance (See Appendix \ref{app:ELF} for detailed derivation of expected leapfrog distance).

\begin{theorem}
    The expectation of $\mathrm{LF}(a,b)$ has the following expression asymptotically as $n \to \infty$:
    \begin{equation}
    \mathbb E[\mathrm{LF}(a,b)]=\frac{2}{n}\int_a^b \frac{dx}{f(x)} + o\left(1/n\right),
    \label{eq:ELF_thm}
    \end{equation}
    for $a,b \in U$, $a<b$ such that $[a,b]\subseteq U$.
    The remainder is bounded in magnitude by
    $$
    10n^{-1.5}v_U^2
    $$
    if $f$ is locally $L$-Lipschitz continuous.
    \label{thm:ELF_U}
\end{theorem}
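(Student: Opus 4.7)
The plan is to convert $\LF(a,b)$ into a double integral of an indicator random variable, so that its expectation reduces to a deterministic integral amenable to classical asymptotic analysis. Denote the samples falling in $(a,b)$ by $a_1<\cdots<a_m$ and set $a_0:=a$, $a_{m+1}:=b$. The elementary identity $(a_i-a_{i-1})^2=2\int_{a_{i-1}}^{a_i}\int_t^{a_i}ds\,dt$, summed over $i=1,\ldots,m+1$, gives
\[
\LF(a,b)=2\int_a^b\int_t^b \mathbf{1}\{\text{no sample lies in }(t,s)\}\,ds\,dt,
\]
since two points $t<s$ in $[a,b]$ lie in a common spacing interval if and only if no sample falls strictly between them. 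Taking expectation and writing $P(t,s):=\int_t^s f(w)\,dw$, each of the $n$ i.i.d.\ samples misses $(t,s)$ independently with probability $1-P(t,s)$, so
\[
\mathbb{E}[\LF(a,b)]=2\int_a^b\int_t^b (1-P(t,s))^n\,ds\,dt.
\]

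Next, in the inner integral I would substitute $u=P(t,s)$, giving $du=f(s)\,ds$ and reducing it to $\int_0^{P(t,b)}(1-u)^n/f(s(u))\,du$, where $s(u)$ is the inverse map defined implicitly by $P(t,s(u))=u$. Because the kernel $(1-u)^n$ is concentrated on $u$ of order at most $(\log n)/n$, and because $f\ge\underline{f}$ on $U$ gives $s(u)-t\le u/\underline{f}$, local Lipschitz continuity of $f$ yields $|f(s(u))-f(t)|\le Lu/\underline{f}$. Expanding $1/f(s(u))=1/f(t)+O(u)$ and combining the identities $\int_0^\infty(1-u)^n\,du=1/(n+1)$ and $\int_0^\infty u(1-u)^n\,du=O(n^{-2})$ with the exponentially small truncation $(1-P(t,b))^{n+1}\le e^{-(n+1)\underline{f}(b-t)}$ from the upper limit, the inner integral evaluates to $\tfrac{1}{(n+1)f(t)}$ plus an error of order $n^{-2}$.

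Integrating over $t\in[a,b]$ and multiplying by $2$ then produces the leading term $\tfrac{2}{n}\int_a^b dx/f(x)$, with the discrepancy absorbed into the $o(1/n)$ remainder. To convert this into the explicit bound $10\, n^{-1.5} v_U^2$, I would bound $\int_a^b dt/f(t)\le v_U/\underline{f}$ and $b-a\le v_U$, then aggregate the three error sources: the $1/n$ versus $1/(n+1)$ gap, the Lipschitz error from replacing $f(s(u))$ by $f(t)$, and the boundary truncation. The main obstacle is precisely this error bookkeeping. While the asymptotic formula itself falls out cleanly from the change of variables, extracting the explicit constant $10$ and the stated rate $n^{-1.5}$ requires balancing the Lipschitz, truncation, and lower-bound constants so that they aggregate into a clean factor of $v_U^2$; the fact that the claimed rate is $n^{-1.5}$ rather than the $n^{-2}$ suggested by the pointwise analysis above is likely an artifact of the looser, Hoeffding-style arguments that the authors flag in the paragraph preceding the theorem as a means to handle moderate $n$.
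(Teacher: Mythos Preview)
Your proof is correct and takes a genuinely different, more direct route than the paper. The paper first writes $\LF(a,b)=\int_a^b G(x)\,dx$, where $G(x)$ is the length of the sample spacing containing $x$, defined via \emph{all} $n$ samples rather than only those in $[a,b]$; it then evaluates $\mathbb{E}[G(x)]$ by integrating the tail $\mathbb{P}[G(x)>\epsilon]$, which involves a three-term case analysis and---because $G(x)$ can be infinite with positive probability---requires restricting to a ``good'' event $\Pi$ on which $G(x)\le n^{-0.75}v_U$. Your double-integral identity bypasses all of this: once $\LF(a,b)$ is written as $2\iint \mathbf{1}\{\text{no sample in }(t,s)\}$ over the bounded triangle $a\le t<s\le b$, the expectation is the exact closed form $2\iint(1-P(t,s))^n$, and the change of variables $u=P(t,s)$ together with the Lipschitz expansion gives the leading term with remainder $O(n^{-2})$. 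That is stronger than the stated $10\,n^{-1.5}v_U^2$, so the theorem follows a fortiori for large $n$ without any explicit constant-tracking.

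One correction to your closing speculation: the paper's $n^{-1.5}$ rate is not a Hoeffding artifact. The paper first establishes an $O(n^{-1.75})$ remainder for $a,b$ in the shrunken set $U_n=\{x:[x-\sigma,x+\sigma]\subseteq U\}$ with $\sigma=n^{-0.75}v_U$, and then extends to all of $U$ by bounding the extra leapfrog cost from $a$ to a nearby $a'\in U_n$ by $(2\sigma)^2=4n^{-1.5}v_U^2$; it is this boundary patch that dominates and fixes the exponent $1.5$. Your formulation never needs $U_n$, because the integrand is already confined to $[a,b]\subseteq U$, which is exactly why you recover the sharper $n^{-2}$. (Minor point: your $\int_0^\infty(1-u)^n\,du=1/(n+1)$ should read $\int_0^1$.)
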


\subsection{High-probability bound for leapfrog distance}
\label{subsec:highprob}
For the remainder of the section, we present a concentration bound to quantify the tail behavior of leapfrog distance as follows.  Details of derivation of high-probability bound for leapfrog distance are shown in Appendix \ref{app:highprob}.
We assume that $f$ is Lipschitz continuous and is bounded below by a positive number on the interval of interest.

\begin{theorem}
 Assume that probability density function $f$ is Lipschitz continuous on its support.  Let $a,b$ be two data points drawn from an interval $I$ with the property that $f(x)\in[\underline{f},\overline{f}]$ for all $x\in I$ such that $\underline{f}>0$.
There exist constants $c_1, c_2, c_3>0$ and integer $n_0$, all of which may depend on $f$ and $I$, such that for all data points $a,b$ satisfying $a<b$, $a,b\in I$ and assuming $n>n_0$,
\begin{align*}
\mathbb P[|\mathrm{LF}(a,b) -  \mathbb E[\mathrm{LF}&(a,b)]| \ge C n^{-1.04}] \le c_1\cdot\exp(-c_2 n^{c_3}),
\end{align*}
where the coefficient $C$ is specified in \eqref{eq:mainhighprobineq} below.
\label{thm:LF_main}
\end{theorem}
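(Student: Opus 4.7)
My proof plan combines a high-probability regularity event with McDiarmid's bounded-differences inequality applied to a truncated version of the leapfrog distance. Conditionally on the values of $a$ and $b$, the remaining $n-2$ samples are i.i.d.\ from $f$, and in $\R^1$ the leapfrog distance admits the explicit formula
\[
\mathrm{LF}(a,b)=(a_1-a)^2+\sum_{i=2}^{m}(a_i-a_{i-1})^2+(b-a_m)^2,
\]
where $a<a_1<\cdots<a_m<b$ are the samples lying in the open interval $(a,b)$. Direct application of McDiarmid is hopeless because a single perturbed sample could in principle move across a large gap and change $\mathrm{LF}$ substantially, so I first localize to an event on which all gaps are uniformly small and then work with a truncated surrogate that has a good bounded-differences constant \emph{unconditionally}.

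Let $\mathcal{G}_n$ denote the event that every pair of adjacent samples inside $[a,b]$ is separated by at most $\Delta_n:=Cn^{-\beta}$, for a constant $C$ depending on $\underline{f}$ and an exponent $\beta\in(1/2,1)$ to be tuned. Since $f\ge\underline{f}$ on $I$, covering $[a,b]$ by $O(n^{\beta})$ overlapping sub-intervals of length $\Delta_n/2$ and Chernoff-bounding the probability that each one is empty gives $\mathbb{P}[\mathcal{G}_n^{c}]\le\exp(-c\,n^{1-\beta})$. Define $\widetilde{\mathrm{LF}}(a,b)$ by replacing every squared gap in the formula above with its minimum with $\Delta_n^{2}$; then $\widetilde{\mathrm{LF}}=\mathrm{LF}$ on $\mathcal{G}_n$, and the crude deterministic bound $\mathrm{LF}\le(b-a)^{2}\cdot n$ together with the tail on $\mathbb{P}[\mathcal{G}_n^{c}]$ gives $|\mathbb{E}\widetilde{\mathrm{LF}}-\mathbb{E}\mathrm{LF}|\le(b-a)^{2}n\exp(-cn^{1-\beta})$, which is negligible compared with $n^{-1.04}$.

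Viewing $\widetilde{\mathrm{LF}}$ as a function of the $n-2$ i.i.d.\ coordinates, a case analysis (delete the perturbed sample at its old rank, then insert the new value at its new rank, with separate treatment of the sub-cases where the sample enters or leaves $[a,b]$) shows that a single change modifies at most three capped squared gaps, each by at most $\Delta_n^{2}$. The bounded-differences constant is therefore $O(\Delta_n^{2})=O(n^{-2\beta})$ uniformly in the coordinate, and McDiarmid's inequality yields
\[
\mathbb{P}\bigl[\,|\widetilde{\mathrm{LF}}(a,b)-\mathbb{E}\widetilde{\mathrm{LF}}(a,b)|\ge t\,\bigr]\le 2\exp\!\bigl(-c'\,t^{2}n^{4\beta-1}\bigr).
\]
Setting $t=Cn^{-1.04}$, any $\beta>(1+2(1.04))/4=0.77$ makes the exponent $-c'n^{4\beta-3.08}$ a negative power of $n$; balancing this with the good-event tail $\exp(-cn^{1-\beta})$ by choosing $\beta$ near $0.816$ gives the optimal $c_3\approx 0.18$. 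Combining via the standard decomposition $\mathbb{P}[E]\le\mathbb{P}[E\cap\mathcal{G}_n]+\mathbb{P}[\mathcal{G}_n^{c}]$ together with the expectation gap above delivers the stated tail $c_1\exp(-c_2 n^{c_3})$.

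The main obstacle will be the bounded-differences bookkeeping: I must carefully handle the boundary cases where the perturbed sample crosses into or out of $[a,b]$, or is the current smallest or largest sample inside $[a,b]$, and verify that because of the truncation each case still changes $\widetilde{\mathrm{LF}}$ by at most $O(\Delta_n^{2})$ \emph{deterministically} --- not merely on $\mathcal{G}_n$ --- which is exactly what McDiarmid requires. A secondary technicality is tracking the constant $C$ through the McDiarmid step so that it can be expressed explicitly in terms of $\underline{f}$, $\overline{f}$, the Lipschitz constant of $f$, and $b-a$, as referenced by the theorem statement.
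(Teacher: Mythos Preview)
Your approach is correct and genuinely different from the paper's argument.

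The paper proceeds by partitioning $[a,b]$ into $n^{0.9}$ equal sub-intervals and writing $\mathrm{LF}(a,b)$ as the sum of the leapfrog distances within each sub-interval plus the leapfrog distances across adjacent endpoints. It then conditions on the multinomial counts $(n_1,\ldots,n_{n^{0.9}})$ of samples in each sub-interval (discarding, via Chernoff bounds, the exponentially unlikely event that some $n_i$ is far from its mean), which renders the within-sub-interval terms independent; Hoeffding is applied to their sum using the deterministic bound $\mathrm{LF}(l_i,r_i)\le (b-a)^2n^{-1.8}$. The expectations $\mathbb{E}[\mathrm{LF}(l_i,r_i)]$ are then replaced by $\frac{2}{n}\int_{a_i}^{a_{i+1}}\frac{dx}{f(x)}$ via Theorem~\ref{thm:ELF_U} applied on each sub-interval, and the cross-endpoint terms are handled by localizing $l_i,r_i$ to within $O(n^{-0.97})$ of the sub-interval boundaries. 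The Lipschitz hypothesis enters only through the remainder estimate in Theorem~\ref{thm:ELF_U}.

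Your route---cap every squared gap at $\Delta_n^2$, verify a deterministic bounded-differences constant $O(\Delta_n^2)$, and apply McDiarmid directly---is more streamlined: it avoids the sub-interval machinery, does not invoke the expectation formula inside the proof, and in fact appears to need only the lower bound $f\ge\underline{f}$ on $I$ (neither $\overline{f}$ nor Lipschitz continuity is used). It also yields a noticeably larger exponent $c_3\approx 0.18$, whereas the paper's bottleneck steps (the Chernoff bound on the $n_i$'s and the endpoint localization) give $c_3$ on the order of $0.01$. What the paper's approach buys is that the intermediate quantities are tied directly to $\int 1/f$, so the explicit constant $C$ in \eqref{eq:mainhighprobineq} comes out naturally in terms of $\overline{f},\underline{f},(b-a)$; your McDiarmid bound will produce a different explicit $C$ (depending on $\underline{f}$ and $b-a$ through $\Delta_n$), which is fine for the statement but will not match the paper's formula literally.

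Two small remarks. Your crude bound $\mathrm{LF}\le (b-a)^2\cdot n$ is looser than necessary; in one dimension one always has $\mathrm{LF}(a,b)\le (b-a)^2$ (single hop), though this only helps you. And in the bounded-differences bookkeeping, a remove-then-insert can touch more than three capped terms (two gaps collapse to one at the old location, one splits into two at the new), but the total change is still $O(\Delta_n^2)$, so the conclusion stands.
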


\section{An upper bound on leapfrog distance in $\R^d$}
\label{sec:LF_Rd}
%General $\R^d$:

%\begin{conjecture}

%\end{conjecture}

%For
%We inherit the assumptions from \cite{dunlap2021local} and restate %the disjoint support definition here.

%\textit{Disjoint support}: We suppose that data points $\a_1, \dots, \a_n$ are independently and identically distributed with a common law that is supported on the union of finitely many disjoint closed sets, $C_1, \dots, C_K$. Let $d^*=\max_{k \ne k', i\in C_k, i' \in C_{k'}}$ denote the minimum inter-cluster distance. The disjoint union properties guarantee that $d^* > 0$. We want $,i,j$ to be in the same cluster if and only if $\a_i,\a_j$ lie in the same set $C_k$.

In this section, we show that the leapfrog distance between an arbitrary pair of points taken from a set of $n$ points in $\R^d$ tends to 0 like $O(n^{-1/d+\eta})$, where $\eta>0$ can be arbitrarily small,
and where the $n$ points are sampled from a PDF with compact, support composed of $K$ connected components (supports for $K$ clusters) and a positive lower bound on the support.
  The bound in this section is used in Section~\ref{sec:chiquetproof} to analyze clusters with disjoint supports.

We will say that the probability density function $f$
is {\em admissible} if it satisfies the following properties.
Let $S=\supp(f)$.
\begin{enumerate}
    \item
$S$ is compact,
\item
$S$ is Lebesgue-measurable,
\item
$S$ is the disjoint union of $K$ path-connected components $S_1$, \ldots, $S_K$, and the diameter of $S_i$ (with respect to paths inside $S_i$) is at most $\ell<\infty$ for each $i=1,\ldots,K$,
\item
There exists $\theta>0$ such that $f(\x)\ge \theta$ for all $\x\in S$.
\item
There exist $r_0>0$ and $p>0$ such that for every $\x\in S_i$, for every $i=1,\ldots,K$,
for every $r\in(0,r_0]$, $\vol(B(\x,r)\cap S_i)\ge pr^d$.
\end{enumerate}
The last assumption rules out the case that any $S_i$
has a cusp-like protrusion, or that a portion of $S_i$ has lower Hausdorff dimension than $d$.
A disjoint union of $K$ full-dimensional compact convex bodies always satisfy all of these condition, and the properties are preserved under bijective quasiregular mappings.

The main theorem of this section is as follows (see Appendix \ref{app:LF_main_proof} for detailed proof).

\begin{theorem}
Assume $f$ is admissible.  Let $n$ points in $\R^d$ be sampled according to $f$.  Then, with probability exponentially close to $1$,  for any $i=1,\ldots, K$, for any two samples $\a,\b\in S_i$, $\LF(\a,\b)\le 36\ell n^{-1/d+\eta}$
for an arbitrarily small $\eta>0$.
%Further, with probability exponentially close to $1$, for any $\x\in S_i$, there exists a data point $\a_j$ such that $\Vert \x-\a_j\Vert \le 2n^{-1/d+\eta}$.
\label{thm:uppbdlf_dimd}
\end{theorem}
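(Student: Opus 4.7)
The plan is to exhibit, with high probability, a path of samples inside $S_i$ from $\a$ to $\b$ whose hops are so short that the sum of their squared lengths is tiny, by hopping through a moderately fine net of $S_i$ that is guaranteed to be populated.

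First, I would fix the resolution $r := n^{-1/d+\eta}$, noting that $r \le r_0$ eventually and $n r^d = n^{d\eta} \to \infty$. For each component $S_i$, take a finite $r$-net $N_i \subseteq S_i$ (every point of $S_i$ lies within Euclidean distance $r$ of $N_i$); since $S_i$ is compact and bounded, $|N_i| = O(r^{-d})$, and summing over components $|N| := \sum_i |N_i| = O(r^{-d}) = O(n^{1-d\eta})$ because $K$ is a constant. Let $G$ be the event that for every $x \in N$ (with $i(x)$ the component containing $x$) the set $B(x,r) \cap S_{i(x)}$ contains at least one of the $n$ samples. By admissibility properties 4 and 5, a single sample lands in $B(x,r) \cap S_{i(x)}$ with probability at least $\theta p r^d$, so
\[
\mathbb P[G^c] \le |N|\,(1 - \theta p r^d)^n \le O(n^{1-d\eta})\cdot \exp(-\theta p n^{d\eta}),
\]
which is exponentially small in $n$ and supplies the ``with probability exponentially close to $1$'' in the theorem.

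Next, condition on $G$ and fix any two samples $\a, \b$ in the same component $S_i$. By admissibility property 3, there is a path $\gamma:[0,L]\to S_i$, parametrized by arclength, of length $L \le \ell$ (up to a negligible $o(1)$ slack if the infimum is not attained), with $\gamma(0) = \a$ and $\gamma(L) = \b$. Let $m := \lceil L/r \rceil$ and subdivide at $t_j := \min(jr, L)$, so $\|\gamma(t_j) - \gamma(t_{j-1})\| \le r$. For $j = 1, \ldots, m-1$ pick a net point $x_j \in N_i$ within $r$ of $\gamma(t_j)$ and let $\a_{k_j}$ be the sample in $B(x_j, r) \cap S_i$ furnished by $G$; set $\a_{k_0} := \a$ and $\a_{k_m} := \b$. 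The triangle inequality applied across $\a_{k_{j-1}}, x_{j-1}, \gamma(t_{j-1}), \gamma(t_j), x_j, \a_{k_j}$ gives $\|\a_{k_j} - \a_{k_{j-1}}\| \le 5r$, so the chain $\a = \a_{k_0} \to \a_{k_1} \to \cdots \to \a_{k_m} = \b$ in the complete graph yields
\[
\LF(\a, \b) \;\le\; \sum_{j=1}^m \|\a_{k_j} - \a_{k_{j-1}}\|^2 \;\le\; 25\,m\,r^2 \;\le\; 25\,(L/r + 1)\,r^2 \;\le\; 25 \ell r + 25 r^2,
\]
and this is at most $36 \ell r = 36 \ell n^{-1/d+\eta}$ once $n$ is large enough that $25 r \le 11 \ell$.

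The main obstacle is the tension between the two scales: the per-hop squared cost is $\Theta(r^2)$ and the number of hops is $\Theta(\ell/r)$, so the total leapfrog cost scales as $\Theta(r)$, pushing $r$ as small as possible; yet the union bound over net points requires $n r^d$ to grow so that each net ball contains a sample. Admissibility property 5 (the uniform volume lower bound $\vol(B(\x, r) \cap S_i) \ge p r^d$, which holds even for rough, non-convex components) is exactly what lets these two demands balance at $r \sim n^{-1/d}$; the extra slack $\eta > 0$ is what buys the superpolynomially small failure probability.
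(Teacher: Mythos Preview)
Your proof is correct and follows essentially the same approach as the paper: build an $\epsilon$-net at scale $r=n^{-1/d+\eta}$, use the union bound together with admissibility conditions~4 and~5 to show every net ball is populated with probability exponentially close to~$1$, then thread any two samples in a component by subdividing a path of length at most $\ell$ and hopping via nearby net points and their resident samples. The only differences are cosmetic bookkeeping (your per-hop bound is $5r$ with $\lceil L/r\rceil$ hops versus the paper's $6\epsilon$ with $\le \ell/\epsilon$ hops), and both arrive at the same $36\ell\,n^{-1/d+\eta}$ bound.
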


\noindent
{\bf Remark.} We conjecture that the true behavior of $\mathrm{LF}(\a,\b)$ is $\Theta(n^{-1/d})$.  This is proved for the $d=1$ case by Theorem \ref{thm:LF_main}, and in fact, that theorem pinpoints the leading coefficient.

\section{From distances to coordinates}
\label{sec:dist2coord}

So far, we have defined a new distance measure which we call the leapfrog distance. Under the admissibility assumption,
%this leapfrog metric,
we have shown that simultaneously (1) each cluster is densely concentrated in the leapfrog distance, and (2) distinct clusters are well separated from each other. We exploit these properties by using leapfrog distances for clustering.
%Fortunately, computing the pairwise leapfrog distances is computationally inexpensive.
While some clustering algorithms only require pairwise distances, sum-of-norms clustering requires the coordinates of each data point. To find a cluster assignment using \eqref{eq:son-clustering}, one needs to construct a data set of coordinates from the pairwise leapfrog distances, then feed the data coordinates into a clustering algorithm.  In this section, we propose a method to obtain new coordinates.

When the original data set is already one-dimensional, an exact embedding for the pairwise leapfrog distances trivially exists; the construction and properties of the new dataset are illustrated in Subsection \ref{sec:1Dembedding}. In higher dimensions, however, it is often impossible to find an exact embedding. Instead, we use multidimensional scaling to find an inexact embedding. We present the construction in Subsection \ref{sec:MDS}.

\subsection{Trivial embedding in $\R^1$}
\label{sec:1Dembedding}
Without loss of generality, we may assume the original data is arranged in ascending order $a_1\le \dots \le a_n$. Under this assumption, it is easy to see that the closed-form expression for the leapfrog distance between any two points $i,j$ is $\LF(a_i,a_j)=\sum_{k=1}^{j-1} (a_{k+1}-a_{k})^2$. By setting $b_1=0$ and $b_i=\LF(a_1, a_i)$, we have constructed an exact embedding of the leapfrog distance, $b_1, \dots, b_n$, that satisfies $|b_i-b_j|=\LF(a_i,a_j)$. Since the embedding is isometric, the characterization of the leapfrog distance in $\R^1$ remains unchanged for the Euclidean distance between the new data points $\b_i$'s.

\begin{figure}[ht]
\vskip 0.1in
    \centering
\tikzset{every picture/.style={line width=0.75pt}} %set default line width to 0.75pt

\begin{tikzpicture}[x=0.75pt,y=0.75pt,yscale=-1,xscale=1]
%uncomment if require: \path (0,300); %set diagram left start at 0, and has height of 300

%Straight Lines [id:da32567708483570845]
\draw    (100.5,130) -- (271,130) ;
%Shape: Circle [id:dp8915764386917107]
\draw  [fill={rgb, 255:red, 0; green, 0; blue, 0 }  ,fill opacity=1 ] (98,130) .. controls (98,128.62) and (99.12,127.5) .. (100.5,127.5) .. controls (101.88,127.5) and (103,128.62) .. (103,130) .. controls (103,131.38) and (101.88,132.5) .. (100.5,132.5) .. controls (99.12,132.5) and (98,131.38) .. (98,130) -- cycle ;
%Shape: Circle [id:dp40416912209909506]
\draw  [fill={rgb, 255:red, 0; green, 0; blue, 0 }  ,fill opacity=1 ] (147,129.5) .. controls (147,128.12) and (148.12,127) .. (149.5,127) .. controls (150.88,127) and (152,128.12) .. (152,129.5) .. controls (152,130.88) and (150.88,132) .. (149.5,132) .. controls (148.12,132) and (147,130.88) .. (147,129.5) -- cycle ;
%Shape: Circle [id:dp8537196848121689]
\draw  [fill={rgb, 255:red, 0; green, 0; blue, 0 }  ,fill opacity=1 ] (192,129.5) .. controls (192,128.12) and (193.12,127) .. (194.5,127) .. controls (195.88,127) and (197,128.12) .. (197,129.5) .. controls (197,130.88) and (195.88,132) .. (194.5,132) .. controls (193.12,132) and (192,130.88) .. (192,129.5) -- cycle ;
%Shape: Circle [id:dp13100715876273505]
\draw  [fill={rgb, 255:red, 0; green, 0; blue, 0 }  ,fill opacity=1 ] (259,129.5) .. controls (259,128.12) and (260.12,127) .. (261.5,127) .. controls (262.88,127) and (264,128.12) .. (264,129.5) .. controls (264,130.88) and (262.88,132) .. (261.5,132) .. controls (260.12,132) and (259,130.88) .. (259,129.5) -- cycle ;

% Text Node
\draw (91,132.4) node [anchor=north west][inner sep=0.75pt]  [font=\footnotesize]  {$0$};
% Text Node
\draw (97,110.4) node [anchor=north west][inner sep=0.75pt]  [font=\footnotesize]  {$b_{1}$};
% Text Node
\draw (142,110.4) node [anchor=north west][inner sep=0.75pt]  [font=\footnotesize]  {$b_{2}$};
% Text Node
\draw (188,110.4) node [anchor=north west][inner sep=0.75pt]  [font=\footnotesize]  {$b_{3}$};
% Text Node
\draw (219,120.4) node [anchor=north west][inner sep=0.75pt]    {$...$};
% Text Node
\draw (255,110.4) node [anchor=north west][inner sep=0.75pt]  [font=\footnotesize]  {$b_{n}$};
% Text Node
\draw (126,140) node [anchor=north west][inner sep=0.75pt]  [font=\tiny] [align=left] {$LF( a_{1} ,a_{2})$};
% Text Node
\draw (176,140.4) node [anchor=north west][inner sep=0.75pt]  [font=\tiny]  {$LF( a_{1} ,a_{3})$};
% Text Node
\draw (246,140) node [anchor=north west][inner sep=0.75pt]  [font=\tiny] [align=left] {$LF( a_{1} ,a_{n})$};

\end{tikzpicture}
    \vskip 0.1in
    \caption{Illustration of the exact embedding in $\R^1$}
    \label{fig:my_label}
% \vskip 0.1in
\end{figure}
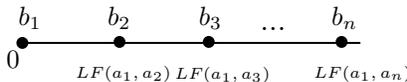

\subsection{Multidimensional scaling}
\label{sec:MDS}
Constructing the coordinates $\b_1, \dots, \b_n$ in higher dimensions is no longer trivial.  We blend two techniques from the previous literature, multidimensional scaling (MDS) \cite{kruskal1978multidimensional} and spectral embedding \cite{luo2003spectral}.
If the distance measure is Euclidean, MDS recovers the original dataset with a rigid transformation. Otherwise, the output of MDS is no longer an exact embedding of the distance matrix. However, the output still encodes useful information, hence MDS is often used as a visualization tool.  Although MDS is typically treated as a heuristic for the non-Euclidean case, we establish rigorous bounds on the MDS embedding that allow us to establish recovery guarantees.

We compute the squared leapfrog distance matrix $D$, where the $(i,j)$th entry of $D$ is given by $d_{i,j}=\mathrm{LF}(\a_i,\a_j)^2$. Our goal is to construct new data points $\b_i \in \R^L$ such that the pairwise Euclidean distance $\Vert\b_i-\b_j\Vert^2$ is well approximated by $d_{i,j}$.

Following the procedure of classical multidimensional scaling, we first compute the Gram matrix as $G(D) = D - D(1,:) \mathbf{1}^T - \mathbf{1} D(1,:)^T$, where $D(1,:)$ denotes the first row of $D$, and $\mathbf{1}$ denotes the vector of all 1's. The new data set $\b_i$'s is the $L$-dimensional spectral embedding of $G(D)$. In particular, we find the eigendecomposition of $G(D)=Q \Lambda Q^T$ where $\Lambda = \mathrm{Diag}(\lambda_1, \dots, \lambda_n)$. We then construct the new data set $\b_i$'s using the top $L$ (in magnitude) eigenpairs. We define
\begin{equation}
\hat G = Q_L \Lambda_L Q_L^T
\label{eq:hatGdef}
\end{equation}
to be the optimal rank-$L$ approximation to $G(D)$ with respect to Frobenius norm, where $\Lambda_L:=  \mathrm{Diag}(\lambda_1, \dots, \lambda_L)$ and $Q_L := Q(:,1:L)$ is the submatrix of the first $L$ columns of $Q$. Construct the re-embeddings $\b_i$'s as follows
\begin{equation}
[\b_1,\b_2,\dots,\b_n] = B := \mathrm{Diag}\left(\sqrt{|\lambda_1|},\ldots,\sqrt{|\lambda_L|}\right) Q_L^T.
\label{eq:b_construction}
\end{equation}

We choose the value $L$ by identifying a significant eigengap between the first $L$ eigenvalues and the last $n-L$ eigenvalues. We find the gap by computing $\lambda_\ell/n$ for all the eigenvalues and looking for the cutoff where the tail $\lambda_\ell/n \to 0$ if $\ell > L$. As such behavior is not observed in practice, a heuristic threshold is applied to find such $L$. With more tools introduced later in next section, we will justify that such $L$ exists and is achievable as the sample size $n$ grows. Moreover, it is guaranteed that $L \le K-1$. Hence, we can always find a low-dimensional embedding.

% Moreover, it is guaranteed that $L = \mathrm{rank}(G(D)) \le K-1$.
% Due to the choice of $L$ and the construction of $B$, we observe that
% \begin{equation}
% \|B^TB - |G(D)|\|_{2} = o(n), \quad \Rightarrow \quad  \|B^TB - |G(D)|\|_{2,\infty} = o(n),
% \label{eq:BTB-GD}
% \end{equation}
% where the implication can be easily verified as follows,
% \[
% \norm{A}_{2,\infty}=\max_i \norm{\a_i}_2 = \max_i \frac{\a_i^T \a_i}{\norm{\a_i}_2} \le \max_i \norm{A^T \frac{\a_i}{\norm{\a_i}_2}}_2 = \max_i \norm{A \frac{\a_i}{\norm{\a_i}_2}}_2 \le \norm{A}_2, \forall A = A^T.
% \]

% tj, Dec 19
% If the squared leapfrog distance matrix $D$ is also a Euclidean distance matrix, the Gram matrix $G(D)$ is positive semidefinite. Our newly constructed dataset $\b_i$ is the $L$-dimensional Euclidean embedding of $D$. Furthermore, if $L$ is equal to the number of positive eigenvalues, then the $\b_i$'s form an exact embedding of $D$.

\section{Properties of $\b_i$'s} % in $\R^d$}
\label{sec:b_property}
% \subsection{Analyasis of $G(\bar D)$}

%In general, the analysis of the embedded dataset $\{\b_1, \dots, \b_n\}$ is not trivial.
In this section, we propose a framework to analyze the re-embeddings $\b_i$'s obtained from multidimensional scaling.
%The analysis relies on the structure of the Gram matrix as well as the properties of the leapfrog distance in $\R^d$.
We analyzed the leapfrog distance for dimensions $d>1$ in Section~\ref{sec:LF_Rd} under the assumption that the probability density function $f$ is admissible: briefly, it is bounded below by a positive number on finite union of compact disjoint sets each of which satisfies a shape condition.  We continue to make these assumptions in this section.  As earlier, $S_1,\ldots,S_K$ denote the supports of the clusters, and let $C_k:=\{i:\a_i\in S_k\}$ denote the $k$th cluster for $k=1,\ldots,K$.

Our analysis uses the following steps:
\begin{enumerate}
    \item Decompose the leapfrog distance matrix $D$ into a low-rank leapfrog distance matrix $\bar D$ and a noise matrix $E$;
    \item Construct re-embeddings $\{\bar \b_i\}_{i=1}^n$ from the low-rank leapfrog distance matrix $\bar D$ using multidimensional scaling;
    \item Prove that the clusters of the re-embeddings $\{\bar \b_i\}_{i=1}^n$ are easily identifiable;
    \item Prove that the original re-embeddings $\{\b_i\}_{i=1}^n$ are a good proxy for $\{\bar \b_i\}_{i=1}^n$ with high probability, and conclude $\{\b_i\}_{i=1}^n$ is also %easily
    identifiable.
\end{enumerate}

Following these steps, we observed the following properties of $\b_i$'s. Details of our analysis are presented in Appendix \ref{app:bi}.

\begin{theorem} (Intracluster distance.)
Suppose $i,i'\in C_k$ for some $k=1,\ldots,K$. Then with probability exponentially close to $1$ as $n \to \infty$, there holds $\norm{\b_i - \b_{i'}}_2  = o(1)$.
% \le \frac{6 c_g \sqrt{L}}{\sqrt{\sigma}} n^{-\gamma}  = o(1)$.
% , where $c_g, \gamma, \sigma$ do not depend on $n$ and are defined by \eqref{eq:cgdef} and \eqref{eq:sigmadef}.

\label{thm:intra_dist_b_disjoint}
\end{theorem}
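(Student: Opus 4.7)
The plan follows the four-step outline stated just before the theorem, and every high-probability claim below is made on the event of Theorem~\ref{thm:uppbdlf_dimd}. First, I would introduce the idealized block-constant matrix $\bar D\in\R^{n\times n}$ with $\bar D_{ij}=0$ when $i,j$ are in the same cluster and $\bar D_{ij}=\delta_{k(i),k(j)}^4$ otherwise, where $\delta_{k,k'}$ is an appropriate asymptotic value of the inter-cluster leapfrog distance (for instance the squared gap $\inf_{\x\in S_k,\y\in S_{k'}}\Vert\x-\y\Vert^2$). Writing $E:=D-\bar D$, Theorem~\ref{thm:uppbdlf_dimd} yields $|E_{ij}|=O(n^{-2/d+2\eta})$ inside the diagonal blocks, while a companion estimate for inter-cluster $\LF$ (the shortest path consists of within-cluster portions, each $O(n^{-1/d+\eta})$, plus one gap-jump close to $\delta_{k,k'}^2$) gives $|E_{ij}|=O(n^{-1/d+\eta})$ on the off-diagonal blocks.

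Next, because $\bar D$ is block-constant, its $i$th and $i'$th rows coincide whenever $i,i'\in C_k$, and by symmetry the same holds for columns. Hence $\bar G:=G(\bar D)$ satisfies $\bar G(e_i-e_{i'})=0$, so every eigenvector of $\bar G$ belonging to a nonzero eigenvalue is orthogonal to $e_i-e_{i'}$. Therefore the idealized re-embedding $\bar\b_1,\ldots,\bar\b_n$ constructed via \eqref{eq:b_construction} applied to $\bar G$ satisfies $\bar\b_i=\bar\b_{i'}$ exactly. A direct calculation shows $\bar G$ has rank at most $K$, and under the balanced-cluster condition $n_k=\Theta(n)$ its nonzero eigenvalues are $\Theta(n)$ in magnitude.

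I would then transfer these properties to the actual $\b_i$ by perturbation theory. The entrywise bounds on $E$ together with the identity $G-\bar G=E-E_{:,1}\mathbf{1}^T-\mathbf{1}E_{:,1}^T$ imply $\Vert G-\bar G\Vert_\mathrm{op}=o(n)$ whenever $\eta<1/d$. Weyl's inequality then shows the top $L$ eigenvalues of $G$ are also $\Theta(n)$ (justifying the eigengap-based choice of $L$ alluded to in Section~\ref{sec:MDS}), while Davis--Kahan produces an orthogonal $O'\in\R^{L\times L}$ with $\Vert Q_L-\bar Q_L O'\Vert_F=o(1)$. Combining the eigenvalue and eigenvector perturbation bounds gives $\Vert\b_i-\bar\b_i O'\Vert_2=o(1)$ uniformly in $i$, so by the triangle inequality
$$
\Vert\b_i-\b_{i'}\Vert_2\le\Vert\b_i-\bar\b_i O'\Vert_2+\Vert\bar\b_i-\bar\b_{i'}\Vert_2+\Vert\bar\b_{i'}O'-\b_{i'}\Vert_2=o(1)+0+o(1),
$$
which is the desired conclusion.

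The main obstacle is the third step: sharpening the perturbation analysis enough that the $\sqrt{|\lambda_l|}$ factors in \eqref{eq:b_construction}, which are of order $\sqrt{n}$, do not inflate the $o(1)$ eigenvector perturbation back to $O(1)$. The freedom to choose $\eta<1/d$ in Theorem~\ref{thm:uppbdlf_dimd} is exactly what makes the bookkeeping feasible. A sharper route that sidesteps Davis--Kahan entirely is to use the spectral identity $\Vert\b_i-\b_{i'}\Vert_2^2\le\Vert G(e_i-e_{i'})\Vert_2^2/|\lambda_L|$ and then control $\Vert G(e_i-e_{i'})\Vert_2$ directly from the triangle inequality $|\LF(\a_i,\a_j)-\LF(\a_{i'},\a_j)|\le\LF(\a_i,\a_{i'})=O(n^{-1/d+\eta})$ supplied by Theorem~\ref{thm:uppbdlf_dimd}, using only the order-$n$ lower bound on $|\lambda_L|$ from Step~2.
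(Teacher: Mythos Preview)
Your Davis--Kahan route, as you yourself flag, does not close. The claim ``$\Vert\b_i-\bar\b_i O'\Vert_2=o(1)$ uniformly in $i$'' is the gap: Davis--Kahan gives only $\Vert Q_L-\bar Q_L O'\Vert_F=O(n^{-\gamma})$, and after multiplying by $\vert\Lambda_L\vert^{1/2}$ (entries of order $\sqrt{n}$) you get a Frobenius bound on $B-O'^T\bar B$ of order $n^{1/2-\gamma}$, not a column-wise $o(1)$ bound. Worse, even if $Q_L=\bar Q_L O'$ exactly, the non-commutativity $\vert\Lambda_L\vert^{1/2}O'^T\ne O'^T\vert\bar\Lambda_L\vert^{1/2}$ leaves an $O(1)$ residual. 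So this route would need substantial additional structure (e.g.\ an $\ell_{2,\infty}$ Davis--Kahan variant) that you do not supply.

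Your ``sharper route'' in the last paragraph is correct, and it is precisely what the paper does. The paper writes, via the eigenvector equation $G(D)\q_\ell=\lambda_\ell\q_\ell$,
\[
\b_i(\ell)-\b_{i'}(\ell)=\frac{\mathrm{sign}(\lambda_\ell)}{\sqrt{|\lambda_\ell|}}\,\q_\ell^T\bigl(G(D)\e_i-G(D)\e_{i'}\bigr),
\]
then uses that rows $i,i'$ of $G(\bar D)$ agree to replace the row difference by $G(E)(i,:)-G(E)(i',:)$, bounds this in $\ell_2$ by $2\Vert G(E)\Vert_{2,\infty}=O(n^{1/2-\gamma})$ (your leapfrog triangle inequality is exactly the content of the entrywise bound on $E$), and divides by $\sqrt{|\lambda_\ell|}\ge\sqrt{\sigma n}$ from Weyl. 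Your Parseval-style inequality $\Vert\b_i-\b_{i'}\Vert_2^2\le\Vert G(D)(\e_i-\e_{i'})\Vert_2^2/|\lambda_L|$ is a slightly cleaner packaging of the same computation (the paper applies Cauchy--Schwarz coordinate by coordinate and picks up an extra harmless factor $\sqrt{L}$).

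One minor correction: your choice $\bar D_{ij}=\delta_{k(i),k(j)}^4$ with $\delta$ the infimum Euclidean gap need not satisfy $\vert D_{ij}-\bar D_{ij}\vert=O(n^{-1/d+\eta})$ on off-diagonal blocks, because the optimal leapfrog path between clusters $k$ and $k'$ may traverse a third cluster. The paper avoids this by taking $\bar D(i,j)=\LF(\bmu_{c(i)},\bmu_{c(j)})^2$ for actual data-point representatives $\bmu_k\in C_k$, which makes the off-diagonal bound automatic from the leapfrog triangle inequality. This does not affect the intracluster argument (where only $\bar D(i,:)=\bar D(i',:)$ is used), but it matters for the eigenvalue lower bound you invoke.
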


\begin{theorem}
(Intercluster distance.)
Suppose $i \in C_m, j\in C_{m'}$ with $m\ne m'$. Then with probability exponentially close to $1$ as $n \to \infty$, there holds $\norm{\b_i - \b_{j}}_2 = \Omega(1)$.
\label{thm:inter_dist_X_disjoint}
\end{theorem}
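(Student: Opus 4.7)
The plan is to execute the four-step outline stated immediately above the theorem. First, I would define an idealized low-rank distance matrix $\bar D$ entrywise by $\bar d_{i,i'} = 0$ when $i,i' \in C_k$, and $\bar d_{i,j} = \delta_{m,m'}^2$ when $i \in C_m, j \in C_{m'}$ with $m \ne m'$, where $\delta_{m,m'}$ is the limiting cluster-to-cluster leapfrog distance. The essential lower bound $\delta_{m,m'} = \Omega(1)$ follows from admissibility of $f$: since $S_m$ and $S_{m'}$ are disjoint compact sets, there is a positive Euclidean gap $\gamma_{m,m'} = \dist(S_m, S_{m'}) > 0$, and any path in the complete graph from a point of $C_m$ to a point of $C_{m'}$ must contain at least one edge of Euclidean length $\ge \gamma_{m,m'}$, contributing at least $\gamma_{m,m'}^2$ to the sum-of-squared-edge-lengths path cost. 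Hence $\LF(\a_i,\a_j) \ge \gamma_{m,m'}^2 > 0$ for every $i\in C_m, j\in C_{m'}$, so $\delta_{m,m'} = \Omega(1)$.

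Next, I would apply the MDS construction of Section~\ref{sec:MDS} to $\bar D$ to obtain idealized re-embeddings $\bar{\b}_1,\ldots,\bar{\b}_n$. Because $\bar D$ has only $K$ distinct rows, its reference-centered Gram matrix $\bar G := G(\bar D)$ has rank at most $K-1$, and its top eigenvectors are piecewise-constant on the clusters $C_1,\ldots,C_K$. Hence the $\bar{\b}_i$ collapse to a $K$-point configuration in $\R^L$ (each cluster concentrated at one point with multiplicity $|C_k|$) that exactly reproduces the cluster-to-cluster distances: $\|\bar{\b}_i - \bar{\b}_j\|^2 = \bar d_{i,j} = \delta_{m,m'}^2$, giving the idealized separation $\|\bar{\b}_i - \bar{\b}_j\| = \delta_{m,m'} = \Omega(1)$ for $i\in C_m, j\in C_{m'}$.

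The third and expected-hardest step is to quantify the perturbation $E := D - \bar D$ and transfer the separation from $\bar{\b}_i$ to $\b_i$. Entrywise, $|e_{i,i'}| \le \LF(\a_i,\a_{i'})^2 = o(1)$ for intra-cluster pairs by Theorem~\ref{thm:uppbdlf_dimd}, while for inter-cluster pairs $e_{i,j}$ concentrates near $0$ via an analog of the one-dimensional concentration result (Theorem~\ref{thm:LF_main}) that one must establish for $d>1$. Combining an operator-norm bound $\|G(E)\|_2 = o(1)$ with Weyl's inequality (to pin the top-$L$ eigenvalues of $G(D)$ near those of $\bar G$, which is justified because $\bar G$ has a clean eigengap between its $L$ nonzero eigenvalues and its $n-L$ zero eigenvalues) and the Davis--Kahan $\sin\Theta$ theorem (to align the corresponding eigenspaces up to orthogonal transformation) should yield $\max_i \|\b_i - O\bar{\b}_i\| = o(1)$ for some orthogonal $O$, with probability exponentially close to $1$. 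The delicate part is controlling $\|G(E)\|_2$ since naive entrywise bounds combined with the $n\times n$ size are too loose; a matrix-concentration inequality (e.g., matrix Bernstein) exploiting the weak dependence between distinct leapfrog-distance entries is likely required, and this is the same technical core that also drives Theorem~\ref{thm:intra_dist_b_disjoint}.

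Finally, the theorem follows by triangle inequality: for $i\in C_m, j\in C_{m'}$ with $m\ne m'$,
\[
\|\b_i - \b_j\| \;\ge\; \|O\bar{\b}_i - O\bar{\b}_j\| - \|\b_i - O\bar{\b}_i\| - \|\b_j - O\bar{\b}_j\| \;\ge\; \delta_{m,m'} - o(1) \;=\; \Omega(1),
\]
with probability exponentially close to $1$, as desired.
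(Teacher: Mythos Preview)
Your overall four-step strategy matches the paper's, but three of your concrete claims either fail or do not close the argument.

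First, $\|G(E)\|_2 = o(1)$ is false. Each entry of $G(E)$ is $O(n^{-\gamma})$ with $\gamma = 1/d-\eta$, but there are $n^2$ entries, and the paper obtains only $\|G(E)\|_F \le 3c_g n^{1-\gamma} = o(n)$. No matrix-concentration argument will rescue $o(1)$, because the entries of $E$ are not mean-zero: every intra-cluster entry is biased positive by the genuine $\Theta(n^{-\gamma})$ leapfrog distance. This is not fatal, since the nonzero eigenvalues of $G(\bar D)$ are $\Theta(n)$ and so the ratio is still $o(1)$.

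Second, and this is the real gap, Davis--Kahan controls the \emph{subspace} discrepancy $\|Q_L - \bar Q_L O\|$ in Frobenius or spectral norm; it does not yield the uniform row-wise bound $\max_i \|\b_i - O\bar\b_i\| = o(1)$ that your final triangle inequality needs. Converting a Frobenius eigenvector bound into uniform $\ell_{2,\infty}$ control is a separate and harder problem. The paper does not attempt it. Instead it proves only a Frobenius-scale bound $\bigl\|\,|\hat G|^{1/2} - |G(\bar D)|^{1/2}\bigr\|_F \le c_g' n^{1/2-\gamma}$ via a matrix-square-root perturbation lemma (not Davis--Kahan), translates this to $\|B - R^T\bar R\,\bar B\|_F = O(n^{1/2-\gamma})$ up to orthogonal alignment, and then uses an \emph{averaging} step: restrict to $\beta n$ columns from each of $C_m$ and $C_{m'}$, deduce that \emph{some} pair $(i',j')$ satisfies $\|\b_{i'}-\b_{j'}\| = \Omega(1)$, and finally invoke the already-proved intracluster theorem $\|\b_i-\b_{i'}\| = o(1)$ to extend to every pair $(i,j)$. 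This averaging-plus-intracluster manoeuvre is precisely the idea your proposal is missing.

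Third, your claim that the idealized embedding satisfies $\|\bar\b_i-\bar\b_j\|^2 = \bar d_{i,j}$ presumes $G(\bar D)$ is positive semidefinite. The construction in Section~\ref{sec:MDS} uses $|\Lambda_L|^{1/2}$, so $\bar B^T\bar B = |G(\bar D)|$, which need not equal $G(\bar D)$, and distances are not reproduced. The paper therefore proves $\|\bar\b_i-\bar\b_j\| \ge c$ directly (Lemma~\ref{lem:bbar_inter_dist}) by comparing rows of $G(\bar D)$ and using the $\Theta(n)$ eigenvalue bounds, without any appeal to distance preservation.
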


\section{Recovery of clusters by sum-of-norms clustering}
\label{sec:chiquetproof}

So far we have constructed re-embeddings $\b_i$'s of the original data $\a_i$'s under the leapfrog metric and established desirable properties that the re-embeddings enjoy. In particular, for re-embeddings $\b_i$'s, their intra-cluster distances are much smaller than their inter-cluster distances as shown in Section \ref{sec:lf_property} and Section \ref{sec:b_property}. As a consequence, we can strengthen the recovery guarantee for sum-of-norms clustering using leapfrog re-embeddings $\b_i$'s.

We discuss stronger recovery guarantees under two settings. First, we consider data generated identically and independently by some common law supported on a union of disjoint, compact sets satisfying the admissibility conditions of Section~\ref{sec:LF_Rd}. In this setting, our method correctly clusters point when the sample size is sufficiently large. The second setting is when the data is generated by a mixture of Gaussians in $\R^1$. Note that in this case, as the sample size $n\to \infty$, some samples associated with one mean could be placed arbitrarily close to the mean of another Gaussian; as such, we can no longer hope to correctly label all the samples asymptotically. Instead, we settle for certifying correct clustering for samples that are within a fixed number of standard deviations from their respective means.

Previously, Chiquet et al.~\cite{chiquet} derived necessary and sufficient conditions for cluster recovery in terms of certain subgradients. Jiang et al.~\cite{jiang2020recovery} then obtained sufficient conditions for mixture of Gaussians with particular choice of $\b_i-\b_j$ for the Chiquet et al.\ subgradients.
The sufficient condition, which applies to arbitrary data, not just a mixture of Gaussians, is as follows:

\begin{theorem}
\label{thm:recovery}
Suppose SON clustering is applied to given data $\b_1, \dots, \b_n$. For any $k=1,\ldots,K$,
the points indexed by $C_k$ are in the same cluster
provided
\begin{equation}
    \lambda \ge
    \frac{\norm{\b_i - \b_j}}{|C_k|}, \qquad \forall i, j \in C_k
    \label{eq:son_lambda_lb}
\end{equation}
Furthermore, the cluster associated with $C_k$ is distinct from the cluster associated with $C_{k'}$, $1\le k<k'\le K$ provided that there exist $i \in C_k, j \in C_{k'}$ such that
\begin{equation}
\lambda <\frac{\norm{
\b_i - \b_j
}}{2(n-1)}.
    \label{eq:son_lambda_ub}
\end{equation}
\end{theorem}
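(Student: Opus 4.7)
The strategy is to treat the SON objective applied to $\b_1,\ldots,\b_n$ as a strictly convex optimization problem whose unique minimizer $\x^*=(\x_1^*,\ldots,\x_n^*)$ is characterized by the subgradient optimality conditions
$$\x_i^*-\b_i+\lambda\sum_{j\ne i}\w_{ij}=\bz,\qquad i=1,\ldots,n,$$
with $\w_{ij}=-\w_{ji}$, $\norm{\w_{ij}}\le 1$, and $\w_{ij}=(\x_i^*-\x_j^*)/\norm{\x_i^*-\x_j^*}$ whenever $\x_i^*\ne\x_j^*$. The plan is to verify both halves of the theorem by reasoning directly with these conditions: the merging half by exhibiting an explicit subgradient certificate, and the separation half by a short triangle-inequality argument.

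For the merging half, first consider the ``collapsed'' problem obtained by forcing $\x_i=\y_k$ for all $i\in C_k$ and all $k$. Expanding the quadratic fidelity term and using $\sum_{i\in C_k}(\b_i-\bar\b_{C_k})=\bz$, this collapsed objective becomes
$$\min_{\y_1,\ldots,\y_K}\ \frac{1}{2}\sum_{k=1}^K|C_k|\,\norm{\y_k-\bar\b_{C_k}}^2+\lambda\sum_{k<k'}|C_k|\,|C_{k'}|\,\norm{\y_k-\y_{k'}},$$
which is itself strictly convex and therefore has a unique minimizer $\{\y_k^\dagger\}_{k=1}^K$. I would then define the candidate $\x_i^*:=\y_k^\dagger$ for $i\in C_k$, choose the intracluster subgradients as $\w_{ij}:=(\b_i-\b_j)/(\lambda|C_k|)$ for $i,j\in C_k$, and choose the intercluster subgradients as $(\y_k^\dagger-\y_{k'}^\dagger)/\norm{\y_k^\dagger-\y_{k'}^\dagger}$ whenever the two collapsed centers differ (or as any vector in the unit ball compatible with the reduced KKT when they coincide). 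The intracluster choice is manifestly antisymmetric, and the norm bound $\norm{\w_{ij}}\le 1$ is precisely hypothesis \eqref{eq:son_lambda_lb}. A direct computation using the identity $\sum_{j\in C_k,\,j\ne i}(\b_i-\b_j)/|C_k|=\b_i-\bar\b_{C_k}$ shows that the KKT equation at each $i\in C_k$ reduces (after dividing by $|C_k|$) to the KKT equation of the collapsed problem at $\y_k^\dagger$, which holds by construction. Strict convexity of the original SON problem then identifies this candidate with the unique global minimizer, so every $i\in C_k$ shares the common value $\y_k^\dagger$.

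For the separation half, suppose for contradiction that $\x_i^*=\x_j^*$ for some $i\in C_k$ and $j\in C_{k'}$ with $k\ne k'$. Subtracting the KKT equations at indices $i$ and $j$ yields
$$\b_i-\b_j=\lambda\Bigl(\sum_{l\ne i}\w_{il}-\sum_{l\ne j}\w_{jl}\Bigr).$$
Each of the two sums contains $n-1$ terms of norm at most $1$, so the triangle inequality gives $\norm{\b_i-\b_j}\le 2\lambda(n-1)$, directly contradicting \eqref{eq:son_lambda_ub}.

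The only delicate point is in the merging step: the global KKT certificate must remain consistent regardless of whether various $\y_k^\dagger$ happen to coincide, which could leave some intercluster subgradient directions undetermined. In that case any vector in the unit ball is admissible, and the collapsed problem's own KKT system already supplies a compatible family of such directions, so the bookkeeping goes through. Beyond this, the verification is purely algebraic and uniqueness from strict convexity closes the argument in one line.
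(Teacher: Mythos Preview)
The paper does not give its own proof of this theorem; it is quoted as a known result, attributing the subgradient characterization to Chiquet et al.\ and the specific choice of subgradients $\w_{ij}\propto \b_i-\b_j$ to Jiang et al. Your argument is precisely that construction: plug $\w_{ij}=(\b_i-\b_j)/(\lambda|C_k|)$ into the KKT system, reduce to a collapsed problem for the intercluster directions, and invoke strict convexity. The separation half via subtracting optimality conditions and bounding by $2\lambda(n-1)$ is likewise the standard argument. So your approach coincides with the one the paper cites.

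One small point worth tightening: the first assertion is phrased per cluster (``for any $k$, $C_k$ merges provided \eqref{eq:son_lambda_lb} holds for that $k$''), but your global certificate uses \eqref{eq:son_lambda_lb} for \emph{all} $k$ simultaneously. The fix is immediate within your own framework: to certify a single $C_k$, apply your collapsed-problem construction to the partition consisting of $C_k$ together with all remaining points as singletons; the intracluster norm constraint is vacuous on singletons, so only the hypothesis on $C_k$ is needed. With that adjustment the argument matches the stated theorem exactly.
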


\subsection{Recovery of non-convex clusters on disjoint supports}
Suppose the data is generated identically and independently by a common law on disjoint, compact sets satisfying the admissibility conditions of Section~\ref{sec:LF_Rd}, denoted by $S_1, \dots, S_K$. %With slight abuse of notation, we may also use $C_k$ to refer to the set of points lying in the $k$th support $C_k$.
%Naturally, we would like two points to be assigned to the same cluster if and only if they lie in the same support. Hence,
%We aim to recover clusters $C_1, \dots, C_K$ using sum-of-norms clustering.

By Theorem \ref{thm:recovery}, proving perfect recovery reduces to showing that inter-cluster distances are much larger than intra-cluster distances (see Appendix \ref{app:recovery_disjoint} for detailed analysis).
%Luckily, we have access to such results from Section \ref{sec:b}. We formalize the recovery results in the following theorem.

\begin{theorem}
\label{thm:disjoint_recovery}
Suppose data $\a_1, \dots, \a_n$ are independent and identically distributed with a common law $f$, which is admissible and supported on the disjoint union of path-connected components $S_1, \dots, S_K$.
Then there exists $\lambda$ such \eqref{eq:son-clustering}
applied to the re-embeddings $\b_1, \dots, \b_n$
achieves perfect recovery of clusters $C_1, \dots, C_K$,
where $C_k=\{i:\a_i\in S_k\}$ for $k=1,\ldots,K$,
%with the leapfrog metric
with probability exponentially close to $1$ as $n\rightarrow \infty$. %provided $\lambda$ satisfies \eqref{eq:son_lambda_lb} and \eqref{eq:son_lambda_ub}.%tj
\end{theorem}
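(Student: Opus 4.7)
The plan is to combine the sufficient condition for SON recovery given by Theorem \ref{thm:recovery} with the intra-/inter-cluster distance bounds on the re-embeddings established in Theorems \ref{thm:intra_dist_b_disjoint} and \ref{thm:inter_dist_X_disjoint}, and then verify that the interval of admissible $\lambda$ carved out by those bounds is nonempty for all sufficiently large $n$.

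First, I would establish that each cluster has linearly many points. Since $f$ is admissible, the probability $p_k := \int_{S_k} f\,dx$ that a single sample lands in $S_k$ is strictly positive: $f\ge\theta>0$ on $S_k$, and the shape condition $\vol(B(\x,r)\cap S_k)\ge pr^d$ ensures $S_k$ has positive Lebesgue measure. Since $|C_k|$ is a binomial random variable with mean $np_k$, a Chernoff bound gives $|C_k|\ge np_k/2$ for all $k=1,\ldots,K$ simultaneously, with probability at least $1-K\exp(-\Omega(n))$. Define $p_{\min}:=\min_k p_k>0$.

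Next I would invoke Theorem \ref{thm:intra_dist_b_disjoint} to conclude that, with probability exponentially close to $1$, $\max_{k}\max_{i,i'\in C_k}\norm{\b_i-\b_{i'}} = o(1)$, and Theorem \ref{thm:inter_dist_X_disjoint} to conclude that, with probability exponentially close to $1$, there exist (in fact all choices of) $i\in C_m, j\in C_{m'}$ for $m\ne m'$ with $\norm{\b_i-\b_j}=\Omega(1)$. A union bound over the $K$ clusters, the $\binom{K}{2}$ cluster pairs, and the cardinality event from the previous paragraph shows all three conclusions hold simultaneously with probability $1-\exp(-\Omega(n^{c}))$ for some $c>0$.

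Combining these estimates with Theorem \ref{thm:recovery}, perfect recovery holds whenever
\[
\max_{k}\max_{i,i'\in C_k}\frac{\norm{\b_i-\b_{i'}}}{|C_k|} \;\le\; \lambda \;<\; \min_{m\ne m'}\min_{i\in C_m,\,j\in C_{m'}}\frac{\norm{\b_i-\b_j}}{2(n-1)}.
\]
The left-hand side is bounded above by $o(1)/(np_{\min}/2) = o(1/n)$, while the right-hand side is bounded below by $\Omega(1)/(2(n-1)) = \Omega(1/n)$. Hence for $n$ sufficiently large the interval is nonempty and any $\lambda$ inside it, for instance $\lambda=c/n$ with an appropriately chosen constant $c$, satisfies both bounds of Theorem \ref{thm:recovery} and therefore yields perfect recovery.

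There is no genuine obstacle here beyond bookkeeping: the heavy lifting was performed in Section~\ref{sec:b_property}, which turned the raw leapfrog bounds into the $o(1)$ versus $\Omega(1)$ separation of the re-embeddings. The only mildly delicate point is the linear lower bound on $|C_k|$, which is needed to convert the intra-cluster bound $o(1)$ into the $o(1/n)$ quantity that must fall below the inter-cluster threshold $\Omega(1/n)$; this is handled by the standard Chernoff estimate, whose failure probability is absorbed into the final union bound.
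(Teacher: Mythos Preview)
Your proposal is correct and follows essentially the same approach as the paper: invoke Theorems~\ref{thm:intra_dist_b_disjoint} and~\ref{thm:inter_dist_X_disjoint} for the $o(1)$ versus $\Omega(1)$ separation, use a Chernoff bound to get $|C_k|=\Omega(n)$, and conclude via Theorem~\ref{thm:recovery} that the admissible range for $\lambda$ is $[o(1/n),\Omega(1/n))$ and hence nonempty. The paper's own proof is slightly terser (it cites the linear cluster-size bound from the proof of Lemma~\ref{lem:bbar_inter_dist} rather than redoing the Chernoff argument), but the logic is identical.
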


% ##################### appendix 7 starts here #################

% \begin{proof}
% By Theorems \ref{thm:intra_dist_b_disjoint} and \ref{thm:inter_dist_X_disjoint}, we have the following properties for intra-cluster and inter-cluster distances of the re-embeddings $\b_1, \dots, \b_n$:
% \begin{align*}
% \norm{\b_i-\b_j} &= o(1) \; \forall i,j \in C_k, \\
%  \norm{\b_i-\b_j} &= \Omega(1) \; \forall i\in C_k, j \notin C_k,
% \end{align*}
% for all  $k = 1, \dots, K$.
% % Therefore, if we define $\lambda = p\min$
% Thus, the upper bound of $\lambda$ stated in Equation \eqref{eq:son_lambda_ub} is on the order of $\frac{\norm{
% \b_i - \b_j}}{2(n-1)} = \Omega\left(\frac{1}{n}\right)$.  Moreover, as argued in the proof of Lemma \ref{lem:bbar_inter_dist}, with probability exponentially close to 1 as $n \to \infty$, each cluster has size at least $\gamma n$ for some $\gamma > 0$. Thus, the lower bound of $\lambda$ stated in \eqref{eq:son_lambda_lb} is on the order of $\frac{\norm{\b_i - \b_j}}{|C_k|} = o\left(\frac{1}{n}\right)$.
% Therefore,
% \[
% \frac{\norm{\b_i - \b_j}}{|C_k|} < \frac{\norm{\b_i - \b_j}}{2(n-1)},
% \]
% which implies the existence of $\lambda$ that satisfies the lower and upper bounds stated in \eqref{eq:son_lambda_lb} and \eqref{eq:son_lambda_ub}. As the existence of such $\lambda$ is a sufficient condition for successful clustering as stated in Theorem \ref{thm:recovery}, we obtain perfect recovery of clusters $C_1, \dots, C_K$.
% \end{proof}
% ##################### appendix 7 ends here #################

Previously, Dunlap and Mourrat~\cite{dunlap2021sum} proved that sum-of-norms clustering fails to identify clusters for data consisting of a large number of independent random variables distributed on two disjoints balls that are close to one another. In addition, Nguyen and Mamitsuka~\cite{2021nguyen} showed that sum-of-norms clustering fails to find nonconvex clusters. Our results suggest a workaround: Theorem \ref{thm:disjoint_recovery} demonstrates that sum-of-norms clustering, coupled with our leapfrog re-embedding, can successfully complete the clustering task for datasets which were previously not amenable to sum-of-norms clustering.

\subsection{Recovery of mixture of Gaussians in 1D}

We now consider a second general setting for our method: a mixture of Gaussians on the real line $\mathbb{R}^1$. Clearly exact recovery is not possible in this setting since samples from one of the Gaussians can land arbitrarily close to the mean of another Gaussian because of the everywhere-positive support of Gaussians.
Therefore, the goal is to achieve perfect recovery for points within a fixed number of standard deviations of each mean.

We first state a theorem about arbitrary positive Lipschitz-continuous distributions, and then we show how it applies to a mixture of Gaussians via computations (see Appendix \ref{app:recovery_gaussian} for detailed analysis).

\begin{theorem}
\label{thm:1DGaussiansRecovery}
Let the vertices $a_1,\ldots,a_n$ be chosen i.i.d.~according to a Lipschitz-continuous probability density function $f(x)$ that is positive for all $x\in\R$.  Assume SON clustering is applied to this data after re-embedding according to new coordinates $b_1,\ldots,b_n$. Let $S_1$, \ldots, $S_K$ be disjoint bounded closed intervals ordered from left to right.
 Let $\rho_m$, $m=1,\ldots,K$, denote $\int_{S_m}f(x)\,dx$.
For a particular $m\in\{1,\ldots,K\}$, for any $\eps>0$, all the data points in $S_m$ will be clustered together with probability exponentially close to $1$ as $n\rightarrow\infty$ (that also depends on $\epsilon$) provided that
\begin{equation}
\lambda \ge      \frac{2\int_{S_m}(1/f(x))\,dx}{(\rho_m-\epsilon)n^2} + O(n^{-2.04}).
\label{eq:lambda_lb}
\end{equation}
Furthermore, the cluster associated with $S_m$ is distinct from the cluster associated with $S_{m'}$, $1\le m<m'\le K$, with probability exponentially close to $1$ as $n\rightarrow\infty$  provided that
\begin{equation}
\lambda < \frac{\min_{m=1,\ldots,K-1}\int_{T_m}(1/f(x))\,dx}{n^2} + O(n^{-2.04}),
    \label{eq:lambda_ub}
\end{equation}
where $T_m$ for $m\in\{1,\ldots,K-1\}$ denotes the interval comprising the gap between $S_m$ and $S_{m+1}$, i.e., $T_m=[\max_{x\in S_m} x, \min_{x\in S_{m+1}} x]$.
\end{theorem}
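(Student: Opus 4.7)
The plan is to invoke Theorem~\ref{thm:recovery} on the re-embedded points, leveraging the fact that in $\R^1$ the embedding is isometric, so $|b_i - b_j| = \LF(a_i, a_j)$ (Subsection~\ref{sec:1Dembedding}). This reduces the two halves of the claim to matching upper and lower bounds on leapfrog distances --- intra-cluster (within a single $S_m$) for \eqref{eq:lambda_lb} and inter-cluster (across a gap $T_m$) for \eqref{eq:lambda_ub}.

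For \eqref{eq:lambda_lb}, fix $m$ and take any $i,j \in C_m$. By the closed-form $\R^1$ expression, $\LF$ is monotone under interval inclusion, so $\LF(a_i,a_j) \le \LF(\alpha_m, \beta_m)$ where $\alpha_m,\beta_m$ are the endpoints of $S_m$. Since $f$ is continuous and positive on the compact set $S_m$, it is bounded below there by some $\underline{f}>0$, so Theorem~\ref{thm:ELF_U} gives
\[
\mathbb{E}[\LF(\alpha_m,\beta_m)] = \frac{2}{n}\int_{S_m} \frac{dx}{f(x)} + o(1/n),
\]
and Theorem~\ref{thm:LF_main} upgrades this to $\LF(\alpha_m,\beta_m) \le \frac{2}{n}\int_{S_m} dx/f(x) + O(n^{-1.04})$ with probability exponentially close to $1$. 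Meanwhile, $|C_m| \sim \mathrm{Binomial}(n,\rho_m)$, so a Hoeffding bound gives $|C_m| \ge (\rho_m - \epsilon)n$ with probability exponentially close to $1$. Dividing,
\[
\frac{\LF(a_i,a_j)}{|C_m|} \;\le\; \frac{2\int_{S_m} dx/f(x)}{(\rho_m-\epsilon)n^2} + O(n^{-2.04})
\]
uniformly in $i,j\in C_m$, which combined with the first clause of Theorem~\ref{thm:recovery} yields \eqref{eq:lambda_lb}.

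For \eqref{eq:lambda_ub}, fix a consecutive pair $(m, m+1)$ and pick any $i\in C_m$, $j\in C_{m+1}$. The same monotonicity gives $\LF(a_i,a_j) \ge \LF(a_i', a_j')$, where $a_i'$ is the rightmost sample in $S_m$ and $a_j'$ the leftmost sample in $S_{m+1}$. Since $[a_i',a_j'] \supseteq T_m$, we have $\int_{a_i'}^{a_j'} dx/f(x) \ge \int_{T_m} dx/f(x)$. Applying Theorems~\ref{thm:ELF_U} and \ref{thm:LF_main} to a bounded interval enclosing $T_m$ on which $f$ is bounded below gives $\LF(a_i', a_j') \ge \frac{2}{n}\int_{T_m} dx/f(x) - O(n^{-1.04})$ with probability exponentially close to $1$, so $\LF(a_i,a_j)/[2(n-1)] \ge \int_{T_m} dx/f(x)/n^2 - O(n^{-2.04})$. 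Minimizing over $m$ and invoking the second clause of Theorem~\ref{thm:recovery} yields \eqref{eq:lambda_ub}.

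The main technical obstacles are (i) the union bound over all pairs in each $C_m$, and (ii) the randomness of the interval $[a_i', a_j']$ appearing in the intercluster step. The union bound is benign because Theorem~\ref{thm:LF_main}'s failure probability decays like $\exp(-c_2 n^{c_3})$, dwarfing the $O(n^2)$ number of pairs. For the randomness issue, one can either condition on the locations of $a_i'$ and $a_j'$ before applying the leapfrog theorems, or use a slightly enlarged deterministic interval containing $T_m$ --- the discrepancy in $\int dx/f(x)$ is $O(1/n)$ by positivity of $f$ at the endpoints of $S_m$ and $S_{m+1}$, which is absorbed into the $O(n^{-2.04})$ remainder after division by $n^2$.
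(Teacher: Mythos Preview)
Your approach is essentially the same as the paper's: both reduce to Theorem~\ref{thm:recovery} via the 1D isometry $|b_i-b_j|=\LF(a_i,a_j)$, use Theorems~\ref{thm:ELF_U} and~\ref{thm:LF_main} to control $\LF$, and use a Chernoff-type bound for $|C_m|$.

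One small technical wrinkle: you route the intracluster bound through $\LF(\alpha_m,\beta_m)$ for the \emph{fixed} endpoints of $S_m$, but Theorems~\ref{thm:ELF_U} and~\ref{thm:LF_main} are stated for \emph{data points} $a,b$. The paper sidesteps this by applying those theorems directly to the random pair $a_i,a_j$, obtaining $\LF(a_i,a_j)=\tfrac{2}{n}\int_{a_i}^{a_j}dx/f(x)+O(n^{-1.04})$ and then bounding the integral by $\int_{S_m}dx/f(x)$. This is slightly cleaner than your detour through the interval endpoints, and avoids having to argue that the leapfrog theorems extend to non-sample points (or that the extremal samples are within $o(1)$ of $\alpha_m,\beta_m$). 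For the intercluster part the paper likewise works with an arbitrary pair $a_i\in S_m$, $a_j\in S_{m'}$ and lower-bounds $\int_{a_i}^{a_j}dx/f(x)$ by $\int_{T_m\cup\cdots\cup T_{m'-1}}dx/f(x)$, which automatically covers non-consecutive clusters; your reduction to the consecutive case plus ``minimize over $m$'' is equivalent. Your explicit handling of the union bound and the randomness of $[a_i',a_j']$ is correct and more careful than the paper's terse treatment.
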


Next, we demonstrate that Theorem \ref{thm:1DGaussiansRecovery} strengthens the state-of-the-art recovery results for a mixture of Gaussians using sum-of-norms clustering \cite{jiang2020recovery}. In particular, we can specify particular values of $K$; $w_1,\ldots,w_K$; $\mu_1<\cdots<\mu_K$; and $\sigma_1,\ldots,\sigma_K$.

We next select a scalar $\theta>0$ and define intervals according to
\[
S_m=[\mu_m-\theta\sigma_m,\mu_m+\theta\sigma_m],
\]
for $m=1,\ldots,K$,
where we assume $\theta$ is sufficiently small so that the $S_m$'s are pairwise disjoint.  Observe that $S_m$ is defined to cover the centrally located points for the $m$th Gaussian.

Then we can compute the lower and upper bounds in \eqref{eq:lambda_lb} and \eqref{eq:lambda_ub}; if the lower bound is less than the upper bound, then recovery is guaranteed asymptotically.  These bounds do not involve generating actual data points $a_1,\ldots,a_n$ but only evaluating one-dimensional integrals.  We can compute the analogous bounds from formulas as in Theorem 3 of \cite{jiang2020recovery}, which uses SON clustering without re-embedding, to demonstrate the strengthening of the guarantee.

The comparison of ranges of $\lambda$ is in Table~\ref{tab:lambdacomp}. One sees from the table that recovery is better for the re-embedded coordinates than for the original coordinates in every computation we tried.  We do not have a theoretical result that the re-embedded coordinates outperform the original coordinates for all ranges of parameters; we return to this point in the conclusion.

\begin{table}[t]
    \caption{Comparison of $\lambda$ lower and upper bounds for original embedding versus re-embedded coordinates.  We took $K=2$ for all rows, and $\mu_1=0$, $\mu_2=1$ for all rows.  If the range of $\lambda$'s is indicated as the empty set, this means that the lower bound exceeded the upper bound, and hence recovery is not guaranteed for any value of $\lambda$.  Note that for the original embedding, we report on $n\lambda$ range since $\lambda$ scales as $\frac{1}{n}$ according to Theorem 3 of \cite{jiang2020recovery}.  In Theorem~\ref{thm:1DGaussiansRecovery}, $\lambda$ scales as $\frac{1}{n^2}$.}
    \label{tab:lambdacomp}
    \vskip 0.15in
    \centering
    \begin{small}
    \begin{sc}
    \begin{tabular}{lllll}
    \hline
     & & & {\bf Orig. data} & {\bf Re-embed.}\\
    $w_i$'s & $\sigma_i$'s & $\theta$ & $n\lambda$ range &
    $n^2\lambda$ range \\
    \hline
{[0.5,0.5]}&{[0.4,0.4]}&1&$\emptyset$&$\emptyset$\\
{[0.5,0.5]}&{[0.3,0.3]}&1&$\emptyset$&$\emptyset$\\
{[0.5,0.5]}&{[0.3,0.3]}&0.5&$\emptyset$&$\emptyset$\\
{[0.5,0.5]}&{[0.2,0.2]}&1&$\emptyset$&{$[ 2.3, 3.3]$}\\
{[0.5,0.5]}&{[0.2,0.2]}&0.5&$\emptyset$&{$[ 1.6, 3.6]$}\\
{[0.5,0.5]}&{[0.1,0.1]}&2&$\emptyset$&{$[ 1.9, 4500]$}\\
{[0.5,0.5]}&{[0.1,0.1]}&1&{$[ 0.47, 0.5]$}&{$[ 0.57, 4500]$}\\
{[0.5,0.5]}&{[0.1,0.1]}&0.5&{$[ 0.38 , 0.5]$}&{$[ 0.4, 4500]$}\\
{[0.9,0.1]}&{[0.3,0.3]}&1&$\emptyset$&$\emptyset$\\
{[0.9,0.1]}&{[0.2,0.2]}&1&$\emptyset$&$\emptyset$\\
{[0.9,0.1]}&{[0.1,0.1]}&1&$\emptyset$&{$[ 14, 7700]$}\\
    \hline
    \end{tabular}
    \end{sc}
\end{small}
\vskip -0.1in
\end{table}

\section{Computational experiments}
\label{sec:exper}
In this section, we demonstrate the value of our approach through a %comprehensive
set of experiments. In particular, we compare the performance of our re-embedding versus the original features. For the specific case of mixtures of Gaussians, we show that our method tolerates a larger standard deviation while still maintaining perfect recovery.

\begin{center}
    \begin{figure}[h]
    \vskip -0.4in
    \subfloat[][Re-embedding]{
    \includegraphics[scale = 0.5]{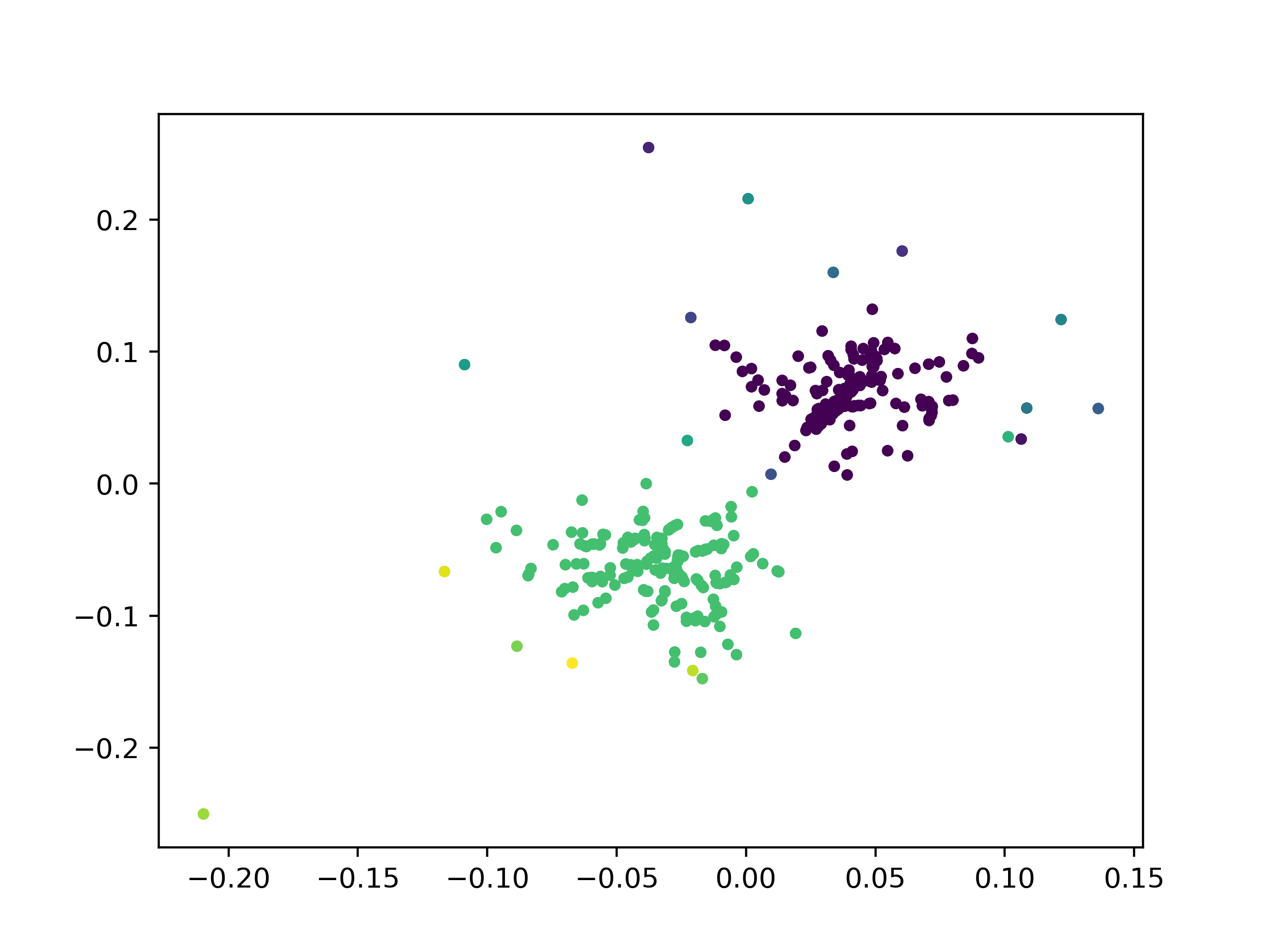}}
    \subfloat[][Original feature space]{
    \includegraphics[scale = 0.5]{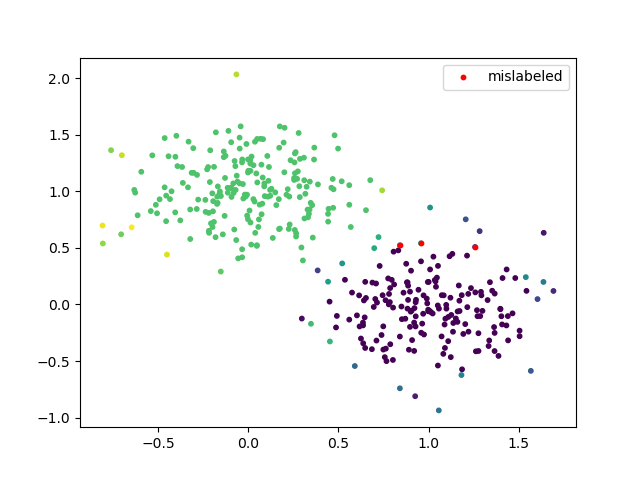}}
    \hfill
    \vskip 0.1in
    \caption{Mixture of two two-dimensional Gaussians with $\sigma = 0.29$. (a) For the re-embedding, we set $\lambda = 0.0038$ to achieve perfect recovery within $2$ standard deviations of the mean. (b) For the original feature space, no $\lambda$ was found at the $10^{-5}$ fidelity to recover all points within $2$ standard deviations perfectly. $\lambda = 0.00356$ is shown here, which misclassifies points shown in red.}
    \label{fig:2d}
    \end{figure}
\end{center}

For the setting of mixtures of Gaussians, we consider two settings, when $d= K = 2$ and $d=K = 6$ and $n=400$. In accordance with our theory, we wish to find settings in which there exists $\sigma$ large enough that we have perfect recovery (for some $\lambda$) near the means for our embedded approach but not for the original coordinates.  Note that our theory for Gaussian mixtures covers only the case $d=1$, but we conjecture that analogous results hold for higher dimensions.

To that end, we increased $\sigma$ until we were able to find a $\lambda$ for which the re-embedded coordinates were perfectly clustered for $V_m = \{i: |a_i - \mu_m| \leq 2\sigma_m\}, m=1,\dots,K $, but for which there were no values of $\lambda$ (to the $10^{-5}$ fidelity) which were able to recover the points in $V_m$ perfectly.

For $d=K=L=2$, we found that $\sigma = 0.29$ yielded perfect clustering for our embedded approach but not for the original approach. Figure \ref{fig:2d} visualizes the results for this setting.

\begin{table}[h]\centering
  \caption{Results for various settings of $\sigma$ for a two-dimensional mixture of Gaussians. The $\lambda$'s and the Rand indices are the best found.}
  \vskip 0.15in
  \begin{small}
    \begin{sc}
  \begin{tabular}{cccccc}\toprule
        &
      \multicolumn{2}{c}{\bf{Original data}} &
      \multicolumn{2}{c}{\bf{Re-embedding}} \\
  $\sigma$ &$\lambda$& Rand Index & $\lambda$ & Rand Index \\
  \hline
  $0.07$ & 0.002 & 1 & 0.0001  & 1 \\
  $0.1$ & 0.002 & 1 & 0.0002 & 1 \\
  $0.15$ & 0.003& 1 & 0.0004 & 1 \\
    $0.2$ & 0.0035& 1  & 0.0006 & 1 \\
$0.29$ & 0.00356& 0.92 & 0.00038 & 0.95 \\
  \bottomrule
  \end{tabular}
  \end{sc}
  \end{small}
  \vskip -0.1in
  \end{table}

For $d=K=L=6$, we found that $\sigma = 0.12$ yielded perfect clustering for our embedded approach but not for the original approach. As six dimensions is not easily visualized, we omit graphs for this result.

\begin{table}[h]\centering
  \caption{Results for various settings of $\sigma$ for a six-dimensional mixture of Gaussians. The $\lambda$s and the Rand indices are the best found.}
  \vskip 0.15in
  \begin{small}
    \begin{sc}
  \begin{tabular}{cccccc}\toprule
  &
      \multicolumn{2}{c}{\bf{Original data}} &
      \multicolumn{2}{c}{\bf{Re-embedding}} \\
  $\sigma$ &$\lambda$& Rand Index & $\lambda$ &  Rand Index \\
  \hline
  $0.06$ &  0.348 & 0.95 & 0.00285  & 1 \\
  $0.09$ & 0.355 & 0.83 & 0.00285 & 1 \\
  $0.12$ & 0.355 & 0.83 & 0.00285 & 0.99 \\
  \bottomrule
  \end{tabular}
  \end{sc}
  \end{small}
  \vskip -0.1in
\end{table}

\begin{figure}[h]
\vskip -0.3in
\subfloat[][Original points]{
\includegraphics[scale = 0.5]{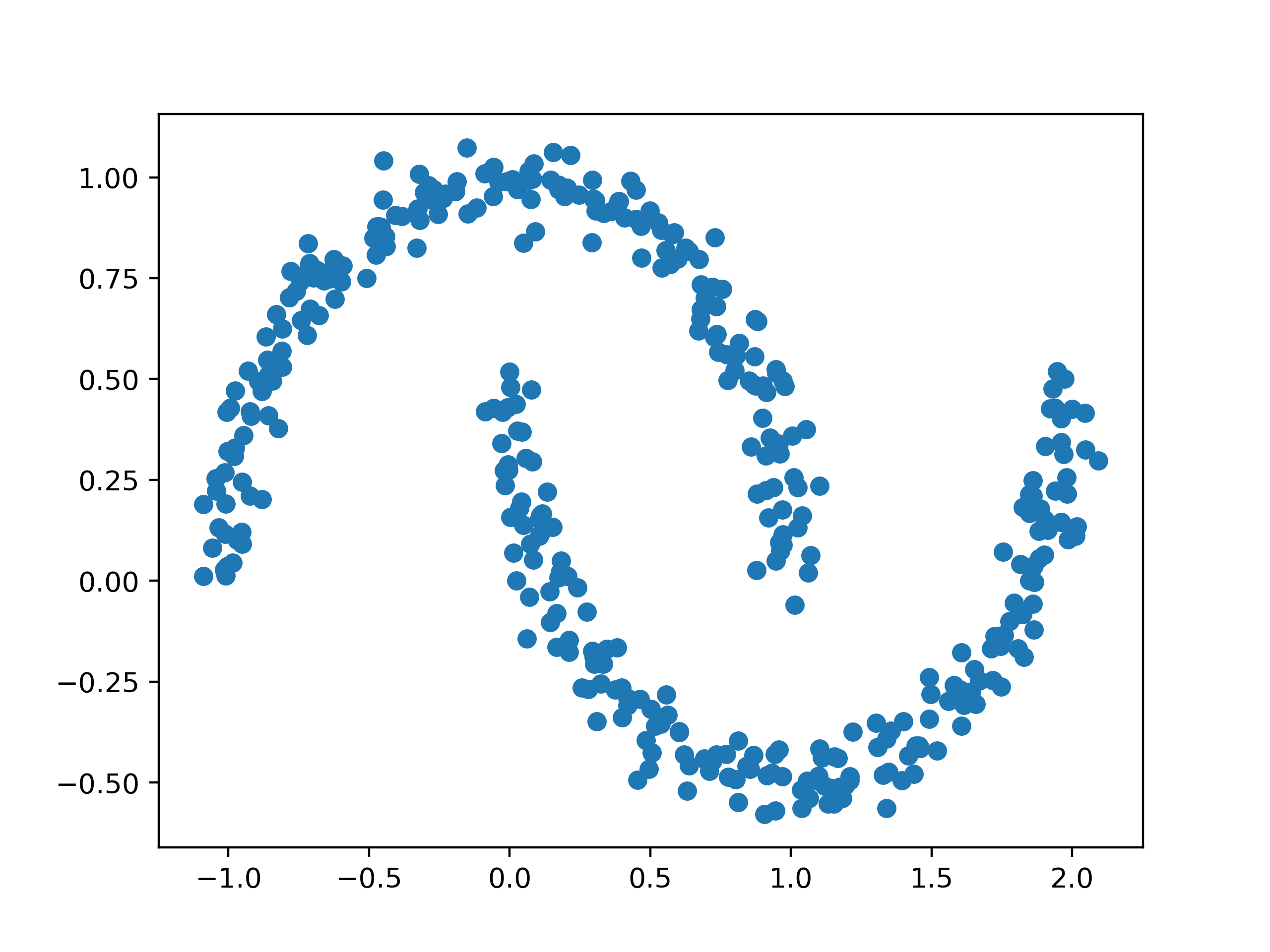}}
\subfloat[][Embedded points]{
\includegraphics[scale = 0.5]{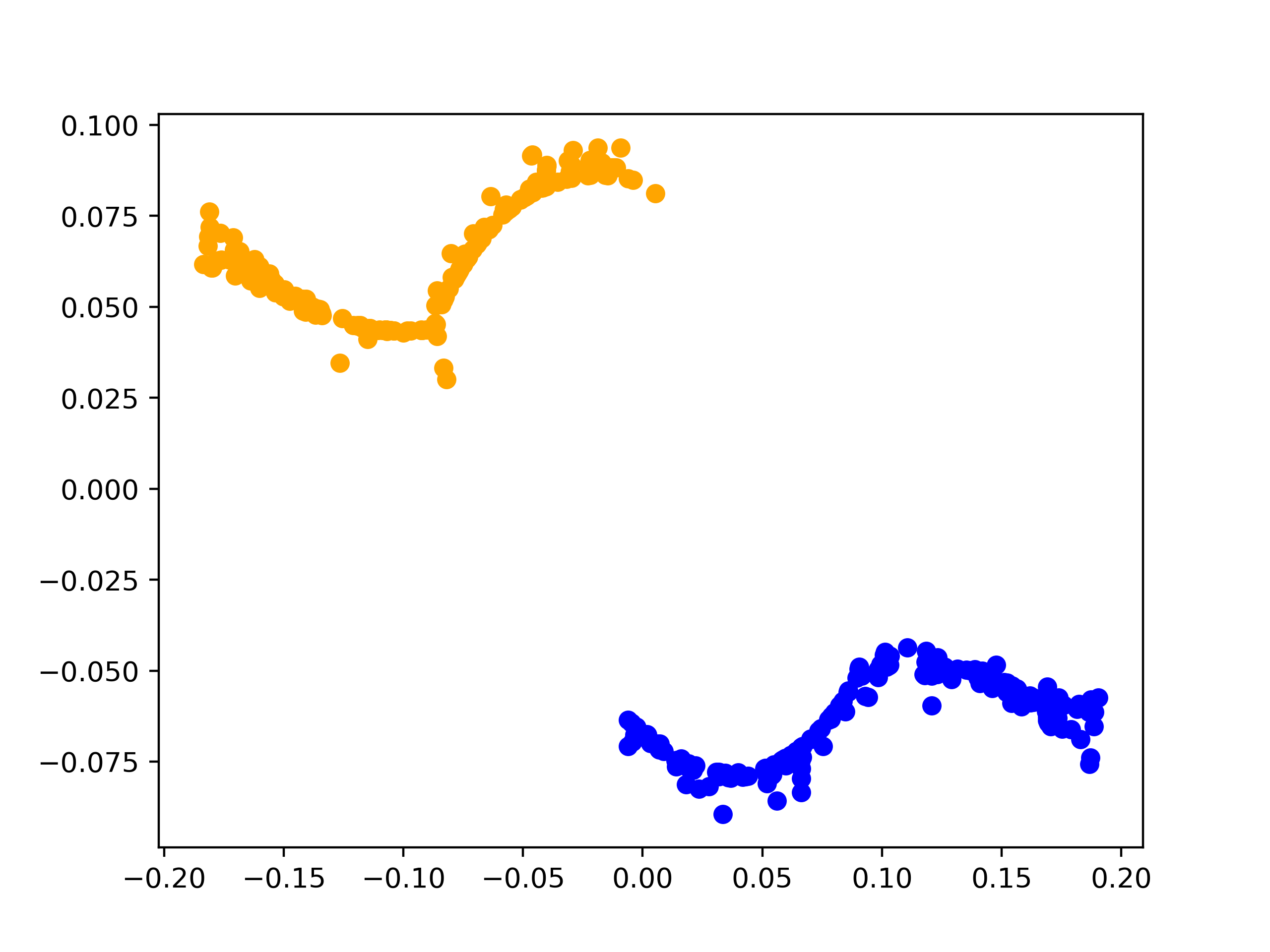}}
\hfill
\vskip 0.1in
\caption{We use the \texttt{make\_moons} function in \texttt{sci-kit learn} with 400 points, \texttt{noise = 0.05}. (a) The original moons in 2D space. (b) The re-embedded points, colored by the clusters as determined via sum-of-norms in this space. Perfect recovery is achieved.}
\label{fig:moon}
\end{figure}

Next, we investigate the performance of our algorithm on clustering problems which are known to be difficult. In particular, examples which are common in literature include anisomorphic blobs, half-moons, and concentric circles, and we demonstrate our method on them here. Figure \ref{fig:moon} shows results for the half-moons dataset (see Appendix \ref{app:figure_son} for anisomorphic blobs and concentric circles). We highlight the ability of sum-of-norms clustering to recover the correct clustering for the latter two, as we know sum-of-norms clustering is not able to cluster them in the original feature space due to nonconvexity. On the other hand, we see that our re-embedding achieves convex clusters, which are then recovered by standard sum-of-norms clustering.

% We use the implementation of sum-of-norms clustering as given in \cite{jiangvavasis}. We calculate shortest-paths using Dijkstra's algorithm, using a single base point. As shown in the figures below, we are able to recover clusters in a variety of non-convex structures and differing numbers of clusters. Note that different colors correspond to different cluster assignments.

% \begin{figure}[H]
% \includegraphics[scale = 0.5]{2d.png}
% \includegraphics[scale = 0.75]{amazing.png}
% \caption{We use the \texttt{make\_moons} function in \texttt{sci-kit learn} with 200 points, \texttt{noise = 0.01}. We set $\gamma = 2000$, $p=5$, and $\lambda = 0.014$.}
% \end{figure}

% \begin{figure}[H]
% \includegraphics[scale = 0.5]{circles.png}
% \includegraphics[scale = 0.75]{circles3d.png}
% \caption{We use the \texttt{make\_circles} function in \texttt{sci-kit learn} with 200 points, \texttt{noise = 0.01}. We set $\gamma = 1000000$, $p=5$, and $\lambda = 0.5$.}
% \end{figure}

% \begin{figure}[H]
% \includegraphics[scale = 0.5]{circles4_2d.png}
% \includegraphics[scale = 0.75]{circles4.png}
% \caption{We call the \texttt{make\_circles} function twice, each with 200 points, \texttt{noise = 0.005} and offset by $(1,2)$. We use the same parameters as 2 circle case: $\gamma = 1000000$, $p=5$, and $\lambda = 0.5$.}
% \end{figure}

\subsection{Other Clustering Algorithms}

Here we use the proposed pipeline approach, namely,
\begin{quotation}
leapfrog distances $\to$ MDS $\to$ dimension-reduced eigenvectors $\to$ clustering algorithm
\end{quotation}
with clustering algorithms other than SON clustering on the final step, namely $k$-means \cite{hartigan1979algorithm}, DBSCAN \cite{ester1996density} , and agglomerative clustering \cite{jain1988algorithms}.

Figure \ref{fig:moon_k} shows the result of $k$-means clustering (for $k=2$) on the half-moons dataset (identical to what was used in the sum-of-norms experiment). We see that $k$-means fails to recover the two moons in the original 2D space, but succeeds when using re-embedding (see Appendix \ref{app:figure_agg_clustering} for analogous results for agglomerative clustering when applied to concentric circles).

\begin{figure}[h]
\vskip -0.2in
\subfloat[][Original points]{
\includegraphics[scale = 0.5]{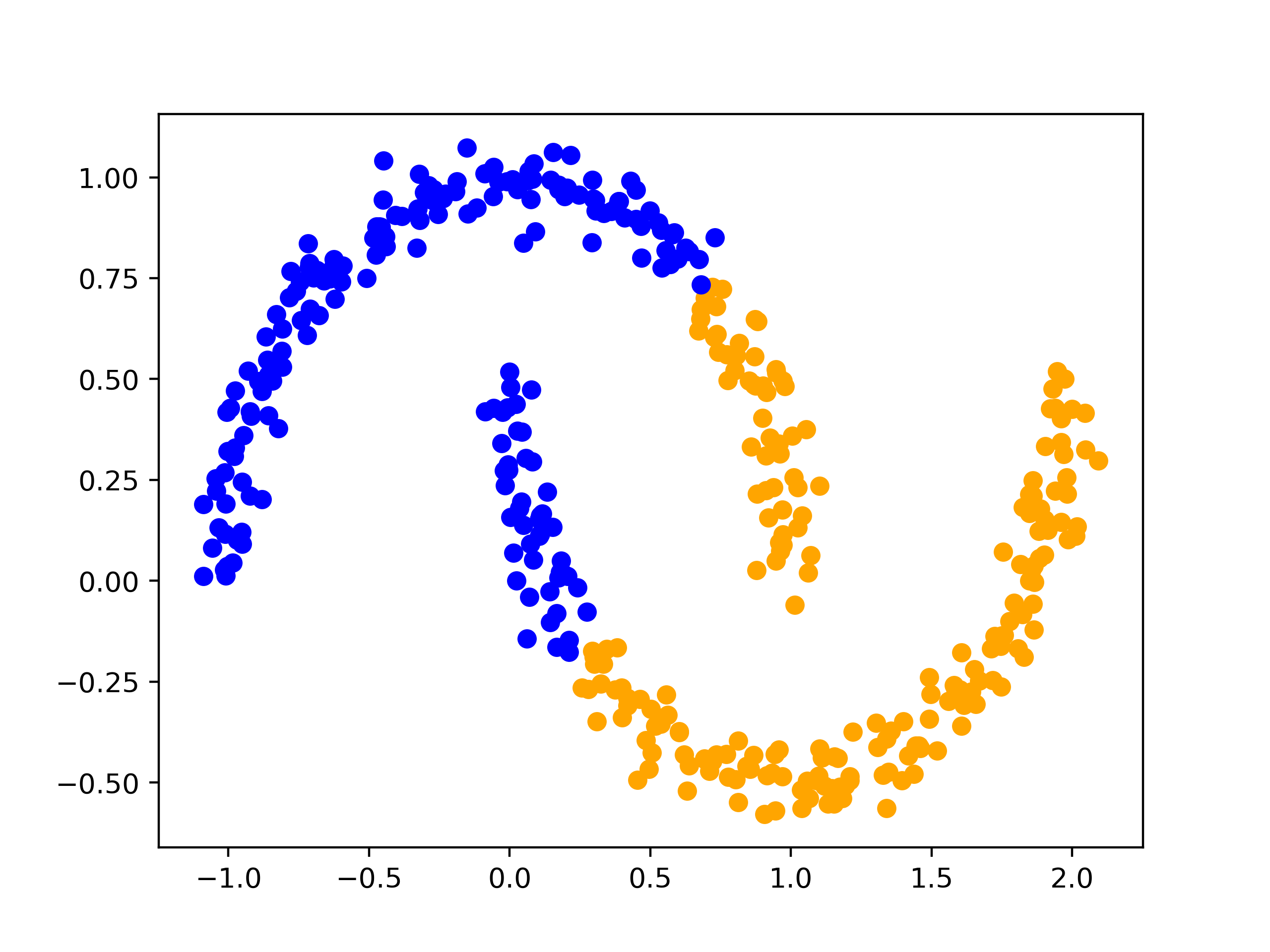}}
\subfloat[][Embedded points]{
\includegraphics[scale = 0.5]{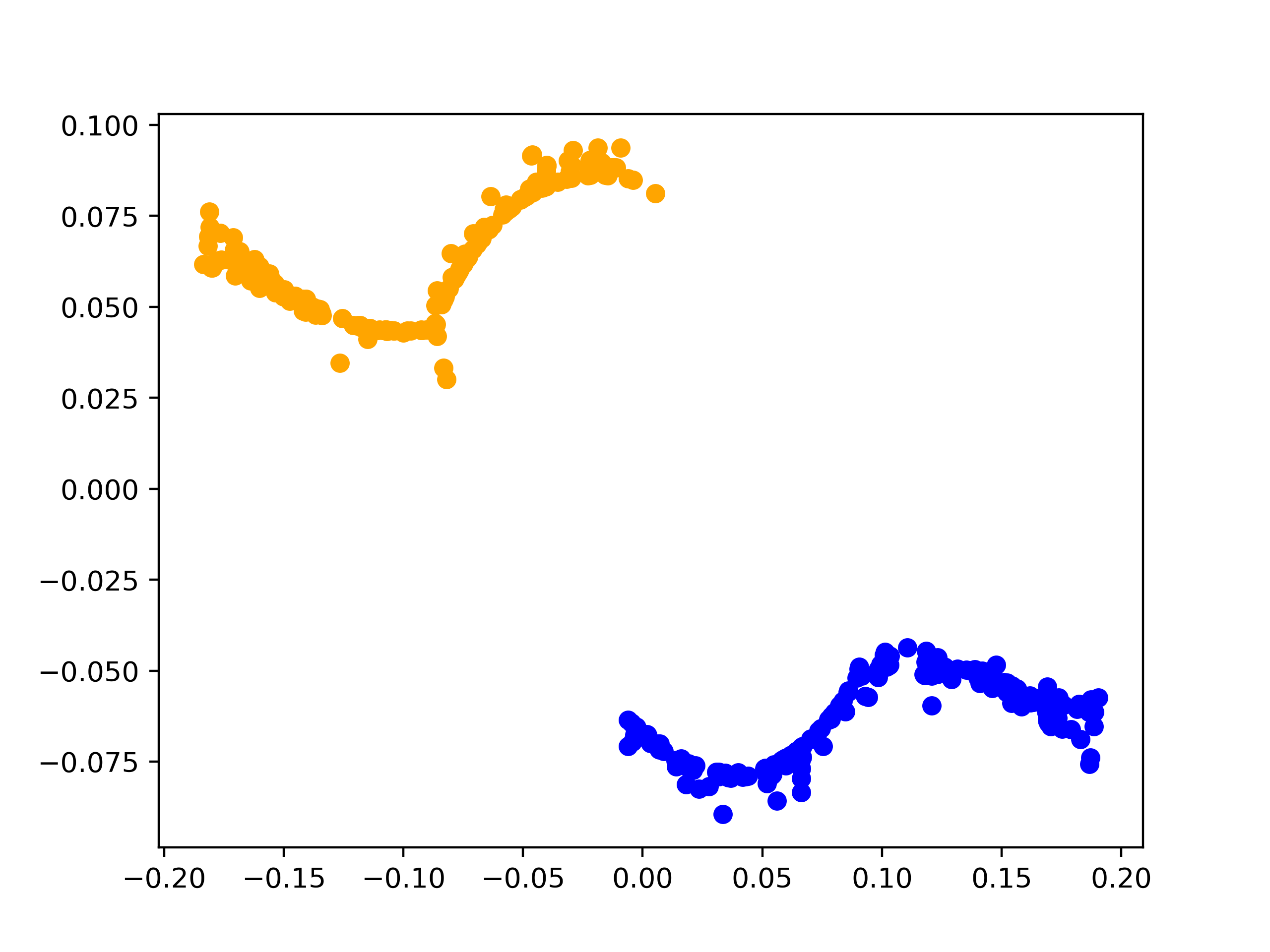}}
\hfill
\vskip 0.1in
\caption{The same moons dataset as in Figure \ref{fig:moon}, but we use $k$-means clustering instead in \texttt{scikit-learn}. (a) The original moons in 2D space, with colors denoting the two clusters. (b) The re-embedded points, colored by the clusters as determined via sum-of-norms in this space. Perfect recovery is achieved.}
\label{fig:moon_k}
\vskip 0.1in
\end{figure}

However, note that DBSCAN was able to recover clusters perfectly for both the half-moons and circles datasets in the original space (as well as the embedded space), and all three algorithms were able to recover the anisomorphic blob dataset perfectly in both cases. We conclude that our re-embedding can add value to use cases besides sum-of-norms clustering, as we see that it results in clustering results which are as good or better than the original space.

\section{Conclusion and future direction}
\label{sec:conc}
In this paper, we developed a method to adapt convex clustering techniques to discover non-convex clusters via a four-step pipeline. We do this by leveraging graphical structure between points to lift the dimension of the given dataset, thereby separating nonconvex clusters into convex clusters. We proved that our algorithm can successfully find clusters under reasonable assumptions about the underlying data, and showed its efficacy on some common non-convex clustering datasets.

Immediate open questions left by this research are as follows.  First, we do not have a precise characterization of the expected leapfrog distance for distributions in dimensions $d>1$.  As mentioned earlier, we conjecture that the behavior is $\Theta(n^{1/d})$. Second, we showed that for mixture of Gaussians even in one dimension, our proposed technique strengthens the sufficient conditions for every parameter set that we tried, but it would be interesting to prove that strengthening holds for all ranges of parameters.  However, prior to that, it would be useful to get stronger sufficient conditions or even exact conditions for recovery in both \cite{jiang2020recovery} and the new results herein.

Another immediate question is using different ingredients in the four-step pipeline. We already showed experiments with other clustering algorithms in Section~\ref{sec:exper}.  Other components are also replaceable, e.g. passing from distances to coordinates can be done using the basepoint technique rather than MDS and truncated eigenvectors.  The basepoint technique means selecting arbitrary distinct points from $\R^d$,  say $\y_1,\ldots,\y_L$, as basepoints and then defining coordinates of a point $\a_i$ to be the $L$-vector $(\mathrm{LF}(\a_i,\y_1)$,\ldots $\mathrm{LF}(\a_i,\y_L))$.  We conducted experiments with this technique (not reported herein) and found that MDS is superior, but we have not thoroughly explored all alternatives.

For future work, we would be interested in comparing our method with other unsupervised learning techniques, such as Isomap \cite{tenenbaum2000global}, which uses geodesic distances within a weighted graph to perform non-linear dimension reduction. Lifting the dimension of the data is also seen in kernel methods, e.g. in support vector machines as in \cite{amari1999improving}. Understanding the relationship between such methods and ours would also be of interest.

\bibliography{optimization}
\bibliographystyle{plain}

\newpage
\appendix
\appendix
\section{Derivation of Expectation of Leapfrog Distance in $\mathbb{R}^2$}
\label{app:ELF}
Let $X_1,\dots,X_n$ be $n$
independent samples from the probability density function $f$. For an arbitrary point $x \in \R$, define the random variable
\[
G(x) := \min \{X_k : X_k > x\} - \max\{X_k : X_k < x\}.
\]
In plain words, this is the distance between the consecutive points of our samples which are immediately to the left and right of $x,$ assuming both exist.
If either is missing, then $G(x)=\infty$.

\begin{lemma}
Let $X_1,\ldots,X_n$ be sampled according to $f$, and let $a,b$ be two samples  where $a < b$.
Then $\mathbb{E}[\mathrm{LF}(a,b)] = \int_a^b G(x) dx$.
\label{lem:LF_G}
\end{lemma}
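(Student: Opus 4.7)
The plan is to show that the identity $\mathrm{LF}(a,b)=\int_a^b G(x)\,dx$ holds almost surely as a pointwise relation on each realization of the samples, and then simply take the expectation of both sides. The key insight is that $G(x)$ is piecewise constant on the interval $[a,b]$, with the breakpoints being exactly the samples lying between $a$ and $b$, and the value on each piece equals the gap length that appears squared in the definition of $\mathrm{LF}(a,b)$.

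Concretely, I would begin by conditioning on the sample values. Since $f$ is a continuous density, with probability one no two samples coincide, so we can sort the samples strictly between $a$ and $b$ and label them $a = c_0 < c_1 < \cdots < c_m < c_{m+1} = b$. By the closed-form expression for the leapfrog distance in $\R^1$ given at the end of Section~\ref{sec:LF_dis},
\begin{equation*}
\mathrm{LF}(a,b) \;=\; \sum_{i=0}^{m} (c_{i+1}-c_i)^2.
\end{equation*}
Next, observe that for any $x$ in the open gap $(c_i,c_{i+1})$, the nearest sample strictly to the left of $x$ is $c_i$ and the nearest sample strictly to the right is $c_{i+1}$, so by the definition of $G$ we have $G(x) = c_{i+1}-c_i$. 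Since the set of sample points has measure zero, this holds for almost every $x\in[a,b]$, and hence
\begin{equation*}
\int_a^b G(x)\,dx \;=\; \sum_{i=0}^{m}\int_{c_i}^{c_{i+1}}(c_{i+1}-c_i)\,dx \;=\; \sum_{i=0}^{m}(c_{i+1}-c_i)^2 \;=\; \mathrm{LF}(a,b).
\end{equation*}

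Because this equality holds for every realization (outside a probability-zero event of coinciding samples), taking expectations on both sides yields the claim. Strictly speaking the right-hand side in the lemma statement should be interpreted as $\mathbb{E}\!\left[\int_a^b G(x)\,dx\right]$, which by Tonelli's theorem equals $\int_a^b \mathbb{E}[G(x)]\,dx$; this interchange is the natural setup for the subsequent computation of $\mathbb{E}[\mathrm{LF}(a,b)]$ used in Theorem~\ref{thm:ELF_U}.

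There is no serious obstacle here — the argument is essentially bookkeeping. The only minor subtlety is handling boundary events (ties among samples, or the case $m=0$ where $a$ and $b$ are consecutive samples), both of which are either probability-zero or degenerate in a way that makes the sum collapse to a single term, giving $\mathrm{LF}(a,b)=(b-a)^2=\int_a^b(b-a)\,dx$ directly.
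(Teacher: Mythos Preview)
Your proof is correct and follows essentially the same route as the paper: both arguments establish the pointwise identity $\mathrm{LF}(a,b)=\int_a^b G(x)\,dx$ by sorting the samples in $[a,b]$, writing each squared gap $(c_{i+1}-c_i)^2$ as $\int_{c_i}^{c_{i+1}}(c_{i+1}-c_i)\,dx$, and recognizing the integrand as $G(x)$ on that subinterval. Your observation that the $\mathbb{E}$ in the statement should properly apply to both sides (and that Tonelli then justifies swapping expectation and integral) is also exactly how the paper uses the lemma downstream.
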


\begin{proof}
Let $X(1), \ldots , X(n)$ be the order statistics. Say that $a=X(k)$ and $b=X(l)$ with $k<l$.

Then
\[
\mathrm{LF}(a,b) := \sum_{i=k+1}^l (X{(i)} - X{(i-1)})^2.
\]
This can be equivalently written as a sum of integrals:
\[
\mathrm{LF}(a,b) = \sum_{i=k+1}^l \int_{X{(i-1)}}^{X{(i)}} (X{(i)} - X{(i-1)}) \,dx.
\]
The latter can be expressed by $G(x)$ as
\begin{align*}
\mathrm{LF}(a,b)&= \sum_{i=k+1}^l \int_{X{(i-1)}}^{X{(i)}} G(x) \,dx =\sum_{i=k+1}^l  \int_{X{(k)}}^{X{(l)}} G(x)\, dx= \int_{a}^{b} G(x) \,dx,
\end{align*}
as desired. %The last approximation is valid as we are ignoring the endpoint behavior.
\end{proof}

Motivated by Lemma \ref{lem:LF_G}, we explore the properties of $G(x)$ in order to establish the behavior of the leapfrog distance. In particular, we find the closed-form expression for the expectation of $G(x)$ in Section~\ref{subsection:E[G(x)]},
for the expection of $\mathrm{LF}(a,b)$ in Sections~\ref{subsec:explf},
and use the results to establish a high-probability bound on the leapfrog distance in Section \ref{subsec:highprob}.

\subsection{Expectation of $G(x)$}
\label{subsection:E[G(x)]}
We now consider the cumulative distribution function of $G(x)$ as a means to establish the expectation of $G(x)$ that will guide later results in this section.  Our analysis is based on the following standard theorem (e.g., refer to Theorem 6.2 and Remark 6.4 in \cite{ghahramani2018fundamentals}).
\begin{equation}
    \mathbb E[G(x)]=\int_0^\infty \mathbb P[G(x)>\epsilon]\,d\epsilon.
    \label{eq:expec_from_cdf}
\end{equation}

We analyze $\mathbb{E}[G(x)]$ for $x\in U$. Of the ways to pick the $n$ data points according to $f$, we will construct a subset $\Pi$ that excludes cases whose probability is exponentially close to 0 as $n\rightarrow\infty$.  The excluded cases are identified in the lemma below.

Our approach of taking the expected value over all samples excluding an exponentially small set of bad cases, which may seem complicated to the reader,  is in fact necessary, as shown by the following example.  Consider evaluating $G(1)$
for $n$ samples of the normal distribution.  Regardless of how large $n$ is chosen, there is a small but positive probability that all samples will be less than 1; in these cases, $G(1)=\infty$.  Since $G(1)=\infty$ occurs with positive probability, $\mathbb E[G(1)]=\infty$.  In order to obtain a finite value of $\mathbb E[G(1)]$, bad cases must be excluded from the average.

\begin{lemma}
Let
\[
\sigma := n^{-0.75}v_U.
\]
Let
\[U_n := \{x\in U:[x-\sigma,x+\sigma]\subseteq U\}.\]
With probability exponentially close to $1$ as $n\rightarrow\infty$, $G(x)\le \sigma$ for all $x\in U_n$.
\label{lem:Gxupperbound}
\end{lemma}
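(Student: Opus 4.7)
My plan is to cover $U$ with a fine partition, show by a Chernoff-type bound combined with a union bound that every cell of the partition contains at least one sample with probability exponentially close to $1$, and then argue geometrically that for any $x\in U_n$ the nearest samples on the two sides of $x$ must each lie within $\sigma/2$, forcing $G(x)\le\sigma$.

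For the partition step, since $U$ is a finite union of closed intervals of total length $v_U$, I would partition each of the $K$ connected components into consecutive closed sub-intervals of length between $\sigma/8$ and $\sigma/4$. This produces at most $N\le 4v_U/\sigma + K$ sub-intervals in total, each contained in a single component of $U$; recalling $\sigma=n^{-0.75}v_U$, we have $N=O(n^{0.75})$.

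For the probability step, each such sub-interval $I\subseteq U$ has length $\ell_I\ge\sigma/8$, and because $f\ge\underline{f}$ on $U$, a single sample lies in $I$ with probability at least $\underline{f}\ell_I\ge\underline{f}\sigma/8$. Thus
\[
\mathbb{P}[\text{no sample lies in }I]\le (1-\underline{f}\sigma/8)^n\le\exp(-n\underline{f}\sigma/8)=\exp\!\left(-\underline{f}v_U n^{0.25}/8\right).
\]
Union-bounding over the $O(n^{0.75})$ sub-intervals, the probability that even one of them is empty is at most $O(n^{0.75})\exp(-\underline{f}v_U n^{0.25}/8)$, which decays exponentially in $n$.

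For the geometric step, condition on the event that every sub-interval contains at least one sample, and fix $x\in U_n$. By definition $[x-\sigma,x+\sigma]\subseteq U$, so this interval lies inside one connected component, and the partition of that component has mesh at most $\sigma/4$. The sub-interval $[x,x+\sigma/2]$ has length $2\cdot(\sigma/4)$, so it contains at least one full cell of the partition: indeed, if $g_i$ is the smallest grid point in $[x,x+\sigma/2]$ then $g_i\le x+\sigma/4$ and $g_{i+1}\le g_i+\sigma/4\le x+\sigma/2$, so $[g_i,g_{i+1}]\subseteq [x,x+\sigma/2]$. Hence there is a sample in $(x,x+\sigma/2]$, and by a symmetric argument a sample in $[x-\sigma/2,x)$, yielding $G(x)\le\sigma$. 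The only delicate point in the plan is this final geometric verification; everything else is routine Chernoff and union-bound bookkeeping.
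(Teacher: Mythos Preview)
Your proposal is correct and follows essentially the same approach as the paper: cover $U$ by a grid of mesh $\Theta(\sigma)$, use the lower bound $f\ge\underline f$ to show each cell receives a sample except with probability $\exp(-c\,n^{0.25})$, union-bound over $O(n^{0.75})$ cells, and then argue that for $x\in U_n$ the neighboring cells on each side of $x$ lie in $U$ and hence contain samples within distance $O(\sigma)$ of $x$. The paper uses a uniform grid on $\R$ with spacing $\sigma/4$ and obtains $G(x)\le 0.75\sigma$, whereas you partition each component of $U$ into cells of length in $[\sigma/8,\sigma/4]$ and obtain $G(x)\le\sigma$; these are cosmetic differences, and your treatment of the component boundaries is arguably slightly cleaner.
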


\begin{proof}
Let $\ldots,x_{-2},x_{-1},x_0,x_1,x_2,\ldots$ be a grid of evenly spaced points in $\R$ separated by $\sigma/4$.
Let $\mathcal{I}_{i,n}$ be the event that there is no data point in $[x_i,x_{i+1}]$.  Let $P:=\{i:[x_i,x_{i+1}]\in U\}$  We see that for $i\in P$,
\begin{align*}
\mathbb{P}[\mathcal{I}_{i,n}] &=\left(1-\int_{x_i}^{x_{i+1}} f(x)\,dx\right)^n \\
&\le \left(1-\sigma\underline{f}/4\right)^n \\
&= \left(1- n^{-0.75}v_U\underline{f}/4\right)^n \\
&\le \left(\exp(- n^{-0.75}v_U\underline{f}/4)\right)^n \\
&= \exp(- n^{0.25}v_U\underline{f}/4).
\end{align*}
In the above chain of inequalities, we used the facts that $f(x)\ge \underline{f}$, $x_{i+1}-x_i=\sigma/4=n^{-0.75}v_U/4$, and, for any $t$, $1+t\le \exp(t)$.

Thus, for $i\in P$, the probability of $\mathcal{I}_{i,n}$ is exponentially small in $n$.  By the union bound, since $|P|=O(n^{0.75})$, it follows that the probability of $\bigcup_{i\in P}\mathcal{I}_{i,n}$ is exponentially small.  Assume therefore that none of these events occurs.

Consider an arbitrary $x\in U_n$, and suppose $x$ lies in the interval $[x_i,x_{i+1}]$.  By definition of $U_n$, it follows that the union of three intervals $[x_{i-1},x_{i+2}]$ lies in $U$, and therefore, by the assumption in the last paragraph, there is a data point in both $[x_{i-1},x_i]$ and $[x_{i+1},x_{i+2}]$.  Therefore, $G(x)\le x_{i+2}-x_{i-1}=0.75\sigma$.
\end{proof}

Let $\Pi$ denote the set of all ways to choose data points $a_1,\ldots,a_n$ according to $f$ such that the exponentially small set of bad cases in which $G(x)\ge \sigma(=n^{-0.75}v_u)$ for some $x\in U_n$ are excluded.  We adopt the notation $\mathbb{E}^\Pi[\cdot]$ and $\mathbb{P}^\Pi[\cdot]$ to indicate expectation or probability taken over $\Pi$ instead of over all ways to choose $a_1,\ldots,a_n$.
We evaluate $\mathbb{E}^\Pi[G(x)]$ using \eqref{eq:expec_from_cdf}.
Since we know that $G(x)\le \sigma$, we can replace the upper limit of integration with $\sigma$, i.e.,
\begin{equation}
\mathbb E^\Pi[G(x)]=\int_0^{\sigma} \mathbb P^\Pi[G(x)>\epsilon]\,d\epsilon.
\label{eq:expec_nh}
\end{equation}

Fix some $x\in U_n$.
To derive the closed-form expression for $\mathbb E^\Pi[G(x)]$, we make use of the following expression for the cumulative distribution function:
\begin{equation}
\begin{aligned}
    &\mathbb P^\Pi[G(x)>\epsilon]=\\
    &\int_{y=x-\frac{\epsilon}{2}}^x n f(y) \left (1 - \int_{z=y}^{y+\epsilon}f(z)\, dz\right)^{n-1} \,dy \\
    &+ \int_{y=x}^{x+\frac{\epsilon}{2}} n f(y) \left (1- \int_{z=y-\epsilon}^y f(z)\,dz\right)^{n-1} \,dy \\
    &\mbox{}+\left (1-\int_{y=x-\frac{\epsilon}{2}}^{x+\frac{\epsilon}{2}} f(y)\,dy \right)^n + R_n.
\end{aligned}
\label{eq:cdf_G}
\end{equation}
The first three terms on the right-hand side of \eqref{eq:cdf_G} sum exactly to $\mathbb P[G(x)>\epsilon]$ as follows.
The first term of \eqref{eq:cdf_G} comes from the case that there exists a point $y$ within $\frac{\epsilon}{2}$ of $x$ to the left and all other points either are to the left of $y$ or lie at least $\epsilon$ away from $y$  to the right. %tj: added "all other points either are to the left of $y$ or..."
The second comes from the analogous case that there exists a point within $\frac{\epsilon}{2}$ of $x$ to the right and all other points are either to the right of $y$ or lie at least $\epsilon$ away from $y$ to the left. The third term comes from the case that no points exist within $\frac{\epsilon}{2}$ of $x$ to the left or right.
Finally, the last term $R_n$ arises from the difference between $\mathbb P[\cdot]$ and $\mathbb P^\Pi[\cdot]$.  For the cases excluded from $\Pi$, it is possible that $G(x)>\epsilon$ for all of them or for none of them, but this alters $\mathbb P[G(x)>\epsilon]$ only by the probability that the case is among the excluded cases.  In other words, $|R_n|\le O(\exp(-\mathrm{const}\cdot n^{0.25})).$

If we substitute \eqref{eq:cdf_G} into \eqref{eq:expec_nh}, then we obtain
\[
\mathbb E^\Pi[G(x)]=T_1+T_2+T_3+R_n',
\]
where $T_1$ stands for the integral with respect to $\epsilon$ of the first term on the right-hand side of \eqref{eq:cdf_G} over $[0,\sigma]$, and similarly for $T_2$ and $T_3$, while $|R_n'|\le O(n^{-0.75}\exp(-\mathrm{const}\cdot n^{0.25}))$.

We now approximate the integrands of $T_1,T_2,T_3$ using the mean value theorem for integrals.
Consider the first term of \eqref{eq:cdf_G}; by
the continuity of $f$, we may replace the inner integral as follows
\begin{align*}
&\int_{y=x-\frac{\epsilon}{2}}^x n f(y) \left (1 - \int_{z=y}^{y+\epsilon}f(z)\, dz\right)^{n-1}\, dy = \int_{y=x-\frac{\epsilon}{2}}^x n f(y) \left (1 - f(\xi(y)) \epsilon \right)^{n-1} \,dy
\end{align*}
for $\xi(y) \in [y, y+\epsilon]$. Using the mean value theorem again, we get the first term of the right-hand side of \eqref{eq:cdf_G} equals
$$n f(\eta) (1-f(\xi) \epsilon)^{n-1} \frac{\epsilon}{2},$$
for some $\eta \in [x-\frac{\epsilon}{2}, x]$ and $\xi \in [x-\frac{\epsilon}{2},x+\epsilon]$. Now finally we can
write
\[
T_1:=\int_{0}^\sigma n f(\eta) (1-f(\xi)\epsilon)^{n-1} \frac{\epsilon}{2} \, d\epsilon
\]
We obtain the identical formula for $T_2$ except that $\eta$ and $\xi$ are different functions of $\epsilon$ in the right-hand side.

We may also use the mean value theorem for the third term:
\begin{align*}
    \left (1-\int_{y=x-\frac{\epsilon}{2}}^{x+\frac{\epsilon}{2}} f(y)dy \right)^n = (1-f(\xi)\epsilon)^n
\end{align*}
for some $\xi \in [x-\frac{\epsilon}{2}, x+\frac{\epsilon}{2}]$
and thus
\[
T_3:=\int_0^\sigma(1-f(\xi)\epsilon)^n \,d\epsilon.
\]

We can now derive upper and lower bounds for all three terms $T_1, T_2, T_3$.
Start with $T_1$. Denote $\overline{f}(x):= \sup\{ f(y): y \in [x-\sigma, x+\sigma]\}$ and analogously $\underline{f}(x) = \inf\{f(y) :y \in [x-  \sigma, x+ \sigma]\} .$  Note that $\underline{f}(x)\ge\underline{f}$ by definition of $U_n$.
\begin{align*}
    T_1 &\leq \int_0^\sigma n \overline{f}(x)(1-\underline{f}(x)\epsilon)^{n-1} \frac{\epsilon}{2} d\epsilon\\
    &= -\frac{\overline{f}(x)}{\underline{f}(x)} (1 - \underline{f}(x) \epsilon)^n \frac{\epsilon}{2} \Bigg|_0^\sigma + \int_{0}^\sigma \frac{\overline{f}(x)}{\underline{f}(x)} (1 - \underline{f}(x)\epsilon)^n \frac{1}{2} d\epsilon \\
    &= - \frac{\overline{f}(x)}{\underline{f}(x)}\left(1 - \underline{f}(x)\sigma\right)^n \frac{\sigma}{2} - \frac{\overline{f}(x)}{2(n+1)\underline{f}(x)^2}(1 - \underline{f}(x)\epsilon)^{n+1} \Bigg|_0^\sigma\\
    &= -\frac{\overline{f}(x)}{\underline{f}(x)}\left(1 - \underline{f}(x)\sigma\right)^n \frac{\sigma}{2} - \frac{\overline{f}(x)}{2(n+1)\underline{f}(x)^2}\left(1 - \underline{f}(x)\sigma\right)^{n+1} + \frac{\overline{f}(x)}{2(n+1)\underline{f}(x)^2}.
\end{align*}
Note that integration by parts was used on the second line.
% {\bf Question: The third line in the above chain was previously `$\le$' on Overleaf, but it seems to me to be exact.}
% tj: I believe it is exact

We observe the first two terms vanish exponentially fast as $n\to \infty$ since $\sigma=\mathrm{const}\cdot n^{-0.75}$.  Furthermore, $\overline{f}(x) \to f(x)$ and $\underline{f}(x) \to f(x)$ by the continuity of $f$ and the fact that $n^{-0.75}\rightarrow 0$, and therefore the last term behaves like $1/(2(n+1)f(x))$ as $n\rightarrow \infty$.

The above chain of inequalities may be reversed if we interchange the roles of $\underline{f}(x)$ and $\overline{f}(x)$.  Therefore,
\[
T_1 = \frac{1}{2(n+1)f(x)}+o\left(\frac{1}{n}\right).
\]
The speed of decay of the remainder term $o\left(\frac{1}{n}\right)$ in this bound depends on the strength of the continuity assumption imposed on $f$. For example, if we assume that $f$ is locally Lipschitz-continuous at $x$, then $\underline{f}(x)\ge f-L\sigma$ and $\overline{f}(x)\le f+L \sigma$. Analyzing the above remainders, and assuming $n$ is sufficiently large, we can derive the formula
that the remainder is more strongly bounded as $3\cdot L v_U/(n^{1.75}f(x)^2)$.
Since the formula for $T_2$ is identical to that of $T_1$, the same bound applies to $T_2$.

For $T_3$, note that
\begin{align*}
    T_3 \leq \frac{-1}{(n+1)\underline{f}(x)} (1 - \underline{f}(x) \epsilon)^{n+1} \Bigg|_0^\sigma    \leq \frac{1}{(n+1)\underline{f}(x)},
\end{align*}
where $\underline{f}(x) \to f(x)$ as $n \to \infty$
by construction and by continuity of $f$.  We obtain the opposite inequality by replacing $\underline{f}(x)$ with $\overline{f}(x)$.  Therefore, as in the previous analysis,
\[
T_3 = \frac{1}{(n+1)f(x)} + o\left(\frac{1}{n}\right),
\]
where, as above, we can more strongly bound the remainder as $3 Lv_U/(n^{1.75}f(x)^2)$ if we assume local Lipschitz continuity.

Adding the contributions from $T_1$, $T_2$, and $T_3$ leads to the following expression for $\mathbb E^\Pi[G(x)]$.
\begin{lemma}
The expectation of $G(x)$ for $x\in U_n$ has the following expression asymptotically as $n \to \infty$:
\begin{equation}
    \mathbb E^\Pi[G(x)] = \frac{2}{n f(x)} + o\left(\frac{1}{n}\right).
    \label{eq:E[G(x)]}
\end{equation}
The remainder is bounded in magnitude by
\[
\frac{9Lv_U}{ n^{1.75}f(x)^2}\]
if $f$ is locally $L$-Lipschitz continuous.
\label{lem:E[G(x)]}
\end{lemma}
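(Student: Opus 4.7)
The plan is to start from the general identity $\mathbb{E}^\Pi[G(x)] = \int_0^\infty \mathbb{P}^\Pi[G(x)>\epsilon]\,d\epsilon$ and use Lemma~\ref{lem:Gxupperbound}, which ensures $G(x)\le\sigma=n^{-0.75}v_U$ on $\Pi$, to truncate the integration range to $[0,\sigma]$. The tail beyond $\sigma$ contributes something of order $\exp(-\mathrm{const}\cdot n^{0.25})$, which is swept into the $o(1/n)$ remainder. The payoff of the truncation is that within $[0,\sigma]$, all the sampling events we need to track stay confined to a small window around $x$ that sits inside $U$, so we may invoke continuity (or local Lipschitz continuity) of $f$ there.

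Next I would derive an explicit expression for $\mathbb{P}^\Pi[G(x)>\epsilon]$ by conditioning on the location of the nearest sampled point to $x$: either there is a sample $y\in[x-\epsilon/2,x]$ with no other samples in $[y,y+\epsilon]$, or symmetrically a sample $y\in[x,x+\epsilon/2]$ with no other samples in $[y-\epsilon,y]$, or no sample lies in $[x-\epsilon/2,x+\epsilon/2]$ at all. These three cases are mutually exclusive and exhaustive (up to the exponentially small probability of being outside $\Pi$, which I would absorb into a remainder $R_n$). This gives a display of the form of \eqref{eq:cdf_G}, after which integrating in $\epsilon$ yields three pieces $T_1,T_2,T_3$.

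For each $T_i$ I would replace the probability-density integrals $\int_y^{y+\epsilon}f$ etc.\ by $f(\xi)\epsilon$ via the mean value theorem for integrals, with $\xi$ lying in an interval of length at most $\sigma$ around $x$; and similarly for the outer integral in $y$. Letting $\underline{f}(x),\overline{f}(x)$ denote the inf and sup of $f$ on $[x-\sigma,x+\sigma]$, I would sandwich $T_1$ (and $T_2$, which has the same form) between the integrals $\int_0^\sigma n\overline{f}(x)(1-\underline{f}(x)\epsilon)^{n-1}\tfrac{\epsilon}{2}\,d\epsilon$ and the corresponding expression with $\underline{f},\overline{f}$ swapped. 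Integration by parts handles both sides: the boundary term at $\epsilon=\sigma$ contains the factor $(1-\underline{f}(x)\sigma)^n$, which is exponentially small since $\underline{f}(x)\sigma\ge\underline{f}\cdot n^{-0.75}v_U$ forces the base below $1-\mathrm{const}\cdot n^{-0.75}$ and the $n$-th power then decays like $\exp(-\mathrm{const}\cdot n^{0.25})$. The surviving contribution is $\overline{f}(x)/(2(n+1)\underline{f}(x)^2)$ from the upper bound, and the analogous lower bound, both of which converge to $1/(2(n+1)f(x))$ by continuity of $f$. An identical integration by parts applied to $T_3=\int_0^\sigma(1-f(\xi)\epsilon)^n\,d\epsilon$ gives $T_3=1/((n+1)f(x))+o(1/n)$. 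Summing $T_1+T_2+T_3$ yields $2/((n+1)f(x))+o(1/n)=2/(nf(x))+o(1/n)$, which is the asserted leading term.

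The main obstacle will be nailing down the explicit constant $9Lv_U/(n^{1.75}f(x)^2)$ for the Lipschitz remainder, rather than just showing it is $o(1/n)$. Under the local Lipschitz hypothesis, $\overline{f}(x)\le f(x)+L\sigma$ and $\underline{f}(x)\ge f(x)-L\sigma$ with $\sigma=n^{-0.75}v_U$, so the ratio $\overline{f}(x)/\underline{f}(x)^2$ expands as $1/f(x)+O(Lv_U/(n^{0.75}f(x)^2))$. Combining this with the $1/(n+1)$ factors from each $T_i$ and carefully adding three such contributions (with the exponentially decaying boundary terms swallowed into the inequality) produces a remainder of the form $C\cdot Lv_U/(n^{1.75}f(x)^2)$ for some modest constant $C$; I would verify $C\le 9$ by tracking the constants through the three integrals, which is routine but must be done carefully because each of $T_1,T_2,T_3$ contributes and the mean-value-theorem points $\xi,\eta$ are only controlled within intervals of length $\sigma$.
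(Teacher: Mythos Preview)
Your proposal is correct and follows essentially the same approach as the paper: truncate the tail integral to $[0,\sigma]$ via Lemma~\ref{lem:Gxupperbound}, decompose $\mathbb P^\Pi[G(x)>\epsilon]$ into the three cases of \eqref{eq:cdf_G}, apply the mean value theorem and integrate by parts to evaluate each $T_i$, and then track the Lipschitz remainder via $\overline{f}(x)\le f(x)+L\sigma$, $\underline{f}(x)\ge f(x)-L\sigma$ to obtain the constant $9$. The only minor slip is that on $\Pi$ the tail beyond $\sigma$ contributes exactly zero (not merely an exponentially small amount), so the truncation is exact; the exponentially small correction $R_n$ instead arises from the discrepancy between $\mathbb P$ and $\mathbb P^\Pi$ inside $[0,\sigma]$.
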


Note that we changed $n+1$ to $n$ in the denominator since the difference between $1/n$ and $1/(n+1)$ is already smaller than the remainder $O(n^{-1.75})$.

\subsection{Expectation of Leapfrog distance}
\begin{manualtheorem}{\ref{thm:ELF_U}}
    The expectation of $\mathrm{LF}(a,b)$ has the following expression asymptotically as $n \to \infty$:
    \begin{equation}
    \mathbb E[\mathrm{LF}(a,b)]=\frac{2}{n}\int_a^b \frac{dx}{f(x)} + o(1/n),
    % \label{eq:ELF_thm}
    \end{equation}
    for $a,b \in U$, $a<b$ such that $[a,b]\subseteq U$.
    The remainder is bounded in magnitude by
    $$
    10n^{-1.5}v_U^2
    $$
    if $f$ is locally $L$-Lipschitz continuous.
\end{manualtheorem}

We start with deriving the expectation of $\mathrm{LF}(a,b)$ in a more restricted setting that $[a,b]\in U_n$, which implies that there is a positive distance between $a,b$ and the boundary $U$. The expectation of $\mathrm{LF}(a,b)$ follows the following lemma.

\begin{lemma}
    The expectation of $\mathrm{LF}(a,b)$ has the following expression asymptotically as $n \to \infty$:
    \begin{equation*}
    \mathbb E[\mathrm{LF}(a,b)]=\frac{2}{n}\int_a^b \frac{dx}{f(x)} + o(1/n),
    \end{equation*}
    for $a,b \in U_n$, $a<b$ such that $[a,b]\subseteq U_n$.
    The remainder is bounded in magnitude by
    $$
    10L v_U(b-a)/(n^{1.75}\underline{f}^2)
    $$
    if $f$ is locally $L$-Lipschitz continuous.
    \label{lem:ELF_Un}
\end{lemma}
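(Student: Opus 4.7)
The plan is to combine Lemma \ref{lem:LF_G} with Lemma \ref{lem:E[G(x)]} through integration. The proof of Lemma \ref{lem:LF_G} in fact establishes the pointwise identity $\mathrm{LF}(a,b)=\int_a^b G(x)\,dx$ (the displayed statement with $\mathbb{E}$ on the left follows by taking expectation of this identity). Applying Tonelli's theorem to the nonnegative integrand gives
\[
\mathbb{E}[\mathrm{LF}(a,b)] \;=\; \int_a^b \mathbb{E}[G(x)]\,dx.
\]
The integrand on the right-hand side almost matches the quantity that Lemma \ref{lem:E[G(x)]} controls, except that lemma computes only the \emph{conditional} expectation $\mathbb{E}^\Pi[G(x)]$ over the good event $\Pi$ from Lemma \ref{lem:Gxupperbound}. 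Bridging this gap is the one delicate step.

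To convert $\mathbb{E}$ to $\mathbb{E}^\Pi$, I would exploit the deterministic upper bound $\mathrm{LF}(a,b)\le (b-a)^2 \le v_U^2$, which holds because the direct edge $(a,b)$ is always an admissible path of cost $(b-a)^2$ in the complete leapfrog graph, so the shortest path cannot be longer. Combined with the tail bound $\mathbb{P}[\bar\Pi]\le \exp(-c n^{0.25})$ supplied by Lemma \ref{lem:Gxupperbound}, the law of total expectation yields
\[
\bigl|\mathbb{E}[\mathrm{LF}(a,b)] - \mathbb{E}^\Pi[\mathrm{LF}(a,b)]\bigr| \;\le\; v_U^2 \exp(-c n^{0.25}),
\]
which is dominated by every polynomial in $1/n$ and will be absorbed into the stated remainder.

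The rest is a direct substitution. For each $x\in[a,b]\subseteq U_n$, Lemma \ref{lem:E[G(x)]} provides, under the local Lipschitz assumption,
\[
\mathbb{E}^\Pi[G(x)] \;=\; \frac{2}{nf(x)} + r_n(x), \qquad |r_n(x)|\le \frac{9Lv_U}{n^{1.75}f(x)^2}.
\]
Integrating over $[a,b]$ produces the advertised leading term $\frac{2}{n}\int_a^b \frac{dx}{f(x)}$. Since $f(x)\ge \underline{f}$ on $[a,b]\subseteq U_n\subseteq U$, the remainder integrates to at most $9Lv_U(b-a)/(n^{1.75}\underline{f}^2)$; absorbing the exponentially small $\Pi$-correction bumps the constant from $9$ to $10$ for $n$ sufficiently large, matching the claim.

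The main (and really only) obstacle is the $\mathbb{E}\to\mathbb{E}^\Pi$ conversion, because Lemma \ref{lem:E[G(x)]} is stated conditionally on $\Pi$ while the conclusion of the lemma is about the unconditional expectation. The crude deterministic estimate $\mathrm{LF}(a,b)\le (b-a)^2$ is what makes that conversion quantitatively painless, and I would highlight it at the start of the proof so that Fubini and the substitution become the routine remainder of the argument.
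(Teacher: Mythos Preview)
Your proposal is correct and follows essentially the same route as the paper: use the pointwise identity $\mathrm{LF}(a,b)=\int_a^b G(x)\,dx$ from Lemma~\ref{lem:LF_G}, swap expectation and integral, substitute the estimate for $\mathbb{E}^\Pi[G(x)]$ from Lemma~\ref{lem:E[G(x)]}, and then reconcile $\mathbb{E}^\Pi$ with the unconditional $\mathbb{E}$ via the deterministic bound $\mathrm{LF}(a,b)\le v_U^2$ together with the exponentially small mass of $\bar\Pi$. The only cosmetic difference is ordering: the paper computes $\mathbb{E}^\Pi[\mathrm{LF}(a,b)]$ first and then passes to $\mathbb{E}$, whereas you open with the unconditional Tonelli identity before making the same $\Pi$-correction at the level of $\mathrm{LF}(a,b)$; your initial line $\mathbb{E}[\mathrm{LF}(a,b)]=\int_a^b\mathbb{E}[G(x)]\,dx$ is therefore not actually used, but it is harmless.
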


\begin{proof}[Proof of Lemma \ref{lem:ELF_Un}]
    Let us first derive the expected value over $\Pi$ of $\mathrm{LF}(a,b)$ by combining Lemmas~\ref{lem:LF_G} and \ref{lem:E[G(x)]} with linearity of expectation:
    \begin{equation}
    \begin{aligned}
    \mathbb{E}^\Pi[\mathrm{LF}_n(a,b)]&= \mathbb{E}^\Pi\left [\int_a^b G(x)\, dx \right]
    = \int_a^b \mathbb{E}^\Pi[G(x)]\,dx
    = \frac{2}{n} \int_a^b \frac{dx}{f(x)} + R,
    \end{aligned}
    \label{eq:ELF1}
    \end{equation}
    where $|R|\le 9Lv_U(b-a)/(n^{1.75}\underline{f}^2)$ for a Lipschitz-continuous function $f$.
    % tj I added f is lipschitz continuous here as well.

    Now, we consider the cases when the samples may lie outside the set $\Pi$.  Let $\bar\Pi$ denote the complement of $\Pi$ (i.e., ways to choose $a_1,\ldots,a_n$ so that for at least one $x\in U_n$, $G(x)>\sigma$).    For a sample of $\bar \Pi$, we have only the weak estimate $\mathrm{LF}(a,b)\in[0,v_U^2]$ since the length of $[a,b]$ is at most $v_U$.  The lower bound applies because distance is nonnegative, and the upper bound applies to the worst case that no data points lie in the interior of $[a,b]$, and therefore the leapfrog distance involves a single hop from $a$ to $b$. Thus, we have
    \begin{align}
    \mathbb E[\mathrm{LF}(a,b)] &=
    \mathbb P[\Pi]\mathbb\cdot \mathbb E^\Pi[\mathrm{LF}(a,b)] +
    \mathbb P[\bar\Pi]\cdot\mathbb E^{\bar\Pi}[\mathrm{LF}(a,b)] \notag\\
    &=(1-O(\exp(-O(n^{0.25})))) \cdot
    \left(\frac{2}{n}\int_a^b \frac{dx}{f(x)} + R
    \right) + O(\exp(-O(n^{0.25})))X\notag \\
    &=\frac{2}{n}\int_a^b \frac{dx}{f(x)} + R,\label{eq:ELF2}
    \end{align}
    where $|R|$ is bounded by $10L v_U(b-a)/(n^{1.75}\underline{f}^2)$ again for a Lipschitz function $f$.
    In this derivation, $X$ denotes an unknown number in $[0,v_U^2]$.
    We applied \eqref{eq:ELF1}, the fact that $\mathbb P[\bar\Pi]=O(\exp(-O(n^{0.25})))$, and the fact that exponentially small remainder terms can be absorbed asymptotically by the polynomially small remainder $O(n^{-1.75})$ by increasing the coefficient from 9 to 10.
\end{proof}

We can lift the restriction that $a,b\in U_n$ and still obtain a uniform estimate of $\mathbb E[\mathrm{LF}(a,b)]$.  We need to lift this restriction for the result in Subsection \ref{subsec:highprob}.

\begin{proof}[Proof of Theorem \ref{thm:ELF_U}]
    Focus on one of the intervals in $U$, say $[a_0,b_0]$. Assume two data points $a,b$ satisfy $a_0\le a < b\le b_0$.

    Let $\Omega$ denote the set of ways to choose $a_1,\ldots,a_n$ such that there is at least one data point in $[a_0+\sigma,a_0+2\sigma]$ and at least one in $[b_0-2\sigma,b_0-\sigma].$ (Note: assume that $n$ is chosen sufficiently large so that $4\sigma < b_0-a_0$.)
    Call these data points $a',b'$. Using the same analysis as in the proof of Lemma~\ref{lem:Gxupperbound}, these points $a',b'$ exist with probability exponentially close to $1$ as $n\rightarrow\infty$, and thus the complement of $\Omega$, say $\bar\Omega$, is exponentially small.

    Returning now to $a,b\in[a_0,b_0]$,  if both $a,b$ lie in $[a',b']$, then the equation \eqref{eq:ELF2} in the previous subsection shows that that
    $$\mathbb E^\Omega[\mathrm{LF}(a,b)]=\frac{2}{n}\int_a^b \frac{dx}{f(x)} + R,$$
    where $R$, as in the last subsection, satisfies
    $|R|\le 10Lv_U(b-a)/(n^{1.75}\underline{f}^2)$ for a Lipschitz-continuous function $f$, and otherwise $R=o(1/n)$.

    If $a<a'$, then
    \[
    \mathrm{LF}(a,a') \le |a-a'|^2\le 4\sigma^2=4n^{-1.5}v_U^2.
    \]
    The same holds for $\mathrm{LF}(b',b)$ in the case that $b>b'$.
    Thus, the leapfrog distance is increased by at most $8n^{-1.5}v_U^2$ compared to $\mathrm{LF}(a',b')$.  We will account for this increase by adding it to the remainder estimate $R$.  Note that $n^{-1.5}$ asympotically dominates $n^{-1.75}$ from \eqref{eq:ELF2}, so we will account for the sum of both terms by increasing the coefficient $8$ to $9$.  Thus, for $a_0\le a < b\le b_0$,
    \[
    \mathbb E^\Omega[\mathrm{LF}(a,b)]=\frac{2}{n}\int_a^b \frac{dx}{f(x)} + R
    \]
    where $R=o(1/n)$, and for a Lipschitz function,
    $|R|\le 9n^{-1.5}v_U^2$.

    Finally, adding in a term for the ways to choose $a_1,\ldots,a_n$ not in $\Omega$ can increase the expected value by only an exponentially small amount as in the last subsection, so we absorb this into the remainder by increasing the coefficient as in last subsection to estimate
    \[
    \mathbb E[\mathrm{LF}(a,b)]=\frac{2}{n}\int_a^b \frac{dx}{f(x)} + R
    \]
    where $|R|\le 10n^{-1.5}v_U^2$ for a Lipschitz function, else $R=o(1/n)$ in general.
\end{proof}

\section{Proof of high probability bound for leapfrog distance (Theorem \ref{thm:LF_main})}
\label{app:highprob}
\begin{manualtheorem}{\ref{thm:LF_main}}
 Assume that probability density function $f$ is Lipschitz continuous on its support.  Let $a,b$ be two data points drawn from an interval $I$ with the property that $f(x)\in[\underline{f},\overline{f}]$ for all $x\in I$ such that $\underline{f}>0$.
There exist constants $c_1, c_2, c_3>0$ and integer $n_0$, all of which may depend on $f$ and $I$, such that for all data points $a,b$ satisfying $a<b$, $a,b\in I$ and assuming $n>n_0$,
\[
\mathbb P\left[|\mathrm{LF}_n(a,b) - \mathbb E\left[\mathrm{LF}_n(a,b)\right]| \ge C n^{-1.04}\right] \le c_1 \cdot \exp(-c_2 n^{c_3}),
\]
where the coefficient $C$ is specified in \eqref{eq:mainhighprobineq} below.
\end{manualtheorem}
% =============== Appendix Section3.3_LF_main===========================
\begin{proof}
%[Proof of Theorem \ref{thm:LF_main}]
Divide the interval $[a,b]$ into $n^{0.9}$ subintervals, each of length $(b-a) n^{-0.9}$.
(Assume $n^{0.9}$ is an integer to simplify notation.) Let the $i$th subinterval be $[a_i, a_{i+1}]$, so that $a_1=a$ and $a_{n^{0.9}+1}=b$, and let $n_i$ denote the number of samples lying in  $[a_i, a_{i+1}]$.

We intend to apply Hoeffding's inequality to sum the leapfrog contribution from the subintervals.
Recall Hoeffding's inequality:  If $X_1,\ldots,X_n$ are independent nonnegative random variables all
bounded above by $u$, then for any $t>0$,
\[
\mathbb P\left[\left|\sum_{i=1}^n X_i-\sum_{i=1}^n\mathbb E[X_i]\right|>t\right]\le 2\exp(-2t^2/(nu^2)).
\]
Hoeffding's inequality requires that the summands be independent, but there is a weak coupling between the $n^{0.9}$ subintervals defined above due to the fact that the total number of samples $n$ is prespecified.  Therefore, the analysis requires additional arguments to assert independence.

Consider the following way to draw $n$ samples. Given prespecified nonnegative integers $n_1,\ldots,n_{n^{0.9}}$ that sum to at most $n$, choose precisely $n_1$ points at random from $[a_1,a_2]$ according to $f$ (restricted to this interval), then $n_2$ from $[a_2,a_3]$, etc., and the remaining $n-n_1-\cdots-n_{0.9}$ from the support of $f$ outside $[a,b]$. Let this sampling rule be denoted $\mathcal{S}(n;n_1,\ldots,n_{n^{0.9}})$.  The original sampling rule ($n$ points chosen according to $f$) is equivalent to choosing $n_1,\ldots, n_{n^{0.9}}$ at random according to the appropriate multinomial distribution and then sampling according to $\mathcal{S}(n;n_1,\ldots,n_{n^{0.9}})$.
Focus for now on a particular
$\mathcal{S}(n;n_1,\ldots,n_{n^{0.9}})$, where we assume that all of $n_1,\ldots,n_{n^{0.9}}$ are positive.  In other words, we discard from the analysis those cases in which any $n_i$ is zero.  It will be shown below (when we discard many more cases) that the case when an $n_i$ is zero has exponentially small probability.

Let $l_1,\ldots,l_{n^{0.9}}$ be the leftmost data points in $[a_1,a_2]$, \ldots, $[a_{n^{0.9}}, a_{n^{0.9}+1}]$, and similarly let $r_1,\ldots,r_{n^{0.9}}$ be the rightmost data points.
Recall that the two outer endpoints are assumed to be data points so that $l_1=a_1=a$ and $r_{n^{0.9}}=a_{n^{0.9}+1}=b$.  Then
\begin{align}
    \mathrm{LF}(a,b)&=\mathrm{LF}(l_1,r_1)+\mathrm{LF}(r_1,l_2)+\mathrm{LF}(l_2,r_2)+\mathrm{LF}(r_2,l_3)+\cdots+\mathrm{LF}(l_{n^{0.9}},r_{n^{0.9}})\notag\\
    &= \sum_{i=1}^{n^{0.9}} \mathrm{LF}(l_i,r_i) + \sum_{i=1}^{n^{0.9}-1} \mathrm{LF}(r_i,l_{i+1}).\label{eq:twolf}
\end{align}
We now present an analysis of the first summation based on Hoeffding's inequality.  Hoeffding's inequality is applicable because the data points in $[a_i,a_{i+1}]$ are independent of those from $[a_j,a_{j+1}]$ for $1\le i<j\le n^{0.9}$ since $n_i,n_j$ have been prespecified. Notice that $\mathrm{LF}(l_i,r_i)\le (a_{i+1}-a_i)^2=(b-a)^2n^{-1.8}$ since the $a_i\le l_i\le r_i\le a_{i+1}$ and the worst case for leapfrog distance is a hop from $l_i$ to $r_i$ with no intermediate points.
Thus, selecting $t=(b-a)^2n^{-1.1}/2$ in Hoeffding's inequality, we have
\begin{align}
&\mathbb P\left[\left|\sum_{i=1}^{n^{0.9}}\left(\mathrm{LF}(l_i,r_i)-\mathbb E[\mathrm{LF}(l_i,r_i)]\right)\right| > \frac{(b-a)^2n^{-1.1}}{2}\right]\le \exp\left(\frac{-2(b-a)^4n^{-2.2}}{4n^{0.9}(b-a)^4n^{-3.6}}\right) \\
&=\exp(-n^{0.5}/2).
\end{align}
We now discard the exponentially small number of cases when the inequality inside $\mathbb P[\cdot]$ holds.  In other words, for the remainder of the analysis, we assume that
\begin{equation}
\left|\sum_{i=1}^{n^{0.9}}\left(\mathrm{LF}(l_i,r_i)-\mathbb E[\mathrm{LF}(l_i,r_i)]\right)\right| \le \frac{(b-a)^2n^{-1.1}}{2}.
\label{eq:hoef1}
\end{equation}

We next analyze the term $\mathbb E[\mathrm{LF}(l_i,r_i)]$, which appears in \eqref{eq:hoef1}, using \eqref{eq:ELF_thm}.
Let $\mu_i$, $i=1,\ldots,n^{0.9}$, be the probability that a sample lies in $[a_i,a_{i+1}]$, i.e.,
\[
\mu_i:=\int_{a_i}^{a_{i+1}} f(\xi)\,d\xi.
\]
Note that
\begin{equation}
    \mu_i\in [n^{-0.9}(b-a)\underline{f},n^{-0.9}(b-a)\overline{f}].
    \label{eq:mubds}
\end{equation}
The $n_i$ samples in the interval $[a_i,a_{i+1}]$ are equivalently chosen from a restricted distribution whose PDF has the following form:
$$f_i(x)=\left\{\begin{array}{ll}
f(x)/\mu_i, & x\in [a_i,a_{i+1}], \\
0, & x\notin [a_i,a_{i+1}].\end{array}\right.$$
In applying \eqref{eq:ELF_thm}, we note that `$n$' in the bound now refers to $n_i$, while `$v_U$' refers to $a_{i+1}-a_i=(b-a)n^{-0.9}$
It follows from \eqref{eq:ELF_thm} that
\begin{equation}
\mathbb E[\mathrm{LF}(l_i,r_i)] =
\frac{2\mu_i}{n_i}\int_{l_i}^{r_i}\frac{dx}{f(x)}
+ R,
\label{eq:ELF4}
\end{equation}
with
%\begin{align}
%|R| &\le \frac{22 L_{f_i}(a_{i+1}-a_i)}%{n_i^{1.5}\underline{f_i}^2} \notag \\
%&=\frac{22 L\mu_i (b-a)n^{-0.9}}{n_i^{1.5}\underline{f}^2}. \label{eq:Rbound}
%\end{align}
\begin{align}
|R| &\le 10v_{[a_i,a_{i+1}]}^2n_i^{-1.5}=10(b-a)^2n^{-1.8}n_i^{-1.5} \label{eq:Rbound}
\end{align}
% vU should be vUi
The analysis so far has been valid for any positive choice of $n_1,\ldots,n_{n^{0.9}}$.
In order to make progress on \eqref{eq:ELF4}, we need tighter bounds on $n_i$, so we next show that cases when any $n_i$ is far from its mean value occur with exponentially small probability and hence may be discarded.  First, we write down the mean and a lower bound on the mean as follows.
\begin{align}
\mathbb E[n_i] &= (n-2)\mu_i \label{eq:Eni}\\
&\ge (n-2)(a_{i+1} - a_{i}) \underline f \notag \\
&= (n-2) n^{-0.9} (b-a) \underline f \notag \\
&\ge (b-a) \underline f (n-2)^{0.09}. \label{eq:Enilb}
\end{align}
% tj
% {\bf Question: I think the last inequality is wrong}

Note that $n_i$ is the outcome of $n-2$ independent Bernoulli trials with probability $\mu_i$.  Therefore, we can analyze the departure from its mean using the Chernoff bound, which is as follows.
\begin{lemma}
Let $X_i, \ldots, X_N \sim \text{Bernoulli}(p)$ be independent Bernoulli random variables with parameter $p$. Consider their sum $S_N = \sum_{i=1}^N X_i$. Then, for any $\delta \in (0,1)$, we have
\[
\mathbb P\left[|S_N - Np| \ge \delta Np\right] \le 2\exp(-Np \delta^2/3).
\]
\label{lemma:Chernoff}
\end{lemma}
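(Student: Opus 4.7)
The plan is to prove this by the standard moment generating function (MGF) technique due to Chernoff, handling the upper and lower tails separately and combining them via a union bound. Since the $X_i$ are i.i.d.\ Bernoulli($p$), the MGF of a single $X_i$ is $\mathbb{E}[e^{tX_i}] = 1 - p + pe^t = 1 + p(e^t-1)$, and by independence $\mathbb{E}[e^{tS_N}] = (1+p(e^t-1))^N \le \exp(Np(e^t-1))$, using the elementary inequality $1+x \le e^x$.

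For the upper tail I would apply Markov's inequality to $e^{tS_N}$ with $t>0$:
\[
\mathbb{P}[S_N \ge (1+\delta)Np] \le e^{-t(1+\delta)Np}\,\mathbb{E}[e^{tS_N}] \le \exp\bigl(Np(e^t-1) - t(1+\delta)Np\bigr).
\]
Minimizing the exponent over $t$ gives $t^\ast = \ln(1+\delta)$, which yields the classical form
\[
\mathbb{P}[S_N \ge (1+\delta)Np] \le \left(\frac{e^\delta}{(1+\delta)^{1+\delta}}\right)^{Np}.
\]
The lower tail $\mathbb{P}[S_N \le (1-\delta)Np]$ I would handle symmetrically using $t<0$ (equivalently, applying the same argument to $-S_N$), arriving at $\left(\frac{e^{-\delta}}{(1-\delta)^{1-\delta}}\right)^{Np}$.

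The main obstacle, which is purely calculus rather than probability, is establishing the scalar inequalities $\frac{e^\delta}{(1+\delta)^{1+\delta}} \le \exp(-\delta^2/3)$ and $\frac{e^{-\delta}}{(1-\delta)^{1-\delta}} \le \exp(-\delta^2/2)$ for $\delta \in (0,1)$. Taking logarithms, these reduce to $\delta - (1+\delta)\ln(1+\delta) + \delta^2/3 \le 0$ and a symmetric one-sided statement, which I would verify by Taylor expanding $\ln(1\pm\delta)$ around $0$, checking that the function and its first derivative vanish at $\delta=0$, and showing that the second derivative has the correct sign on $(0,1)$. Once both one-sided tail bounds are expressed as $\exp(-Np\delta^2/3)$ (using the weaker of the two constants, $1/3$, for the lower tail too), a union bound over the two tails introduces the factor of $2$ and yields the stated inequality.
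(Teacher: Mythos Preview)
The paper does not prove this lemma; it is stated without proof as the standard multiplicative Chernoff bound and then applied directly. Your MGF-based argument is exactly the standard textbook derivation, so there is nothing to compare.

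One minor technical remark on your sketch: for the upper-tail scalar inequality $\delta-(1+\delta)\ln(1+\delta)+\delta^2/3\le 0$, the second derivative $-\tfrac{1}{1+\delta}+\tfrac{2}{3}$ actually changes sign at $\delta=\tfrac{1}{2}$, so it does not have ``the correct sign on $(0,1)$'' as you wrote. The argument is easily rescued---e.g., by noting that the first derivative $-\ln(1+\delta)+\tfrac{2}{3}\delta$ vanishes at $0$, is initially decreasing, and is still negative at $\delta=1$ (since $\ln 2>2/3$), hence stays nonpositive throughout---but the purely second-derivative route you described does not quite close the gap by itself.
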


A direct application of the Chernoff bound yields
the following.
\begin{claim}
The probability
\[
\mathbb P\left[|n_i - \mathbb E[n_i]| \ge \mu_i(n-2)^{0.96}\right]
\]
decays to zero exponentially fast with $n$.
\label{claim:ni}
\end{claim}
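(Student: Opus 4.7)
The plan is a direct application of the Chernoff bound stated as Lemma~\ref{lemma:Chernoff}, with the parameters tuned so that the prescribed deviation $\mu_i(n-2)^{0.96}$ matches the standard form $\delta N p$. Taking $N = n-2$, $p = \mu_i$, and setting
\[
\delta := (n-2)^{-0.04},
\]
one has $\delta\in(0,1)$ for all $n\ge n_0$ sufficiently large, and $\delta N p = \mu_i(n-2)^{0.96}$, so the Chernoff inequality is directly applicable.

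The Chernoff bound then gives
\[
\mathbb P\bigl[|n_i - \mathbb E[n_i]|\ge \mu_i(n-2)^{0.96}\bigr]
\;\le\; 2\exp\!\bigl(-\tfrac{1}{3}\mu_i(n-2)^{1-0.08}\bigr)
\;=\; 2\exp\!\bigl(-\tfrac{1}{3}\mu_i(n-2)^{0.92}\bigr).
\]
To turn this into a bound independent of $i$, I would substitute the uniform lower bound on $\mu_i$ from \eqref{eq:mubds}, namely $\mu_i\ge (b-a)\underline{f}\,n^{-0.9}$. Combining this with $(n-2)^{0.92}\ge \tfrac{1}{2}n^{0.92}$ for $n\ge n_0$, the exponent satisfies
\[
\tfrac{1}{3}\mu_i(n-2)^{0.92}\;\ge\;\tfrac{1}{6}(b-a)\underline{f}\,n^{0.02},
\]
and hence the probability in the claim is bounded above by $2\exp\!\bigl(-\tfrac{1}{6}(b-a)\underline{f}\,n^{0.02}\bigr)$, which decays to zero at a stretched-exponential rate as $n\to\infty$. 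This matches the form $c_1\exp(-c_2 n^{c_3})$ used elsewhere in the theorem.

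There is essentially no obstacle here beyond bookkeeping of exponents; the margin between the deviation $(n-2)^{0.96}$ and the natural scale $(n-2)$ of the sum is chosen precisely so that the Chernoff rate, after absorbing the $n^{-0.9}$ factor in $\mu_i$, still leaves a positive power of $n$ in the exponent (namely $n^{0.02}$). The only subtlety is to check that this margin is strict, i.e.\ that $0.92-0.9>0$; if a later argument required a stronger decay rate (a larger power of $n$ in the exponent), one would need to reduce the $0.96$ exponent in the claim, at the cost of a worse concentration radius. No independence or coupling difficulties arise at this step, since $n_i\sim\mathrm{Binomial}(n-2,\mu_i)$ exactly after conditioning on the endpoints $a,b$ being data points.
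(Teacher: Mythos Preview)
Your proposal is correct and follows essentially the same argument as the paper: both apply Lemma~\ref{lemma:Chernoff} with $N=n-2$, $p=\mu_i$, $\delta=(n-2)^{-0.04}$, and then lower-bound the exponent using the density bound $\mu_i\ge (b-a)\underline{f}\,n^{-0.9}$ (the paper phrases this via the equivalent lower bound \eqref{eq:Enilb} on $\mathbb E[n_i]=(n-2)\mu_i$). The resulting decay rates differ only in inessential constants, and your commentary on the exponent bookkeeping accurately identifies why the choice $0.96$ works.
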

\begin{proof}[Proof of Claim \ref{claim:ni}]
We prove the claim by applying the Chernoff bound with $N=(n-2)$, $p=\mu_i$ and $\delta = (n-2)^{-0.04}$. Then
% \begin{align*}
% \mathbb{P}\left[|n_i - \mathbb{E}[n_i]| \geq \delta \mathbb{E}[n_i]\right] &=
%     \mathbb{P}\left[|n_i - \mu_i(n-2)|
%     \geq \mu_i(n-2)^{0.96}\right] \\
%     &\leq 2\exp(-\mathbb{E}[n_i] \delta^2 /3) \\
%     &=2\exp(-\mu_i(n-2)(n-2)^{-0.08}(b-a) \underline{f} /3) \\
%     &\le 2 \exp(-(n-2)^{0.02} (b-a) \underline{f}/3)
% \end{align*}
% % tj revised using (17) directly
% {\bf Revised}
\begin{align*}
&\mathbb{P}\left[|n_i - \mathbb{E}[n_i]|\geq \delta \mathbb{E}[n_i]\right]\\
    &\leq 2\exp(-\mathbb{E}[n_i] \delta^2 /3) \\
    &= 2\exp(-\mathbb{E}[n_i](n-2)^{-0.08} /3)\\
    &\le 2 \exp(-(b-a) (n-2)^{0.09}  \underline{f}(n-2)^{-0.08}/3)\\
    &= 2 \exp(-(n-2)^{0.01} (b-a) \underline{f}/3),
\end{align*}
% where the first inequality is due to Chernoff bound and the second inequality is due to the lower bound on the mean \eqref{eq:Enilb}.
% end
which decays to zero exponentially fast as desired.  The last line was obtained from \eqref{eq:Enilb}
% \eqref{eq:mubds}.
\end{proof}

Let us now discard all choices of $n_1,\ldots,n^{0.9}$ in which any $n_i$ differs from $\mathbb E[n_i]$
by more than the amount in Claim~\ref{claim:ni}, so for the remainder of the proof, we assume that
\begin{align}
|n_i-\mu_i n| &\le \mu_i(n-2)^{0.96} \label{eq:for21}\\
&\le (b-a)n^{0.06}{\overline{f}},\label{eq:nimui}
\end{align}
where we used \eqref{eq:mubds} on the last line, and we replace $n-2$ with $n$ since the difference vanishes asymptotically.

Now we localize $l_i$ and $r_i$ as follows.
\begin{claim}
For each $i=1,\ldots,n^{0.9}$,
\[\mathbb P[l_i-a_i> (b-a)n^{-0.97}]\]
and
\[\mathbb P[a_{i+1}-r_i> (b-a)n^{-0.97}]\]
decay to zero exponentially fast with $n$.
\label{claim:bar_a}
\end{claim}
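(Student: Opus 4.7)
The plan is to condition on the event from Claim~\ref{claim:ni} that the counts $n_1,\ldots,n_{n^{0.9}}$ all satisfy the concentration bound $|n_i-\mu_i n|\le \mu_i(n-2)^{0.96}$; this event has exponentially small failure probability by a union bound over $n^{0.9}$ subintervals, which is absorbed into the final estimate. On this good event, the rewriting of the sampling procedure used earlier lets us treat the $n_i$ samples in $[a_i,a_{i+1}]$ as i.i.d.\ draws from the conditional density $f_i(x)=f(x)/\mu_i$ on $[a_i,a_{i+1}]$. The event $\{l_i-a_i>(b-a)n^{-0.97}\}$ is then equivalent to the event that none of these $n_i$ i.i.d.\ draws falls in the left sliver $J_i:=[a_i,a_i+(b-a)n^{-0.97}]$, and similarly for $r_i$ with a right sliver.

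Next I would lower-bound the probability $p_i$ that a single conditional draw lands in $J_i$. Using $f\ge\underline{f}$ on $J_i$ and $\mu_i\le\overline{f}(b-a)n^{-0.9}$ (from \eqref{eq:mubds}), one gets
\[
p_i\ge \frac{\underline{f}\,(b-a)n^{-0.97}}{\mu_i}\ge \frac{\underline{f}}{\overline{f}}\,n^{-0.07}.
\]
At the same time, the concentration bound on $n_i$ combined with \eqref{eq:Enilb} yields $n_i\ge \tfrac{1}{2}\mu_i n\ge \tfrac{1}{2}(b-a)\underline{f}\,n^{0.1}$ for all sufficiently large $n$. Therefore
\[
\mathbb P\bigl[l_i-a_i>(b-a)n^{-0.97}\,\big|\, n_i\bigr]\le (1-p_i)^{n_i}\le \exp(-p_i n_i)\le \exp\!\bigl(-c\, n^{0.03}\bigr)
\]
for a constant $c>0$ depending only on $\underline{f}$, $\overline{f}$, and $b-a$. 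The identical calculation with $J_i$ replaced by the right sliver $[a_{i+1}-(b-a)n^{-0.97},a_{i+1}]$ bounds the probability for $a_{i+1}-r_i$.

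Finally, I would assemble the pieces: the conditional bound above is uniform in the good realizations of $n_i$, so un-conditioning and then adding the exponentially small probability of the bad event from Claim~\ref{claim:ni} yields the same exponential decay (with a slightly different constant) for the unconditional probabilities, as required.

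The main obstacle here is really only bookkeeping: one must verify that the exponents chosen in the proof ($0.9$ for the subinterval count, $0.97$ for the sliver width, $0.96$ for the Chernoff tolerance) combine to leave a genuine positive power of $n$ in the exponent of the final bound. Since $p_i n_i$ is at least of order $n^{-0.07}\cdot n^{0.1}=n^{0.03}$, this is comfortably positive, and no further analytic difficulty arises. No hidden dependence on $f$ beyond the uniform bounds $\underline{f},\overline{f}$ is needed, which is consistent with the hypotheses of Theorem~\ref{thm:LF_main}.
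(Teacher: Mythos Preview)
Your argument is correct, but the paper takes a shorter, more direct route that avoids the conditioning on $n_i$ altogether. Rather than invoking Claim~\ref{claim:ni} and working with the conditional density $f_i$, the paper simply observes that the event $\{l_i-a_i>(b-a)n^{-0.97}\}$ is contained in the event that none of the $n-2$ \emph{unconditioned} samples lands in the sliver $I=[a_i,a_i+(b-a)n^{-0.97}]$. Since each sample hits $I$ with probability at least $\underline{f}(b-a)n^{-0.97}$, this event has probability at most $(1-\underline{f}(b-a)n^{-0.97})^{n-2}\le\exp\bigl(-\underline{f}(b-a)(n-2)n^{-0.97}\bigr)$, which decays like $\exp(-c\,n^{0.03})$. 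Your approach reaches the same $n^{0.03}$ exponent but routes through both $\overline{f}$ (via the upper bound on $\mu_i$) and the concentration of $n_i$, neither of which is actually needed here; the paper's proof is self-contained and uses only the lower bound $\underline{f}$. Both arguments are valid, but the direct one is more economical and makes Claim~\ref{claim:bar_a} logically independent of Claim~\ref{claim:ni}.
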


\begin{proof}[Proof of Claim \ref{claim:bar_a}]
Let $I$ denote the interval $[a_i, a_i + n^{-0.97}(b-a)]$. Then the probability that a sample misses $I$ is
\[
1 - \int_I f(x)\, dx \le 1 - |I| \underline f = 1 - n^{-0.97}(b-a) \underline f,
\]
so the probability that all samples miss $I$ is $\left(1 - n^{-0.97} \underline f(b-a)\right)^{n-2}$, which tends to 0 exponentially fast with $n$ as desired. Analogously, the probability that all samples miss the interval $[a_i - n^{-0.97}(b-a), a_i]$ also tends to 0 exponentially fast with $n$.
\end{proof}

Thus, we discard an exponentially small number of cases so that we can assume
\begin{equation}
l_i-a_i\le (b-a)n^{-0.97}\quad\mbox{and}\quad
a_{i+1}-r_i\le (b-a)n^{-0.97}.
\label{eq:liairi}
\end{equation}

Now we can estimate $\mathbb E[\mathrm{LF}(l_i,r_i)]$:
\begin{align}
&\left| \mathbb E[\mathrm{LF}(l_i,r_i)] -
\frac{2}{n}\int_{a_i}^{a_{i+1}}\frac{dx}{f(x)}\right| \notag\\
&\le
\left| \mathbb E[\mathrm{LF}(l_i,r_i)] -
\frac{2\mu_i}{n_i}\int_{l_i}^{r_i}\frac{dx}{f(x)}\right|
% \notag\\
% &\hphantom{\le}\quad\mbox{}
+\left|\frac{2\mu_i}{n_i}\int_{l_i}^{r_i}\frac{dx}{f(x)}-\frac{2}{n}\int_{a_i}^{a_{i+1}}\frac{dx}{f(x)}\right| \notag\\
&=
R+\left|\frac{2\mu_i}{n_i}\int_{l_i}^{r_i}\frac{dx}{f(x)}-\frac{2}{n}\int_{a_i}^{a_{i+1}}\frac{dx}{f(x)}\right| \notag\\
&\le
 R+\left|\frac{2\mu_i}{n_i}\int_{l_i}^{r_i}\frac{dx}{f(x)}-
 \frac{2}{n}\int_{l_i}^{r_i}\frac{dx}{f(x)}\right|
 % \notag\\
 % &\hphantom{\le}\quad\mbox{}
 +
 \left|
  \frac{2}{n}\int_{l_i}^{r_i}\frac{dx}{f(x)} -
 \frac{2}{n}\int_{a_i}^{a_{i+1}}\frac{dx}{f(x)}\right| \notag \\
 &=
 R+\left|\frac{2\mu_i}{n_i}-\frac{2}{n}\right|\cdot \int_{l_i}^{r_i}\frac{dx}{f(x)}
 % \notag\\
 % &\hphantom{\le}\quad\mbox{}
 +
 \frac{2}{n}
 \left|
 \int_{a_i}^{l_i}\frac{dx}{f(x)}+
 \int_{r_i}^{a_{i+1}}\frac{dx}{f(x)}
 \right|. \label{eq:longchain}
\end{align}
In the first line, we added and subtracted the same term. We applied \eqref{eq:ELF4} to obtain the second line with $R$ bounded by \eqref{eq:Rbound}. The third line is obtained again by adding and subtracting the same term. The fourth line is obtained by rearranging and cancelling out portions of the integrals that overlap.

We now upper-bound each factor on the right-hand side of \eqref{eq:longchain}.
We first note the following preliminary inequality:
\begin{equation}
    \frac{1}{n_i}\le \frac{1}{\mu_i n - \mu_i n^{0.96}} = \left(\frac{1}{1-n^{-0.04}}\right)\frac{1}{\mu_i n}  \leq \frac{2}{\mu_i n}\le \frac{2}{(b-a)n^{0.1}\underline{f}},
    \label{eq:nin}
\end{equation}
where the first inequality follows from \eqref{eq:for21} (and again replacing $n$ with $n-2$ as the difference is negligible asymptotically), the
second inequality holds for $n$ sufficiently large,
% tj,
and the last from \eqref{eq:mubds}.
Turning to the right-hand side of \eqref{eq:longchain},
first,
%\begin{align*}
%|R| &\le \frac{22 L\mu_i (b-a)n^{-0.9}}{n_i^{1.5}\underline{f}^2} \\
%&\le \frac{22 L\overline{f}(b-a)^2n^{-1.8}}
%{n_i^{1.5}\underline{f}^2} \\
%&\le\frac{22\cdot 2^{1.5} L\overline{f}(b-a)^{1/2}n^{-1.95}}
%{\underline{f}^{7/2}},
%\end{align*}
\begin{align*}
|R| &\le 10(b-a)^2n^{-1.8}n_i^{-1.5} \\
% tj: should the 6 be 10?
&\le \frac{2^{1.5}\cdot 10(b-a)^2n^{-1.8}}{(b-a)^{1.5}n^{0.15}\underline{f}^{1.5}} \\
% &\le \mathbf{\frac{2^{1.5}\cdot 10(b-a)^2n^{-1.8}}{(b-a)^{1.5}n^{0.15}\underline{f}^{1.5}}} \\
&= \frac{2^{1.5}\cdot 10(b-a)^{1/2}}{n^{1.95}\underline{f}^{1.5}},\\
% &= \mathbf{\frac{2^{1.5}\cdot 10(b-a)^{1/2}}{n^{1.95}\underline{f}^{1.5}}},
\end{align*}
where the first line comes from \eqref{eq:Rbound} and the second from \eqref{eq:nin}.

Next,
\begin{align*}
\left|\frac{2\mu_i}{n_i}-\frac{2}{n}\right|
&= \frac{2}{nn_i}|\mu_i n - n_i| \\
&\le \frac{2}{nn_i}\cdot(b-a)\overline{f}n^{0.06}
\\
&\le \frac{4}{n^{1.1}(b-a)\overline{f}}\cdot(b-a)\underline{f}n^{0.06} \\
&=\frac{4\overline{f}}{n^{1.04}\underline{f}},
\end{align*}
where the second line comes from \eqref{eq:nimui} and the third from \eqref{eq:nin}.

Next,
\begin{align*}
\int_{l_i}^{r_i}\frac{dx}{f(x)}
&\le
\int_{a_i}^{a_{i+1}}\frac{dx}{f(x)} \\
&\le
(b-a)n^{-0.9}/\underline{f}
\end{align*}
Thus, the product of the two factors of the second term on the right-hand side of \eqref{eq:longchain} is at most $4(b-a)\overline{f}/(n^{1.94}\underline{f}^2)$.

Finally,
\begin{align*}
\int_{a_i}^{l_i}\frac{dx}{f(x)}
&\le (l_i-a_i)/\underline{f} \\
&\le (b-a)n^{-0.97}/\underline{f},
\end{align*}
where the second line uses \eqref{eq:liairi}.
The same bound holds for $\int_{r_i}^{a_{i+1}}\frac{dx}{f(x)}$.

We thus see that the first on the right-hand side of \eqref{eq:longchain} is proportional to $n^{-1.95}$,  the middle term is proportional to $n^{-1.94}$, and the third to $n^{-1.97}$  Thus, the middle term dominates, so \eqref{eq:longchain} yields
\[
\left| \mathbb E[\mathrm{LF}(l_i,r_i)] -
\frac{2}{n}\int_{a_i}^{a_{i+1}}\frac{dx}{f(x)}\right|
\le \frac{5(b-a)\overline{f}}{n^{1.94}\underline{f}^2},
\]
where we increased the coefficient from 4 to 5 to account for the two lower-order terms.  Thus, adding all $n^{0.9}$ such terms yields:
\[
\sum_{i=1}^{n^{.9}}\left| \mathbb E[\mathrm{LF}(l_i,r_i)] -
\frac{2}{n}\int_{a_i}^{a_{i+1}}\frac{dx}{f(x)}\right|
\le \frac{5(b-a)\overline{f}}{n^{1.04}\underline{f}^2},
\]

Add this to \eqref{eq:hoef1}, apply the triangle inequality, and combine the two remainders on the right-hand side.  Notice that the previous remainder of $n^{-1.04}$ dominates the right-hand side of \eqref{eq:hoef1}, so we write only the above right-hand side and increase the coefficient to 6 to account for the other term:
\[
\left|\sum_{i=1}^{n^{0.9}}\left(\mathrm{LF}(l_i,r_i)-\frac{2}{n}\int_{a_i}^{a_{i+1}}\frac{dx}{f(x)}\right)\right|
\le \frac{6(b-a)\overline{f}}{n^{1.04}\underline{f}^2}
\]
Note that the integrals sum to a single integral over $[a,b]$, which in turn is $\mathbb E[\mathrm{LF}(a,b)]$. Thus, we have
\[
\left|\left(\sum_{i=1}^{n^{0.9}}\mathrm{LF}(l_i,r_i)\right)-
\mathbb E[\mathrm{LF}(a,b)]\right|
\le \frac{6(b-a)\overline{f}}{n^{1.04}\underline{f}^2}
\]

We now have an estimate for the first term in \eqref{eq:twolf}, so let us next dispense with the second term. We have already assumed with probability exponentially close to 1 that $l_{i+1}-r_i\le 2(b-a)n^{-0.97}$ in \eqref{eq:liairi}.  Thus,

\begin{align*}
\sum_{i=1}^{n^{0.9}-1}\mathrm{LF}(r_i,l_{i+1})&=
\sum_{i=1}^{n^{0.9}-1}(l_{i+1}-r_i)^2 \\
&\le n^{0.9}\cdot 4(b-a)^2n^{-1.94}\\
&= 4(b-a)^2/n^{1.04}.
\end{align*}
Thus, we can add the two previous displayed equations, apply the triangle inequality, and note that the two summations together yield $\mathrm{LF}(a,b)$ according to \eqref{eq:twolf} to obtain
\begin{equation}
|\mathrm{LF}(a,b)-\mathbb E[\mathrm{LF}(a,b)]|
\le \left(\frac{6(b-a)\overline{f}}{\underline{f}^2}+4(b-a)^2\right)n^{-1.04}.
\label{eq:mainhighprobineq}
\end{equation}
All the discarded cases occur with probability exponentially small as $n\rightarrow\infty$.
\end{proof}

\section{Proof of Theorem \ref{thm:uppbdlf_dimd}}
\label{app:LF_main_proof}
\begin{manualtheorem}{\ref{thm:uppbdlf_dimd}}
Assume $f$ is admissible.  Let $n$ points in $\R^d$ be sampled according to $f$.  Then, with probability exponentially close to $1$,  for any $i=1,\ldots, K$, for any two samples $\a,\b\in S_i$, $\LF(\a,\b)\le 36\ell n^{-1/d+\eta}$
for an arbitrarily small $\eta>0$.
\end{manualtheorem}

% =================== Appendix Section4_upperbound =================
\vspace{.1in}

Fix a particular $i\in\{1,\ldots,K\}$.
Let $\mu$ denote the measure induced by $f$.
Fix $\eta>0$ such that $-1/d+\eta <0$.
Before proving the theorem, we present a sequence
of lemmas.  Construct an epsilon-net inside $S_i$, where
$\eps=n^{-1/d+\eta}$, using the following procedure.
\begin{tabbing}
  ++\=++\=\kill
  \> $U:=\emptyset$ \\
  \> $W:=\emptyset$ \\
  \> while $S_i\not\subseteq W$ \\
  \>\> Choose arbitrary $\x\in S_i\setminus W$ \\
  \>\> $U:=U\cup \{\x\}$ \\
  \>\> $W:=W \cup B(\x,\eps)$ \\
\end{tabbing}
\begin{lemma}
  The above process terminates, and
  $|U|\le 1/(\theta \cdot p(\eps/2)^d)$.  Furthermore,
  upon termination, for
  any $\y\in S_i$, there exists $\x\in U$ such that
  $\Vert \x-\y\Vert \le \eps.$
  \label{lem:epsnet}
\end{lemma}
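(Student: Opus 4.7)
The plan is to observe that the greedy procedure produces an $\eps$-separated set, and then to turn separation into a packing bound via condition~5 of admissibility; the covering statement will then follow immediately from the while-loop termination criterion.

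First I would verify separation. When a new center $\x$ is added on some iteration, it is chosen from $S_i\setminus W$, and $W$ already contains $B(\x',\eps)$ for every previously added $\x'\in U$. Hence $\Vert\x-\x'\Vert\ge \eps$ for every earlier $\x'\in U$, so the final $U$ is $\eps$-separated. Consequently the open balls $\{B(\x,\eps/2):\x\in U\}$ are pairwise disjoint.

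Next I would invoke admissibility to lower-bound the $f$-mass of each half-ball intersected with $S_i$. Since $\eps=n^{-1/d+\eta}\to 0$, for $n$ sufficiently large we have $\eps/2\le r_0$, and condition~5 gives $\vol(B(\x,\eps/2)\cap S_i)\ge p(\eps/2)^d$ for every $\x\in U\subseteq S_i$. Combining this with $f\ge\theta$ on $S_i$ (condition~4) yields $\mu(B(\x,\eps/2)\cap S_i)\ge \theta p(\eps/2)^d$. The disjointness from the previous paragraph then gives
\[
|U|\cdot \theta p(\eps/2)^d \;\le\; \sum_{\x\in U}\mu\bigl(B(\x,\eps/2)\cap S_i\bigr) \;\le\; \mu(S_i)\;\le\;1,
\]
so $|U|\le 1/\bigl(\theta p(\eps/2)^d\bigr)$. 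This same inequality proves termination: each pass through the loop enlarges $|U|$ by exactly one, and $|U|$ can never exceed the finite bound just derived, so the loop must exit after finitely many iterations.

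Finally, the covering property is immediate from the loop guard. The only way the while loop exits is when $S_i\subseteq W$, and at that moment $W=\bigcup_{\x\in U}B(\x,\eps)$ by construction; hence every $\y\in S_i$ lies in some $B(\x,\eps)$, i.e.\ $\Vert\x-\y\Vert\le\eps$. The only mild obstacle is bookkeeping the threshold on $n$ that ensures $\eps/2\le r_0$, but since $r_0,p,\theta$ are fixed admissibility constants and $\eps=n^{-1/d+\eta}$ with $-1/d+\eta<0$, this is automatic for all $n$ past some $n_0$ depending only on $f$.
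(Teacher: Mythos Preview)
Your proof is correct and follows essentially the same approach as the paper: establish $\eps$-separation of $U$, so that the balls $B(\x,\eps/2)$ are disjoint, then use admissibility to bound $|U|$, with covering following from the termination criterion. The only cosmetic difference is that you bound $|U|$ via the probability measure $\mu$ (using $\mu(S_i)\le 1$ together with $f\ge\theta$), whereas the paper uses Lebesgue volume directly (using $\vol(S_i)\le 1/\theta$); the two are equivalent and yield the identical bound.
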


\begin{proof}
  Each $\x\in U$ chosen by the procedure is distance at least $\eps$ from
  every other point in $U$.
  Therefore, the balls in the collection $V=\{B(\x,\eps/2):\x\in U\}$
  are disjoint.  By property 5, each ball in $V$ contains
  volume at least $p(\eps/2)^d$ of $S_i$.
  % tj: why of C?
  Since the $\vol(S_i)\le 1/\theta$ is finite, there
  are at most a finite number of such balls.  In particular,
  there can be at most the quantity claimed in the lemma.
  The second claim follows because if, during the procedure, there is $\y\in S_i$ such that $\dist(\y,U)>\eps$, then $\y\notin W$, and hence the procedure cannot terminate.
\end{proof}

\begin{lemma}
  Assume $\phi\in (-1,0)$ and $a>0$.  Then $(1-an^\phi)^n\rightarrow 0$
  exponentially fast as $n\rightarrow \infty$.
\end{lemma}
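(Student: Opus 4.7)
The plan is to bound $(1-an^\phi)^n$ by an exponential of a positive power of $n$ using the elementary inequality $1+t \le e^t$ for all $t\in\R$. Applying this with $t = -an^\phi$ gives
\[
(1 - an^\phi)^n \;\le\; \bigl(\exp(-an^\phi)\bigr)^n \;=\; \exp\!\bigl(-a n^{\,1+\phi}\bigr).
\]
Since $\phi\in(-1,0)$, the exponent $1+\phi$ is strictly positive, so $n^{1+\phi}\to\infty$, and the right-hand side decays to $0$ faster than any polynomial in $1/n$. This matches the meaning of ``exponentially fast'' used elsewhere in the paper (bounds of the form $\exp(-c\, n^{c'})$ with $c,c'>0$).

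One small caveat to address is that the bound $1-an^\phi \le e^{-an^\phi}$ is only useful when the left-hand side is nonnegative; for $n$ small we could have $1-an^\phi<0$ or even $|1-an^\phi|>1$. This is trivial to dispose of: since $\phi<0$, we have $an^\phi\to 0$, so for all $n$ sufficiently large (specifically $n\ge (a)^{-1/\phi}$, i.e., $an^\phi < 1$) the quantity $1-an^\phi$ lies in $(0,1)$ and the inequality applies. For the finitely many smaller $n$, the statement of the lemma is about the asymptotic behavior and so is unaffected.

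I don't anticipate any real obstacle here — the entire argument is the single application of $1+t\le e^t$ combined with the observation that $1+\phi>0$. No step requires more than one line of calculation.
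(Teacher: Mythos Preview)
Your proof is correct and follows exactly the same approach as the paper: apply $1+t\le e^t$ with $t=-an^\phi$ to obtain $(1-an^\phi)^n\le \exp(-an^{1+\phi})$, then observe that $1+\phi>0$. Your added remark about $1-an^\phi$ being in $(0,1)$ for large $n$ is a minor clarification the paper omits, but otherwise the arguments are identical.
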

\begin{proof}
  Since $1+x\le e^x$, then $(1-an^\phi)\le \exp(-an^\phi)$
  and thus $(1-an^\phi)^n\le \exp(-an^{\phi+1})$.  Since $\phi+1>0$,
  this proves the result.
\end{proof}

\begin{lemma}
  Let $\x\in S_i$ be arbitrary.
  With probability exponentially close to $1$ as $n\rightarrow\infty$,
  $B(\x,\eps)$
  contains a data point.
\end{lemma}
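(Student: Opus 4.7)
The plan is a direct computation using admissibility property 5 (the volume lower bound) and property 4 (the density lower bound $\theta$), followed by an application of the preceding lemma on exponential decay of $(1-an^{\phi})^n$.

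First, I would note that since $-1/d+\eta<0$, we have $\eps = n^{-1/d+\eta}\to 0$ as $n\to\infty$, so for $n$ sufficiently large, $\eps\le r_0$ and property 5 of admissibility applies. Therefore
\[
\vol\bigl(B(\x,\eps)\cap S_i\bigr) \ge p\eps^d = p n^{-1+d\eta}.
\]
Combined with the pointwise lower bound $f(\y)\ge \theta$ on $S$ (property 4), this yields
\[
\mu\bigl(B(\x,\eps)\bigr) \ge \mu\bigl(B(\x,\eps)\cap S_i\bigr) \ge \theta p\, n^{-1+d\eta}.
\]

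Next, since the $n$ samples are drawn i.i.d.\ from $f$, the probability that a single sample fails to land in $B(\x,\eps)$ is at most $1-\theta p\,n^{-1+d\eta}$, and by independence the probability that \emph{all} $n$ samples miss $B(\x,\eps)$ is at most
\[
\bigl(1-\theta p\,n^{-1+d\eta}\bigr)^n.
\]
Finally I would invoke the preceding lemma with $a=\theta p>0$ and $\phi=-1+d\eta$. Since $\eta$ was chosen so that $-1/d+\eta<0$, i.e.\ $\eta<1/d$, we have $d\eta<1$ and hence $\phi\in(-1,0)$, so the lemma applies and $(1-\theta p\,n^{-1+d\eta})^n\to 0$ exponentially fast in $n$. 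Taking complements gives the desired statement.

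There is no real obstacle here: the only mild point is to verify that $\phi=-1+d\eta$ lies in $(-1,0)$, which follows directly from the hypothesis $-1/d+\eta<0$ fixed at the beginning of the section. All three admissibility properties invoked (compactness is not needed here, but properties 4 and 5 are) are standing assumptions, and independence of the samples is by construction.
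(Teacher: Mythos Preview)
Your proof is correct and matches the paper's own argument essentially line for line: use property 5 to lower-bound $\vol(B(\x,\eps)\cap S_i)$ by $p\eps^d$, property 4 to convert this to a mass lower bound $\theta p\eps^d=\theta p n^{-1+d\eta}$, raise the complementary probability to the $n$th power by independence, and invoke the preceding lemma with $\phi=-1+d\eta\in(-1,0)$. If anything, you are slightly more careful than the paper in noting that $\eps\le r_0$ for large $n$ and in explicitly verifying $\phi\in(-1,0)$.
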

\begin{proof}
  By property 5, $\vol(B(\x,\eps)\cap S_i)\ge p\eps^d$.
  Therefore, $\mu(B(\x,\eps))\ge \theta p\eps^d$.
  If $R$ denotes $S_i\setminus B(\x,\eps)$, then
  $\mu(R)\le 1-p\theta\eps^d$.  Thus, the probability that all $n$ data points
  lie in $R$ is at most $(1-p\theta\eps^d)^n=(1-p\theta (n^{-1/d+\eta})^d)^n=
  (1-p\theta n^{-1+\eta d})^n$.   By the preceding lemma, this tends
  to 0 exponentially fast.
\end{proof}

\begin{lemma}
With probability exponentially close to $1$ as
  $n\rightarrow\infty$,
  for every $\x\in U$,  $B(\x,\eps)$ contains a data point.
\end{lemma}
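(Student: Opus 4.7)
The plan is to apply a union bound over the finite set $U$, leveraging the preceding lemma which handles one fixed point $\x\in S_i$ at a time.

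First, I would recall from Lemma~\ref{lem:epsnet} that $|U|\le 1/(\theta p(\eps/2)^d) = 2^d/(\theta p \eps^d)$. Substituting $\eps=n^{-1/d+\eta}$ gives $|U|\le (2^d/(\theta p))\cdot n^{1-\eta d}$, which grows only \emph{polynomially} in $n$.

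Next, by inspecting the proof of the preceding lemma, for any fixed $\x\in U\subseteq S_i$, the probability that $B(\x,\eps)$ contains no data point is at most $(1-p\theta n^{-1+\eta d})^n\le \exp(-p\theta n^{\eta d})$, which decays \emph{exponentially} in $n$ since $\eta d>0$.

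Then I would apply the union bound: the probability that there exists some $\x\in U$ whose ball $B(\x,\eps)$ contains no data point is at most
\[
|U|\cdot \exp(-p\theta n^{\eta d}) \;\le\; \frac{2^d}{\theta p}\, n^{1-\eta d}\exp(-p\theta n^{\eta d}).
\]
Since a polynomial factor is absorbed by an exponentially decaying factor (i.e.\ $n^c e^{-cn^\delta}\to 0$ exponentially as $n\to\infty$ for any constants $c>0$, $\delta>0$), this bound still tends to zero exponentially fast, establishing the claim. No step presents a real obstacle here; this lemma is a routine union-bound enhancement of the previous one, and its role is to set up the subsequent argument, where the epsilon-net property combined with the data points near each net point allows the construction of a short leapfrog path between any two samples $\a,\b\in S_i$.
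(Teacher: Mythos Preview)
Your proof is correct and follows exactly the same approach as the paper's own proof: a union bound over the polynomially many points of $U$, each of which fails with exponentially small probability by the preceding lemma. The paper's proof is more terse, but the logic is identical.
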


\begin{proof}
  This follows from the union bound: the probability that any particular
  ball fails to contain a point is exponentially small in $n$ by the preceding lemma, and the
  number of balls is at most $1/(\theta p (\eps/2)^d)=2^d/(\theta p n^{-1+\eta d})$
  which is polynomial in $n$.
\end{proof}

\begin{proof}[Proof of Theorem~\ref{thm:uppbdlf_dimd}]
We prove the second statement of the theorem first.  Let $\x\in S_i$ be arbitrary.  By the epsilon-net property, there exists a $\y\in U$ such that $\Vert \x-\y\Vert\le \eps$.  By the preceding lemma, there exists an $\a_i$ such that $\Vert \y-\a_i\Vert\le \eps$.  Therefore, $\Vert \x-\a_i\Vert\le 2\eps$.
% This proves the second claim of the theorem.
  % Turning to the first claim,

  Let $\a,\b$ be two data points in $S_i$.  Let $P\subset S_i$ be a path from
  $\a$ to $\b$ of length at most $\ell$.  Choose evenly spaced points
  $\x_0(=\a),\ldots,\x_k(=\b)\in P$
  such that the distance between successive points is
  between $\eps$ and $2\eps$.
  Note that $k\le \ell/\eps$. By the argument in the previous paragraph, for each $\x_j$, there is a data point $\a_j$ such that
  $\Vert \a_j-\x_j\Vert\le 2\eps$.  Therefore, for any $j=0,\ldots,k-1$,
  by the triangle inequality,
  \begin{align*}
    \Vert \a_j-\a_{j+1}\Vert &\le \Vert \a_j-\x_j\Vert +\Vert\x_j-\x_{j+1}\Vert + \Vert \a_{j+1}-{\x_{j+1}} \Vert\\
    &\le 2\eps +2\eps + 2\eps  \\
    &=6\eps
  \end{align*}
  Consider the path of data points from $\a$ to $\b$ given by $\a,\a_1,\a_2,\ldots,
  \a_{k-1}\b$.  Then $\mathrm{LF}(\a,\b)$ is at most the length determined by this
  particular path:
  \begin{align*}
    \mathrm{LF}(\a,\b)&\le k\cdot(6\eps)^2 \\
    &\le (\ell/\eps)\cdot 36\eps^2 \\
    &=36\ell\eps
  \end{align*}
\end{proof}
Note that the bound $36 \ell n^{-1/d+\eta}$ claimed in Theorem~\ref{thm:uppbdlf_dimd} means that the leapfrog distance between any two data points in $S$ is $o(1)$ as $n\rightarrow\infty.$
% ================ appendix ====================================

\section{Derivation of properties of $\b_i$'s}
\label{app:bi}
Our analysis uses the following steps:
\begin{enumerate}
    \item Decompose the leapfrog distance matrix $D$ into a low-rank leapfrog distance matrix $\bar D$ and a noise matrix $E$;
    \item Construct re-embeddings $\{\bar \b_i\}_{i=1}^n$ from the low-rank leapfrog distance matrix $\bar D$ using multidimensional scaling;
    \item Prove that the clusters of the re-embeddings $\{\bar \b_i\}_{i=1}^n$ are easily identifiable;
    \item Prove that the original re-embeddings $\{\b_i\}_{i=1}^n$ are a good proxy for $\{\bar \b_i\}_{i=1}^n$ with high probability, and conclude $\{\b_i\}_{i=1}^n$ is also %easily
    identifiable.
\end{enumerate}

% ############### appendix 6.1 starts here ##################

Notationally, we let $\norm{\cdot}_p$ denote the operator $p$-norm with $\norm{A}_p = \sup_{\|v\|_p=1} \norm{Av}_p$. %Similarly,
We use $\norm{\cdot}_{p,q}$ denote the entry-wise matrix norm defined by $\norm{A}_{p,q}=\norm{\left (\norm{\a_1}_p, \dots, \norm{\a_d}_p \right)}_q$. Finally, we use $\norm{\cdot}_F$ to denote the Frobenius norm, that is, $\|A\|_F = \sqrt{\sum_{i}\sum_{j} A(i,j)^2}$.

% Decompose the re-embeddings $\{\b_i\}_i$ into a ``good" dataset $\{\bar \b_i\}_i$ and small perturbations on $\bar \b_i$'s. To find such decomposition on $\{\b_i\}_i$, we perform decomposition on the leapfrog distance matrix $D$. The detailed decomposition are included in Section \ref{sec:decom_D}. With the decomposition, we construct the good dataset $\{\bar \b_i\}_i$ and prove desirable properties. In particular, points in the same cluster get embedded to the same coordinates in $\{\bar \b_i\}_i$ dataset as elaborated in Section \ref{sec:bar_b}. We could find perfect clustering by visualization. However, one could never achieve such perfect embeddings $\b_i$'s due to the presence of the noise matrix $E$. Fortunately, when the noise matrix $E$ is sufficiently small, we could still find embeddings $\b_i$'s that well approximate $\bar \b_i$'s. The result of good approximation are presented in Section \ref{sec:b}.

% The geometry of the dataset $\{\bar \b_1, \dots, \bar \b_n\}$ is desirable. I
% and construct the good dataset $\bar \b_i$ and the perturbations.

\subsection{Decomposition of $D$}
\label{sec:decom_D}
% {\bf Note from Steve: I needed to change this definition of $\bar D$ because the centroid, which appeared in the original definition, is not necessarily a data point, and we have not defined LF distance for non-data points}.

%Let the cluster supports be $C_1,\ldots,C_K$.
Let $\bmu_k$ denote an arbitrary data point that lies in $C_k$, $k=1,\ldots,K$.  (Note that the result of this section is a high-probability result, so therefore we can simply discard the exponentially rare case that one of the $C_k$'s is empty.)
% Let $\delta_{\mu,\nu}$, $\mu,\nu=1,\ldots,K$ denote the Euclidean distance between two cluster supports.  These numbers are all positive since the cluster supports are compact and disjoint.
% Then we define $\bar \delta_{k,k'}$ to be the shortest path distance between datapoints $\bmu_,\bmu$ using $\delta_{\mu,\nu}^2$ as the edge weights.
We define $\bar D$ according to the formula
\[
\bar D(i,j):=\mathrm{LF}(\bmu_{c(i)},\bmu_{c(j)})^2,
\]
where $c(i)$ denotes the $k$ such that $\bm{a}_i\in C_k$ and similarly for $c(j)$.
%We first decompose the original squared leapfrog distance matrix $D$. Let us define a ``centroid distance" matrix $\bar D$ with the $(i,j)$-th entry as the squared Leapfrog distance between the centroids of the clusters that $i,j$ belong to, i.e. $\bar d_{i,j}:=\mathrm{LF}(\bmu_{C(i)}, \bmu_{C(j)})^2$.
%By construction, the matrix $\bar D$ is a squared leapfrog distance matrix of a ``centroid" dataset, which consists of $K$ distinct centroids of each cluster, $\bmu_1,\dots,\bmu_K$, and $|C_k|$ copies of each centroid $\bmu_k$. In the centroid dataset, points belonging to the same cluster coincide, i.e. $\bmu_{C(i)}=\bmu_{(C(i'))}$ if and only if $C(i)=C(i')$.
As the columns corresponding to the same cluster are identical, we have that $\mathrm{rank}(\bar D) \le K$.
Note that $\bar D$ cannot be algorithmically constructed since the clusters are not known in advance, but nonetheless, we prove that in the limit $n\rightarrow\infty$, it approximates $D$.

Next, define a noise matrix $E:=D-\bar D$ to decompose $D$ into the sum of a low-rank matrix $\bar D$ and a noise matrix $E$. Since the Gram matrix computation is a linear operator, the Gram matrix of $D$ can be rewritten as follows:
\begin{equation}
    G(D) = G(\bar D + E) = G(\bar D) + G(E),
    \label{eq:new_Gram}
\end{equation}
We analyze the Gram matrix $G(D)$ through the lenses of $G(\bar D)$ and $G(E)$.

\subsubsection*{Structure of $G(\bar D)$}

\begin{lemma}
The rank of $G(\bar D)$ is at most $K-1$.
  Aside from the $n-(K-1)$ zero eigenvalues, the remaining
  eigenvalues of $G(\bar D)$, with probability exponentially close
  to $1$ as $n\rightarrow \infty$, are either
  \begin{enumerate}[label=(\alph*)]
      \item bounded below in magnitude by $\sigma n$ and bounded above in magnitude by $\tau n$, where
      $\sigma>0, \tau>0$ do not depend on $n$
      and are defined in   \eqref{eq:sigmadef} and \eqref{eq:taudef} respectively, or
      \item bounded above in magnitude by $O(n^{2/3})$.
  \end{enumerate}
   % They
  %are bounded above by $\tau n$, where $\tau>0$ does not depend on $n$ and
  %is defined in \eqref{eq:taudef}.
  \label{lem:eigGbarDa}
\end{lemma}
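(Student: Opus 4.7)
The plan is to exploit the block-constant structure of $\bar D$ to reduce the $n\times n$ eigenvalue problem to a $K\times K$ one, then analyze that reduced problem via concentration of cluster sizes and of inter-cluster leapfrog distances.

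First I would encode the block structure algebraically: let $Z\in\{0,1\}^{n\times K}$ be the cluster indicator with $Z(i,k)=1$ iff $i\in C_k$, and let $M\in\R^{K\times K}$ be the symmetric matrix with $M(k,\ell)=\LF(\bmu_k,\bmu_\ell)^2$; then $\bar D=ZMZ^T$. Assuming without loss of generality that $c(1)=1$, we have $\bar D(1,:)=M(1,:)Z^T$ and $\mathbf 1=Z\mathbf 1_K$, so linearity of the Gram operator gives
\[
G(\bar D)=Z\tilde MZ^T,\qquad \tilde M(k,\ell):=M(k,\ell)-M(k,1)-M(1,\ell).
\]
Because $M(1,1)=0$, the first row and first column of $\tilde M$ vanish, so $\rank(\tilde M)\le K-1$ and hence $\rank(G(\bar D))\le K-1$, settling the first claim.

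For the eigenvalue magnitudes, I would observe that any nonzero eigenvalue of $Z\tilde MZ^T$ has eigenvector of the form $v=Zu$, and a short computation (substitute, then left-multiply by $Z^T$) shows this eigenvalue equals an eigenvalue of the symmetric $K\times K$ matrix $N^{1/2}\tilde MN^{1/2}$, where $N:=Z^TZ=\Diag(n_1,\ldots,n_K)$ and $n_k=|C_k|$. A Chernoff bound on the multinomial counts gives $\|N-nP\|=O(\sqrt{n\log n})$ with probability exponentially close to $1$, where $P:=\Diag(p_1,\ldots,p_K)$ and $p_k:=\int_{S_k}f$. Simultaneously, for $k\ne\ell$ the entry $M(k,\ell)=\LF(\bmu_k,\bmu_\ell)^2$ concentrates around a deterministic limit $M_\infty(k,\ell)\approx g_{k,\ell}^4$, where $g_{k,\ell}:=\dist(S_k,S_\ell)$: the optimal leapfrog path is forced to cross the gap in a single jump of cost roughly $g_{k,\ell}^2$, plus intra-cluster segments of total cost $o(1)$ by Theorem~\ref{thm:uppbdlf_dimd}, and the $\eps$-net construction already used to prove that theorem (with $\eps=n^{-1/d+\eta}$) provides data points close enough to the closest-pair witnesses across the gap to give a polynomial-rate bound on $|M(k,\ell)-M_\infty(k,\ell)|$.

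Combining the two concentration statements via Weyl's inequality yields $\|N^{1/2}\tilde MN^{1/2}-nP^{1/2}\tilde M_\infty P^{1/2}\|=O(n^{2/3})$, the exponent being chosen to absorb both error sources. Taking $\sigma$ and $\tau$ to be the smallest and largest absolute values among the nonzero eigenvalues of $P^{1/2}\tilde M_\infty P^{1/2}$, each of the $K-1$ nonzero eigenvalues of $G(\bar D)$ lies within $O(n^{2/3})$ of $n\mu_j$ for some eigenvalue $\mu_j$ of the limit matrix; if $\mu_j\ne 0$, the eigenvalue of $G(\bar D)$ falls in $[\sigma n,\tau n]$ in magnitude for large $n$, which is case (a), and otherwise it has magnitude $O(n^{2/3})$, which is case (b). The main obstacle is making the concentration of the off-diagonal entries of $M$ rigorous in dimension $d>1$, since Theorem~\ref{thm:uppbdlf_dimd} is only a one-sided (upper) bound on intra-cluster LF; the key observation is that inter-cluster LF is pinned from below by $g_{k,\ell}^2$ (no data lies in the gap) and from above by $g_{k,\ell}^2+o(1)$ via the same $\eps$-net, so a two-sided polynomial rate follows.
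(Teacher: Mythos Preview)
Your reduction is exactly the paper's: it also writes $\bar D$ in block form, obtains $G(\bar D)$ as a $K\times K$ block matrix with constant blocks equal to $G(A_0)(k,\ell)$ (your $\tilde M$, with $A_0=M$), and shows the nonzero eigenvalues of $G(\bar D)$ coincide with those of the $K\times K$ matrix $\tilde A$ with entries $\sqrt{n_kn_\ell}\,G(A_0)(k,\ell)$, i.e.\ your $N^{1/2}\tilde M N^{1/2}$. The rank bound and the Chernoff step on the $n_k$'s are identical.

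Where you diverge is the treatment of $M$ itself. The paper does \emph{not} pass to a deterministic limit $M_\infty$: it defines $\sigma,\tau$ via \eqref{eq:sigmadef}--\eqref{eq:taudef} directly from $FG(A_0)F=P^{1/2}\tilde M P^{1/2}$, using the sample-dependent $A_0$, and then only needs $\|\tilde A/n-FG(A_0)F\|_F=O(n^{-1/3})$, which follows from the Chernoff bound on the $n_k$'s alone (the entries of $G(A_0)$ are just bounded constants in this estimate). So your extra concentration argument for $M(k,\ell)\to M_\infty(k,\ell)$ is not part of the paper's proof and is unnecessary for the lemma as stated. Your version has the virtue of yielding genuinely deterministic $\sigma,\tau$, whereas the paper's constants are technically sample-dependent despite the ``do not depend on $n$'' clause; on the other hand, your proposed limit $M_\infty(k,\ell)\approx g_{k,\ell}^4$ is incorrect for $K>2$, since the optimal leapfrog path from $S_k$ to $S_\ell$ may route through intermediate clusters at near-zero intra-cluster cost---the correct limit of $\LF(\bmu_k,\bmu_\ell)$ is the shortest-path distance in the complete graph on $\{1,\ldots,K\}$ with edge weights $\dist(S_i,S_j)^2$, not the single direct gap.
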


\begin{proof}
  Let
  \begin{equation}
  p_m:=\int_{S_m}f(\x)\,d\x,
  \label{eq:pmdef}
  \end{equation}
  for $m=1,\ldots,K$.  By assumption, $p_m>0$ for $m=1,\ldots,K$ and $p_1+\cdots+p_K=1$. Let
  $A_0$ be the $K\times K$
  symmetric hollow matrix whose $(i,j)$th entry is $\mathrm{LF}(\bmu_i,\bmu_j)^2$ defined above.
  Let $G(A_0)$, as usual, be the Gram matrix corresponding to $A_0$.
  %Let us assume now that $L\le K$  eigenvalues of $G(A_0)$ are nonzero.
%   ({\bf TODO}: We should address the case
%   that $G(A_0)$ has a zero eigenvalue.)
  Let $F$ be the diagonal matrix whose $i$th diagonal entry is
  $\sqrt{p_i}$.  We will show that the eigenvalues
  of $G(\bar D)$ are closely related to those of the symmetric matrix
  $FG(A_0)F$.  Note that the $(i,j)$th entry of $FG(A_0)F$ is
  $\sqrt{p_ip_j}G(A_0)(i,j)$.  Let $L$ denote the number of nonzero eigenvalues of $G(A_0)$.
  Since $G(A_0)$ has $L$ nonzero
  eigenvalues, so does $FG(A_0)F$.
Define
  \be
  \sigma:=\min\{|\lambda|:\mbox{ $\lambda$ is a nonzero eigenvalue of $FG(A_0)F$}\}/2.
  \label{eq:sigmadef}
  \ee
  and
  \be
  \tau:=\max\{|\lambda|:\mbox{ $\lambda$ is a nonzero eigenvalue of $FG(A_0)F$}\}\cdot 2.
  \label{eq:taudef}
  \ee

  Suppose the first $n_1$ nodes are chosen from cluster 1, $n_2$ from cluster 2,
  etc., up to $n_K$.    Then $n_1+\cdots+n_K=n$.
  Note that $\bar D$ has the following structure. The first
  $n_1$ columns of $\bar D$ are identical, as are the next $n_2$ columns,
  and so on.  Thus, $\bar D$ is a $K\times K$ block matrix in which the
  $(i,j)$ block consists of an $n_i\times n_j$ submatrix of identical
  entries, which are equal to $A_0(i,j)$.
    Since the first row and column
  also agree blockwise with $A_0$, it follows that $G(\bar D)$ is also
  composed of $K\times K$ blocks, with constant values in each block.
Therefore, $G(\bar D)$ has only $K$ distinct columns, so its rank is at most $K$, i.e., $n-K$ of its eigenvalues equal 0. In fact, more strongly, $\rank(G(\bar D))\le K-1$ since all entries in the first block are zeros. This proves the first statement of the lemma.

  For the second part of the lemma, consider a candidate eigenvector $\v$ of $G(\bar D)$
  that has entry $u_1$ repeated $n_1$ times, $u_2$ repeated $n_2$ times, and so
  on up to $u_K$.  It follows that $\w:=G(\bar D)\v$ is composed of $K$ blocks
  with identical entries in each block.  In particular, the entries of the
  first block all equal $n_2G(A_0)(1,2)u_2+n_3G(A_0)(1,3)u_3+\cdots+n_KG(A_0)(1,K)u_K$,
  and so on for the remaining entries.  Thus, $\v$ is an eigenvector iff there
  exists a $\lambda$ such that
  \begin{align*}
    \lambda u_1 &= n_2G(A_0)(1,2)u_2+\cdots+n_KG(A_0)(1,K)u_K, \\
    &\vdots \\
    \lambda u_K &=n_1G(A_0)(K,1)u_1 +\cdots + n_{K-1}G(A_0)(K,K-1)u_{K-1}.
  \end{align*}
  By multiplying the $i$th equation by $\sqrt{n_i}$, this system of
  equations may be rewritten
  \begin{align*}
    \sqrt{n_1}\lambda u_1 &= \sqrt{n_1n_2}G(A_0)(1,2)(\sqrt{n_2}u_2)+\cdots\\
    &\hphantom{\le}\quad\mbox{} +
       \sqrt{n_1n_K}G(A_0)(1,K)(\sqrt{n_K}u_K), \\
    &\vdots \\
       \sqrt{n_K}\lambda u_K &=\sqrt{n_1n_2}G(A_0)(K,1)(\sqrt{n_1}u_1) +\cdots \\
    &\hphantom{\le}\quad\mbox{}+
       \sqrt{n_{K-1}n_K}G(A_0)(K,K-1)(\sqrt{n_{K-1}}u_{K-1}).
  \end{align*}
  In other words, the vector
  $$\left(\begin{array}{c}
  \sqrt{n_1}u_1 \\
  \vdots\\
  \sqrt{n_K}u_K
  \end{array}\right)$$
  is an eigenvector of the symmetric hollow
  matrix $\tilde A$ whose $(i,j)$th entry is $\sqrt{n_i n_j}G(A_0)(i,j)$.  Conversely, any eigenvalue/eigenvector pair of $\tilde A$ gives rise to an eigenvalue/eigenvector pair of $G(\bar D)$ by the same construction.
  One of these eigenvectors, $[1,\ldots,1,0,\ldots,0]^T$, has eigenvalue 0.
  The remaining $K$ possibly nonzero eigenvalues of $G(\bar D)$ arise from this formula.

  By using Chernoff bounds, with probability
  exponentially close to 1, $p_i n - n^{2/3} \le n_i \le p_i n+n^{2/3}$
  for all $i=1,\ldots, K$, which means $p_i-n^{1/3}\le  n_i/n \le p_i+n^{1/3}.$
  In other words, with probability exponentially close to 1,
  $$\|\tilde A/n - FG(A_0)F \|_F = O(n^{-1/3}).$$
  Since eigenvalues are continuous with respect to small perturbations,
  we conclude that for $n$ sufficiently large, $L$
  eigenvalues of $\tilde A$
  have magnitude at least equal to half those of $nFG(A_0)F$
  and at most twice those of $nFG(A_0)F$. The remaining $K-L$ eigenvalues have magnitude $O(n^{2/3})$.
\end{proof}

%\begin{corollary}
%With probability exponentially close to 1 as $n \to \infty$, the nonzero eigenvalues of $G(\bar D)$ is on the order of $\Theta(n)$.
%\end{corollary}

% \begin{claim}
% The rank of $G(\bar D)$ is at most $K-1$, where $K$ is the total number of clusters.
% \label{claim:rank_GbarD}
% \end{claim}
% Claim \ref{claim:rank_GbarD} is true

\subsubsection*{Structure of $G(E)$}
\begin{lemma}
Suppose the $f(\x)$ satisfies the admissibility conditions in Section $\ref{sec:LF_Rd}$. Then with probability exponentially close to $1$,
\begin{equation}
\begin{aligned}
&\norm{G(E)}_{2, \infty} \le 3 c_g n^{1/2-\gamma} = o(\sqrt{n}), \\
&\norm{G(E)}_{F} \le 3 c_g n^{1-\gamma} = o(n), \\
&\norm{G(E)}_{2} = 3 c_g n^{1-\gamma} = o(n),
\end{aligned}
\label{eq:E_F_2_norm}
\end{equation}
where $c_g$ and $\gamma$ do not depend on $n$
and are defined below by \eqref{eq:cgdef}.
\label{lem:E_2_norm}
\end{lemma}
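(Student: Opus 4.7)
The plan is to bound $E = D - \bar D$ entrywise using Theorem~\ref{thm:uppbdlf_dimd}, and then propagate the bound to the various norms of $G(E)$ via the explicit formula $G(E) = E - E(1,:)\mathbf{1}^T - \mathbf{1}\,E(1,:)^T$.

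For the entrywise bound I split into two cases. When $c(i)=c(j)$, we have $\bar D(i,j)=0$ by construction, so $|E(i,j)| = \LF(\a_i,\a_j)^2 \le (36\ell)^2 n^{-2/d+2\eta}$ on the high-probability event of Theorem~\ref{thm:uppbdlf_dimd}. When $c(i)\ne c(j)$, I factor $|E(i,j)|=|\LF(\a_i,\a_j)^2 - \LF(\bmu_{c(i)},\bmu_{c(j)})^2|$ via the difference-of-squares identity. The first factor $|\LF(\a_i,\a_j) - \LF(\bmu_{c(i)},\bmu_{c(j)})|$ is at most $\LF(\a_i,\bmu_{c(i)}) + \LF(\a_j,\bmu_{c(j)})$ by the metric property of $\LF$ noted in Section~\ref{sec:LF_dis}, and hence at most $72\ell n^{-1/d+\eta}$ by a second invocation of Theorem~\ref{thm:uppbdlf_dimd}. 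The second factor is bounded by a constant $M$ depending only on $S$: since the direct edge is a feasible path in the leapfrog graph, $\LF(\x,\y)\le\norm{\x-\y}^2$, which is bounded by $\mathrm{diam}(S)^2$ by compactness of $S$. Setting $C := 144 M \ell$, we obtain $|E(i,j)| \le C n^{-1/d+\eta}$ uniformly over all pairs (the intra-cluster case is only sharper).

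Propagation to matrix norms is then routine. Let $\gamma := 1/d - \eta$, which is positive for $\eta\in(0,1/d)$. From $\max_{i,j}|E(i,j)|\le C n^{-\gamma}$ we get $\norm{E}_F \le C n^{1-\gamma}$ and $\norm{E(1,:)}_2 \le C n^{1/2-\gamma}$. The triangle inequality applied to $G(E)$ gives
\[
\norm{G(E)}_F \le \norm{E}_F + 2\sqrt{n}\,\norm{E(1,:)}_2 \le 3 C n^{1-\gamma},
\]
and $\norm{G(E)}_2 \le \norm{G(E)}_F$ handles the spectral norm. For the $(2,\infty)$-norm, the $i$th row of $G(E)$ equals $E(i,:) - E(1,i)\mathbf{1}^T - E(1,:)$, and a row-wise triangle inequality gives $\norm{G(E)(i,:)}_2 \le \norm{E(i,:)}_2 + \sqrt{n}\,|E(1,i)| + \norm{E(1,:)}_2 \le 3 C n^{1/2-\gamma}$, uniformly in $i$. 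Taking $c_g := C$ then yields the three bounds stated in the lemma.

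The only delicate point is that we need a single uniform event controlling \emph{every} intra-cluster pair simultaneously; but Theorem~\ref{thm:uppbdlf_dimd} already delivers exactly this, so no additional union bound over the $O(n^2)$ pairs is required. The representative $\bmu_k$ is itself a random data point, but this poses no difficulty since the theorem's conclusion is uniform over all pairs of samples within a cluster. Finally, admissibility (in particular $f\ge\theta>0$ on each $S_k$) guarantees via a standard Chernoff argument that every $C_k$ is nonempty with exponentially high probability, so the choice of $\bmu_k$ is well-defined.
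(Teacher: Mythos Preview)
Your proof is correct and follows essentially the same approach as the paper: bound $|E(i,j)|$ via the difference-of-squares factorization, the metric triangle inequality for $\LF$, Theorem~\ref{thm:uppbdlf_dimd}, and the trivial bound $\LF(\x,\y)\le\mathrm{diam}(S)^2$, then propagate the uniform entrywise bound to the three norms of $G(E)$. The only cosmetic difference is that the paper first bounds each entry of $G(E)$ by $3c_g n^{-\gamma}$ and reads off the norms, whereas you apply the triangle inequality at the matrix level; both routes yield the same $3c_g$ factor (and your constant $C=144\,\mathrm{diam}(S)^2\ell$ is in fact the arithmetically correct value of $c_g$).
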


\begin{proof}
Recall that each entry of $E$ is
\begin{align*}
    E(i,j)&=\mathrm{LF}(\a_i,\a_j)^2-\mathrm{LF}(\bmu_{c(i)},\bmu_{c(j)})^2\\
    &=(\mathrm{LF}(\a_i,\a_j)+\mathrm{LF}(\bmu_{c(i)},\bmu_{c(j)})) \\
    &\hphantom{\le}\quad\mbox{} \cdot  (\mathrm{LF}(\a_i,\a_j)-\mathrm{LF}(\bmu_{c(i)},\bmu_{c(j)}))\\
    &\le 2\,\mathrm{diam}(S)^2 (\mathrm{LF}(\a_i,\a_j)-\mathrm{LF}(\bmu_{c(i)},\bmu_{c(j)}))\\
    &= 2\,\mathrm{diam}(S)^2(\mathrm{LF}(\a_i,\a_j)- \mathrm{LF}(\a_i,\bmu_{c(j)}) \\
    &\hphantom{\le}\quad\mbox{} + \mathrm{LF}(\a_i,\bmu_{c(j)}) -\mathrm{LF}(\bmu_{c(i)},\bmu_{c(j)}))\\
    &\le 2\,\mathrm{diam}(S)^2(\mathrm{LF}(\a_i,\bmu_{c(i)})+\mathrm{LF}(\a_j,\bmu_{c(j)}))\\
    &\le 72\, \mathrm{diam}(S)^2\ell n^{-1/d+\eta}.
\end{align*}
In the third line, we used the fact that the leapfrog distance between any two points is bounded by the square of their Euclidean distance. Note that $\mathrm{diam}(S)$ is finite since $S$ is the union of $K$ compact sets.   Also, $\ell$ is the upper bound on path length introduced in Section~\ref{sec:LF_Rd}, and $\eta$ is the arbitrarily small positive number from Theorem~\ref{thm:uppbdlf_dimd}.
Define
\begin{equation}
c_g:= 72\,\mathrm{diam}(S)^2\ell
\mbox{ and } \gamma:=1/d-\eta.
\label{eq:cgdef}
\end{equation}

Hence, the absolute value of each entry of $E$ is at most $c_g n^{-\gamma}$. By the construction of $G(E)$, we get that the absolute value of each entry of $G(E)$ is upper bounded by $3 c_g n^{-\gamma}$. Thus, the $(2,\infty)$-norm has the following upper bound:
\begin{align*}
&\norm{G(E)}_{2,\infty} \le \\
&\max_{i=1,\dots,n}\sqrt{n \cdot (3 c_g n^{-\gamma})^2} = 3 c_g n^{1/2-\gamma} = o(\sqrt{n}),
\end{align*}
the Frobenius norm of $G(E)$ has the following upper bound:
\[
\norm{G(E)}_{F} \le \sqrt{n^2 \cdot (3 c_g n^{-\gamma})^2} = 3 c_g n^{1-\gamma} = o(n).
\]
Moreover, the operator 2-norm of $G(E)$ is upper bounded by:
%
% Thus, if absolute value
% of every entry of $E$ is at most $\mathcal{O}(\ell n^{-1/d+\eta})$ then the $\ell_1$-norm of a column is at most $\mathcal{O}(\ell n^{1-1/d+\eta})$
% which is asymptotically smaller than $n$. Moreover, the $\ell_2$ norm of a column is at most $\mathcal{O}(\ell n^{1/2-1/d+\eta})$, we have $\norm{E}_{2,\infty}=o(\sqrt{n})$. Moreover,
\[
\norm{G(E)}_2 \le \norm{G(E)}_{F} \le 3 c_g n^{1-\gamma} = o(n).
\]
% \begin{align*}
%     \norm{E}_2 &= \sup_{\norm{\v}_2=1}\norm{E\v}_2\\
%     &= \sup_{\norm{\v}_2=1,i=1,\dots, n} \sqrt{n \cdot \left(\sum_{j} e_{i,j}v_j \right)^2}\\
%     &\le \sqrt{n \cdot \|\e_i\|^2 \cdot \|\v\|^2}\\
%     &\le \sqrt{n \cdot \mathcal{O}(\ell^2 n^{1-2/d+2\eta})}\\
%     & = o(n).
% \end{align*}
\end{proof}

\subsubsection*{Structure of $G(D)$}

\begin{lemma}
Let $L=\mathrm{rank}(G(\bar{D}))$. With probability exponentially close to $1$ as $n \to \infty$, the matrix $G(D)$ has $L$ eigenvalues on the order of $\Theta(n)$. The remaining $n-L$ eigenvalues are on the order of $o(n)$.
\label{lem:eigenval_G(D)}
\end{lemma}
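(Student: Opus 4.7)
The plan is to view this as a direct perturbation statement: combine the additive decomposition $G(D) = G(\bar D) + G(E)$ from \eqref{eq:new_Gram} with Weyl's inequality for symmetric matrices, feeding in the bounds proved in Lemmas~\ref{lem:eigGbarDa} and~\ref{lem:E_2_norm}. Both $G(\bar D)$ and $G(E)$ are symmetric (since $D$ and $\bar D$ are, and the centering $M \mapsto M - M(1,:)\mathbf{1}^T - \mathbf{1} M(1,:)^T$ preserves symmetry), so Weyl's inequality applies: listing the eigenvalues of each matrix in decreasing order, for every $i$,
\[
|\lambda_i(G(D)) - \lambda_i(G(\bar D))| \le \|G(E)\|_2.
\]
By Lemma~\ref{lem:E_2_norm}, $\|G(E)\|_2 \le 3 c_g n^{1-\gamma} = o(n)$ with probability exponentially close to $1$ as $n \to \infty$.

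Next I would invoke Lemma~\ref{lem:eigGbarDa}, which says that, also with probability exponentially close to $1$, the spectrum of $G(\bar D)$ splits into $L$ eigenvalues of magnitude in $[\sigma n, \tau n]$ and $n - L$ eigenvalues of magnitude at most $O(n^{2/3})$ (this second group absorbs both the $n - (K-1)$ forced zero eigenvalues and the remaining ``small nonzero'' eigenvalues arising from the $\tilde A$ construction). Plugging into the Weyl bound: at the $L$ indices where $|\lambda_i(G(\bar D))| \in [\sigma n, \tau n]$ we get $|\lambda_i(G(D))| \in [\sigma n - o(n), \tau n + o(n)] = \Theta(n)$; at the remaining $n - L$ indices we get $|\lambda_i(G(D))| \le O(n^{2/3}) + o(n) = o(n)$. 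A union bound on the two ``good'' events from the prior lemmas then yields the claim with probability exponentially close to $1$.

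There is no serious obstacle here; the result is essentially a corollary of Lemmas~\ref{lem:eigGbarDa} and~\ref{lem:E_2_norm} via Weyl. The one mild subtlety worth flagging is that $G(\bar D)$ need not be positive semidefinite, because the ``Gram matrix'' in this paper is built by basepoint subtraction rather than by double centering, so the $L$ large eigenvalues of $G(\bar D)$ may be of either sign. The decreasing-order convention in Weyl's inequality handles this automatically, pairing up positive eigenvalues at the top of the spectrum and any negative ones at the bottom without the need for a case split on sign.
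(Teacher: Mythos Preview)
Your proof is correct and follows essentially the same approach as the paper: both apply Weyl's inequality to the decomposition $G(D)=G(\bar D)+G(E)$ and then feed in the eigenvalue bounds from Lemma~\ref{lem:eigGbarDa} and the operator-norm bound from Lemma~\ref{lem:E_2_norm}. Your added remark about the possible sign of the large eigenvalues and the decreasing-order convention in Weyl is a helpful clarification that the paper leaves implicit.
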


\begin{proof}[Proof of Lemma \ref{lem:eigenval_G(D)}]
By Weyl's inequality and the decomposition of $G(D)$, we have the following bound for the $\ell$-th largest eigenvalue (in magnitude) of $G(D)$:
\[
\lambda_\ell (G(\bar D)) + \lambda_n (G(E)) \le \lambda_\ell(G(D)) \le \lambda_\ell (G(\bar D)) + \lambda_1 (G(E)).
\]
If $\ell \le L$, then with probability exponentially close to 1 asymptotically, we obtain
\[
\Theta(n) - o(n) \le \lambda_\ell(G(D)) \le \Theta(n) + o({n}),
\]
by Lemmas \ref{lem:E_2_norm} and \ref{lem:eigGbarDa}. We have verified the statement for the top $L$ eigenvalues of $G(D)$. Otherwise,
\[
0 - o({n}) \le \lambda_\ell(G(D)) \le 0 + o({n}).
\]
We conclude that the remaining eigenvalues are on the order of $o({n})$ as desired.
\end{proof}

As a consequence, we could heuristically determine the embedding dimension $L$, that is, $\mathrm{rank}(G(\bar D))$ by the procedure described in Section \ref{sec:MDS}.  Lemma \ref{lem:eigenval_G(D)} states that there are $L$ eigenvalues of $G(D)$ on the order of $\Theta(n)$ and $n-L$ eigenvalues on the order of $o({n})$.  More strongly, we have shown that the small eigenvalues are at most $O(n^{1-\gamma})$.  Therefore, asymptotically, a cutoff of $n^{1-\gamma/2}$ will correctly distinguish the zero from nonzero entries and determine $L$.  In practice, a heuristic that looks for a sharp dropoff is the eigenvalues should be used.

% Due to Lemma \ref{lem:E_2_norm} and the construction of $G(E)$, we observe that $\norm{G(E)}_2 = o(n)$.  Using , we conclude that the eigenvalues of $G(D)$ are $\lambda_\ell = \bar \lambda_\ell + o(n)$. Claim \ref{claim:non0_eigval_GbarD} implies that if $\bar \lambda_\ell \ne 0$, $\lambda_\ell = \Theta(n)$; otherwise, $\lambda_\ell = o(n)$. Moreover, Claim \ref{claim:rank_GbarD} states that there are $K$ such nonzero eigenvalues of $G(\bar D)$, which implies that such embedding dimension $L$ with desired properties described in Lemma \ref{lem:L_existence} exists and satisfies $L \ge K$.

\subsubsection*{Structure of $\hat G$}
Recall that $\hat G$ was defined by \eqref{eq:hatGdef}.
\begin{lemma}
  The following two upper bounds hold for $\hat G$
  \begin{align}
    \Vert G(\bar D)-\hat G\Vert_F &\le 6c_gn^{1-\gamma},
    \label{eq:Ghatfro}\\
    \left \Vert \sqrt{|G(\bar D)|}-\sqrt{|\hat G|}\right\Vert_F &\le \frac{6c_gn^{1-\gamma}}{\sqrt{\sigma n}} =c_g'n^{1/2-\gamma}
    \label{eq:cpnh}
  \end{align}
\end{lemma}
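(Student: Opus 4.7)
For the first bound \eqref{eq:Ghatfro}, the argument is immediate from the optimality of $\hat G$. By construction \eqref{eq:hatGdef}, $\hat G$ is the best rank-$L$ approximation of $G(D)$ in Frobenius norm, while $\mathrm{rank}(G(\bar D)) \le L$ by Lemma~\ref{lem:eigGbarDa}. Using the decomposition $G(D) = G(\bar D) + G(E)$ from Section~\ref{sec:decom_D}, this yields
\[
\|G(D) - \hat G\|_F \;\le\; \|G(D) - G(\bar D)\|_F \;=\; \|G(E)\|_F \;\le\; 3 c_g n^{1-\gamma},
\]
where the last inequality is Lemma~\ref{lem:E_2_norm}. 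The triangle inequality then produces $\|G(\bar D) - \hat G\|_F \le 2\|G(E)\|_F \le 6 c_g n^{1-\gamma}$.

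For the second bound \eqref{eq:cpnh}, the plan is to transfer the scalar Lipschitz estimate for $f(x)=\sqrt{|x|}$ to the symmetric-matrix setting. First, by Weyl's inequality together with \eqref{eq:Ghatfro} and Lemma~\ref{lem:eigGbarDa}, every nonzero eigenvalue of $\hat G$ has magnitude at least $\sigma n - 6 c_g n^{1-\gamma} \ge \sigma n /2$ for $n$ sufficiently large. Hence the spectra of both $G(\bar D)$ and $\hat G$ lie in $\{0\}\cup\{x:|x|\ge \sigma n/2\}$, on which $f$ is Lipschitz with constant of order $1/\sqrt{\sigma n}$. The operator transfer is then accomplished via the Sylvester-type identity
\[
|G(\bar D)| - |\hat G| \;=\; \sqrt{|G(\bar D)|}\,\Delta \;+\; \Delta\,\sqrt{|\hat G|}, \qquad \Delta \,:=\, \sqrt{|G(\bar D)|} - \sqrt{|\hat G|},
\]
together with the fact that the nonzero eigenvalues of both $\sqrt{|G(\bar D)|}$ and $\sqrt{|\hat G|}$ are at least $\sqrt{\sigma n/2}$. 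Standard analysis of this Sylvester operator (diagonalizing the two factors and reading off the denominators $\sigma_i+\tau_j$) gives $\|\Delta\|_F \le \||G(\bar D)|-|\hat G|\|_F/\sqrt{\sigma n/2}$. Combining with a preliminary bound $\||G(\bar D)|-|\hat G|\|_F = O(\|G(\bar D)-\hat G\|_F)$, obtained by applying the Davis-Kahan $\sin\Theta$ theorem to the spectral projectors of $G(\bar D)$ and $\hat G$ (whose separation between nonzero eigenvalues and zero is of order $\sigma n$, easily dominating the perturbation $6 c_g n^{1-\gamma}$) plus Weyl's inequality on the eigenvalues themselves, one arrives at \eqref{eq:cpnh} after collecting constants into $c_g'$.

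The main technical obstacle is this operator-Lipschitz step: neither $x\mapsto |x|$ nor $x\mapsto\sqrt{x}$ is globally Lipschitz at zero, and the positive- and negative-eigenvalue subspaces of $G(\bar D)$ and $\hat G$ need not align. The rescue is the large separation $\sigma n$ between the nonzero eigenvalues of $G(\bar D)$ and zero, which overwhelms the perturbation $n^{1-\gamma}$ by a factor of $n^{\gamma}$; this is precisely what permits Davis-Kahan to control the subspace perturbation quantitatively and is the origin of the $\sqrt{\sigma n}$ in the denominator of \eqref{eq:cpnh}.
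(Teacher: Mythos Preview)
Your proof of \eqref{eq:Ghatfro} is identical to the paper's. For \eqref{eq:cpnh}, your approach is correct but differs from the paper's in one key ingredient, and one step is stated a bit loosely.

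\textbf{The $|\cdot|$ step.} To bound $\||G(\bar D)|-|\hat G|\|_F$ you invoke Davis--Kahan on the positive/negative spectral projectors, exploiting the $\sigma n$ gap. This works and yields $O(n^{1-\gamma})$, though with a constant depending on $\tau/\sigma$ rather than exactly $6c_g$. The paper instead appeals to a general operator-Lipschitz fact due to Bhatia: for any symmetric $X,Y$, $\||X|-|Y|\|_F\le\|X-Y\|_F$, with no gap hypothesis at all. This is both sharper (constant~1) and cleaner, and it makes the formula $c_g'=6c_g/\sqrt{\sigma}$ come out exactly. Your route is more hands-on but buys nothing extra here; the gap you exploit is genuinely needed only for the square-root step, not for the absolute value.

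\textbf{The $\sqrt{\cdot}$ step.} Your Sylvester identity $|G(\bar D)|-|\hat G|=\sqrt{|G(\bar D)|}\,\Delta+\Delta\,\sqrt{|\hat G|}$ and the subsequent diagonalization are exactly the content of the paper's dedicated lemma (Lemma~\ref{lem:sqrtAB}), which bounds $\|\sqrt{X}-\sqrt{Y}\|_F\le\|X-Y\|_F/\min(\sqrt{\lambda_{\min^+}(X)},\sqrt{\lambda_{\min^+}(Y)})$ for PSD $X,Y$ of equal rank. One caution: the Sylvester operator $\Delta\mapsto A\Delta+\Delta B$ is \emph{singular} here because both $A=\sqrt{|G(\bar D)|}$ and $B=\sqrt{|\hat G|}$ have nontrivial kernels, so ``reading off denominators $\sigma_i+\tau_j$'' includes $0+0$. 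The rescue, which you should make explicit, is that for the particular $\Delta=A-B$ the $(i,j)$ entry in the bi-diagonal basis vanishes whenever $\sigma_i=\tau_j=0$; the paper's proof of Lemma~\ref{lem:sqrtAB} handles this by restricting the sum to indices with $\min(i,j)\le L$. With that observation your bound $\|\Delta\|_F\le\||G(\bar D)|-|\hat G|\|_F/\sqrt{\sigma n/2}$ goes through.
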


\begin{proof}
  Note that $\Vert G(D)-\hat G\Vert_F \le \Vert G(D)- G(\bar D)\Vert_F = \Vert G(E) \Vert_F$
  since $\hat G$ is the optimal rank-$L$ approximation to $G(D)$ in the
  Frobenius norm, whereas $G(\bar D)$ is another rank-$L$ matrix.
  Thus, $\Vert G(D)-\hat G\Vert_F\le 3c_gn^{1-\gamma}$ by Lemma \ref{lem:E_2_norm}.
  Now \eqref{eq:Ghatfro} follows from applying the triangle inequality
  to the bound in the previous sentence together with Lemma \ref{lem:E_2_norm}.
\end{proof}

If $X$ is a symmetric matrix, we let $|X|$ denote the matrix-absolute value,
i.e., if $X=Q\Lambda Q^T$ then $|X|=Q\Diag(|\lambda_1|,\ldots,|\lambda_n|)Q^T$.  Similarly,
$\sqrt{X}$ denotes the matrix square root of a positive semidefinite symmetric
matrix, and $\sqrt{|X|}$ means the composition of the two operations $\sqrt{\cdot}$
and $|\cdot|$ for a symmetric matrix $X$.

The following lemma is a special case of a theorem from Bhatia \cite{bhatia2013matrix} about matrix functions.

\begin{lemma}
  For any two $n\times n$ symmetric matrices $X,Y$, $\left \Vert |X|-|Y|\right\Vert_F\le
  \Vert X-Y\Vert_F$.
  \label{lem:Bhat}
\end{lemma}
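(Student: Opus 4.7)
The plan is to reduce the operator inequality to the elementary scalar inequality $\bigl||a|-|b|\bigr|\le|a-b|$ via the spectral theorem coupled with the unitary invariance of the Frobenius norm. This is the standard ``linearization'' trick that in fact works verbatim for any scalar function that is Lipschitz on $\R$, which explains why the result is a special case of Bhatia's theorem on operator Lipschitz functions.

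First I would take spectral decompositions $X = U\Lambda U^T$ and $Y = V M V^T$, where $U,V$ are orthogonal, $\Lambda = \Diag(\lambda_1,\ldots,\lambda_n)$, and $M = \Diag(\mu_1,\ldots,\mu_n)$, so that by definition $|X| = U|\Lambda|U^T$ and $|Y| = V|M|V^T$. Next I would introduce the overlap matrix $W := U^T V$. Because $U^T(\cdot)V$ is a composition of two orthogonal conjugations, it preserves the Frobenius norm, and a direct expansion yields
\[
\bigl(U^T(X-Y)V\bigr)_{ij} = (\lambda_i - \mu_j)\,W_{ij}, \qquad \bigl(U^T(|X|-|Y|)V\bigr)_{ij} = (|\lambda_i|-|\mu_j|)\,W_{ij}.
\]
Squaring entrywise and summing, combined with the scalar inequality $\bigl||\lambda_i|-|\mu_j|\bigr|\le|\lambda_i-\mu_j|$, then gives
\[
\left\Vert|X|-|Y|\right\Vert_F^2 = \sum_{i,j}(|\lambda_i|-|\mu_j|)^2 W_{ij}^2 \le \sum_{i,j}(\lambda_i-\mu_j)^2 W_{ij}^2 = \Vert X - Y\Vert_F^2,
\]
from which the conclusion follows by taking square roots.

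The only conceptual subtlety, and the pitfall I would flag as the main obstacle, is that $X$ and $Y$ generically do not commute, so one cannot simultaneously diagonalize them in a single orthonormal basis; the naive attempt to do so collapses the argument. The remedy is precisely the Schur--Hadamard style coupling through the overlap matrix $W = U^T V$, which encodes the mismatch between the two eigenbases. Once that mechanism is in place the remainder of the proof is bookkeeping, and the scalar reverse triangle inequality does all the real work.
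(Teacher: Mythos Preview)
Your proof is correct. The paper does not supply its own proof of this lemma; it simply cites Bhatia's theorem on operator Lipschitz functions and moves on. Your argument is therefore more self-contained than the paper's treatment.

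It is worth noting that the exact mechanism you use---simultaneously conjugating by the two eigenbases, introducing the overlap matrix $W=U^TV$, and reducing to an entrywise scalar inequality weighted by $W_{ij}^2$---is precisely the technique the paper employs in its proof of the very next result, Lemma~\ref{lem:sqrtAB}, where the scalar Lipschitz bound $|\sqrt{a}-\sqrt{b}|\le |a-b|/(\sqrt{a}+\sqrt{b})$ plays the role that the reverse triangle inequality plays here. So your approach is fully in the spirit of the surrounding arguments; the authors simply chose to outsource this particular instance to a reference rather than repeat the computation.
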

Therefore, we have:
\begin{equation}
\left \Vert |\hat G| - | G(\bar D)| \right \Vert_F \le 6c_gn^{1-\eps},
\label{eq:absdiff}
\end{equation}
due to \eqref{eq:Ghatfro}.
The following lemma is a sharper version of Bhatia's result for the special case of square root.
\begin{lemma}
  Let $X,Y$ be two $n\times n$ symmetric positive
  semidefinite matrices both of rank $k$.
  Let $\lambda_{\min^+}(X)>0$ (resp., $\lambda_{\min^+}(Y)$)
  denote the minimum positive eigenvalue of $X$ (resp., $Y$).
  Then
  \begin{equation}
  \left\Vert \sqrt{X}-\sqrt{Y}\right\Vert_F \le
  \frac{\Vert X-Y\Vert_F}
       {\min (\sqrt{\lambda_{\min^+}(X)}, \sqrt{\lambda_{\min^+}(Y)})}
       \label{eq:sqrtAB}
  \end{equation}
  \label{lem:sqrtAB}
\end{lemma}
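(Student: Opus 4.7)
The plan is to write, with $A := \sqrt{X}$, $B := \sqrt{Y}$, and $D := A - B$, the identity $X - Y = A^2 - B^2 = A(A-B) + (A-B)B = AD + DB$. Introducing the Sylvester-type operator $L(M) := AM + MB$ on $n \times n$ matrices, one has $X - Y = L(D)$, so the lemma reduces to the bound $\|L(D)\|_F \ge \sigma \|D\|_F$ with $\sigma := \min(\lambda_{\min^+}(A), \lambda_{\min^+}(B)) = \min(\sqrt{\lambda_{\min^+}(X)}, \sqrt{\lambda_{\min^+}(Y)})$.

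The main tool is the vectorized representation $\mathrm{vec}(L(M)) = (I \otimes A + B \otimes I)\mathrm{vec}(M)$. Since $A$ and $B$ are symmetric, the two Kronecker summands are symmetric and commute, so their sum is a symmetric operator whose eigenvalues are precisely the pairwise sums $\alpha_i + \beta_j$ taken over $\alpha_i \in \mathrm{spec}(A)$ and $\beta_j \in \mathrm{spec}(B)$, with eigenvectors $u_j \otimes v_i$. Zero eigenvalues arise exactly from pairing null eigenvalues of $A$ with null eigenvalues of $B$, and the smallest positive eigenvalue is exactly $\sigma$, attained by pairing the smallest positive eigenvalue of one of $A, B$ with a null eigenvalue of the other.

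The remaining task is to show that $D$ lies in the orthogonal complement of $\ker L$, so that the smallest-nonzero-singular-value bound applies to it. I would first establish $\ker L = \{M : AM = 0 \text{ and } MB = 0\}$. Working in an eigenbasis of $A$, if $AM + MB = 0$ and $\alpha_i > 0$, the $i$-th row of the equation rearranges to $B M_{i,:}^\top = -\alpha_i M_{i,:}^\top$, requiring $-\alpha_i$ to be an eigenvalue of the PSD matrix $B$, which is impossible unless $M_{i,:} = 0$. For indices with $\alpha_i = 0$ the equation collapses to $M_{i,:} B = 0$. Together these give $AM = 0$ and $MB = 0$, and the converse is immediate. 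Then for any such $M$, $\langle D, M\rangle_F = \mathrm{tr}(AM) - \mathrm{tr}(BM) = 0$, so $D \perp \ker L$ as needed. Combining the two ingredients, $\|X - Y\|_F = \|L(D)\|_F \ge \sigma \|D\|_F$, which rearranges to the asserted bound.

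The most delicate step is the kernel characterization, where it is essential that $B$ is positive \emph{semi}definite rather than merely symmetric: this is precisely what eliminates the forbidden negative eigenvalue $-\alpha_i$ and collapses what would be a much larger kernel for a generic Sylvester operator down to $\{M : AM = 0, MB = 0\}$. The rest of the argument is then a packaging of standard Kronecker-product spectral facts; in particular, the analysis suggests that the equal-rank assumption on $X$ and $Y$ is not strictly needed, only that both are PSD.
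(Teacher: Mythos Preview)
Your argument is correct. The paper takes a different, more elementary route: it diagonalizes $X=Q_X\Lambda_X Q_X^T$ and $Y=Q_Y\Lambda_Y Q_Y^T$, sets $Q=Q_X^TQ_Y$, and computes directly
\[
\bigl\|\sqrt{X}-\sqrt{Y}\bigr\|_F^2=\sum_{i,j}Q(i,j)^2\bigl(\sqrt{\lambda_{X,i}}-\sqrt{\lambda_{Y,j}}\bigr)^2,
\]
then applies the scalar identity $(\sqrt{a}-\sqrt{b})^2=(a-b)^2/(\sqrt{a}+\sqrt{b})^2$ termwise. The equal-rank hypothesis enters explicitly: terms with $\min(i,j)>k$ vanish because both eigenvalues are zero, and on the surviving terms at least one of $\sqrt{\lambda_{X,i}},\sqrt{\lambda_{Y,j}}$ is bounded below by $\sigma$. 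Your Sylvester-operator approach encodes the same termwise factorization $a^2-b^2=(a-b)(a+b)$ as the spectrum $\{\alpha_i+\beta_j\}$ of $I\otimes A+B\otimes I$, but replaces the rank bookkeeping by the cleaner observation $D\perp\ker L$; this is precisely why, as you note, the equal-rank assumption becomes inessential. The paper's version is more elementary (no Kronecker machinery, just a coordinate computation); yours is more conceptual and delivers the mild generalization for free. One small imprecision: the smallest positive eigenvalue of $L$ equals $\sigma$ only when both $A$ and $B$ are rank-deficient (i.e., $k<n$); when $k=n$ it is $\lambda_{\min}(A)+\lambda_{\min}(B)\ge\sigma$, which of course still yields the stated bound.
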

\begin{proof}
  Let $Q_X\Lambda_XQ_X^T$ and $Q_Y\Lambda_YQ_Y^T$ be the diagonalizations
  of $X$ and $Y$ respectively, with the eigenvalues listed in decreasing
  order.  Then
  \begin{align*}
    \left \Vert \sqrt{X}-\sqrt{Y}\right\Vert_F &= \left\Vert Q_X\sqrt{\Lambda_X}Q_X^T -
    Q_Y\sqrt{\Lambda_Y}Q_Y^T \right\Vert_F \\
    & = \left \Vert\sqrt{\Lambda_X}Q_X^TQ_Y-Q_X^TQ_Y\sqrt{\Lambda_Y}\right\Vert_F,
  \end{align*}
  where the previous line follows from multiplying on the left by $Q_X^T$
  and on the right by $Q_Y$.  Let $Q=Q_X^TQ_Y$.  Then, continuing the above
  chain of equations and using the notation $\lambda_{X,i}$ (resp., $\lambda_{Y,i}$)
  for the $i$th eigenvalue of $X$ (resp., $Y$), we have

  \begin{align*}
    &\left \Vert \sqrt{X}-\sqrt{Y}\right\Vert_F^2 \\
    &=
    \left \Vert\sqrt{\Lambda_X}Q-Q\sqrt{\Lambda_Y}\right\Vert_F^2 \\
      &=\sum_{i=1}^n\sum_{j=1}^n Q(i,j)^2(\sqrt{\lambda_{X,i}}-\sqrt{\lambda_{Y,j}})^2 \\
      &=\sum_{\substack{i=1,\dots,n\\ j = 1, \dots, n \\ \min(i,j)\le k}} Q(i,j)^2(\sqrt{\lambda_{X,i}}-\sqrt{\lambda_{Y,j}})^2 \\
      &=\sum_{\substack{i=1,\dots,n\\ j = 1, \dots, n \\ \min(i,j)\le k}} Q(i,j)^2\left(\frac{\lambda_{X,i}-\lambda_{Y,j}}
      {\sqrt{\lambda_{X,i}}+\sqrt{\lambda_{Y,j}}}\right)^2 \\
      &  \le
        \frac{1}{(\min (\sqrt{\lambda_{\min^+}(X)}, \sqrt{\lambda_{\min^+}(Y)}))^2}
        \cdot
        \sum_{\substack{i=1,\dots,n\\ j = 1, \dots, n \\ \min(i,j)\le k}} Q(i,j)^2(\lambda_{X,i}-\lambda_{Y,j})^2 \\
      &=
        \frac{1}{(\min (\sqrt{\lambda_{\min^+}(X)}, \sqrt{\lambda_{\min^+}(Y)}))^2}
        \cdot
        \sum_{i=1}^n\sum_{j=1}^n Q(i,j)^2(\lambda_{X,i}-\lambda_{Y,j})^2 \\
      &=
        \frac{1}{(\min (\sqrt{\lambda_{\min^+}(X)}, \sqrt{\lambda_{\min^+}(Y)}))^2}
        \cdot
        \Vert \Lambda_XQ - Q\Lambda_Y\Vert_F^2 \\
        &=
        \frac{1}{(\min (\sqrt{\lambda_{\min^+}(X)}, \sqrt{\lambda_{\min^+}(Y)}))^2}
        \cdot
        \Vert \Lambda_XQ_X^T Q_Y - Q_X^T Q_Y \Lambda_Y\Vert_F^2 \\
      &=
        \frac{1}{(\min (\sqrt{\lambda_{\min^+}(X)}, \sqrt{\lambda_{\min^+}(Y)}))^2}
        \cdot
        \Vert X - Y\Vert_F^2.
  \end{align*}
\end{proof}

Recall
Lemma~\ref{lem:eigGbarDa}, which states that $\lambda_{\min^+}(|G(\bar D)|)\ge \sigma n$.
The same lower bound applies to $|\hat G|$ since, by
\eqref{eq:Ghatfro} and Lemma~\ref{lem:Bhat}, $|G(D)|$ and $|\hat G|$ are very
close and both have rank $L$.  Thus, applying Lemma~\ref{lem:sqrtAB}
to \eqref{eq:absdiff}
yields
\begin{equation*}
  \left \Vert \sqrt{|G(\bar D)|}-\sqrt{|\hat G|}\right\Vert_F \le \frac{6c_gn^{1-\gamma}}{\sqrt{\sigma n}} =c_g'n^{1/2-\gamma}
  % \label{eq:cpnh}
\end{equation*}
where $c_g'=6c_g/\sqrt{\sigma}$.  This holds with probability exponentially close to 1.

\subsection{Construction and analysis of $\bar \b_i$}
\label{sec:bar_b}

We define embeddings $\bar\b_1,\ldots,\bar\b_n$ by applying the same formulas to $\bar D$.  In other words, define the Gram matrix $G(\bar D) = \bar D - \bar D(1,:) \mathbf{1}^T - \mathbf{1} \bar D(1,:)^T$, then determine the eigendecomposition of $G(\bar D)=: \bar Q \bar \Lambda \bar Q^T$ and let $[\bar \b_1, \bar \b_2, \dots, \bar \b_n] \equiv \bar B := |\bar \Lambda_L|^{\frac{1}{2}} \bar Q_L$.
Due to the choice of $L$, we have $L=\mathrm{rank}(G(\bar{D}))$ and hence we could embed $G(\bar D)$ exactly by points $\bar \b_i \in \mathbb R^L$, i.e., $|G(\bar D)| = \bar B^T \bar B$.

Moreover, two points $i,j$ belong to the same cluster if and only if the re-embeddings of their canonical representatives $\bmu_{c(i)}, \bmu_{c(j)}$ coincide,  i.e. $\bar \b_i = \bar \b_{i'}$ if and only if $c(i)=c(i')$ as shown in the following two lemmas.

\begin{lemma}
  Suppose $i,j$ are in the same cluster.  Then $\bar\b_i=\bar\b_j$.
  \label{lem:identical_embeddings}
\end{lemma}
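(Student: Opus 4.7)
My plan is to reduce the claim to a simple linear-algebra observation about block-constant symmetric matrices. The whole argument is deterministic once one conditions on the cluster assignments; no probabilistic input is needed at this step.

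First, I would note that if $c(i) = c(i')$ then, for every $k$,
$\bar D(i,k) = \mathrm{LF}(\bmu_{c(i)},\bmu_{c(k)})^2 = \mathrm{LF}(\bmu_{c(i')},\bmu_{c(k)})^2 = \bar D(i',k)$,
so rows $i$ and $i'$ of $\bar D$ coincide; by symmetry of $\bar D$ the columns do too. A direct entry-wise check then shows that the centering operator $M \mapsto M - M(1,:)\mathbf{1}^T - \mathbf{1} M(1,:)^T$ preserves this row/column equality. The matrix $\bar D(1,:)\mathbf{1}^T$ has identical rows, so it never distinguishes rows $i$ and $i'$; the matrix $\mathbf{1}\bar D(1,:)^T$ has $(i,j)$-entry $\bar D(1,i)$, so its rows $i$ and $i'$ agree precisely because $\bar D(1,i)=\bar D(1,i')$ (part of the column-identity we already have). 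Hence rows $i,i'$ of $G(\bar D)$ are equal, and likewise columns.

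Second, I would transfer this row equality to the eigenvectors corresponding to nonzero eigenvalues. For any eigenvector $\v$ of $G(\bar D)$ with eigenvalue $\lambda\ne 0$, equating the $i$th and $i'$th entries of $G(\bar D)\v = \lambda\v$ gives $\lambda v_i = \lambda v_{i'}$, whence $v_i = v_{i'}$. This conclusion holds for \emph{every} vector in every nonzero-eigenvalue eigenspace, so it is independent of which orthonormal basis one picks inside an eigenspace with repeated eigenvalues; this handles the only mild subtlety I foresee.

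Finally, since $L = \mathrm{rank}(G(\bar D))$ by construction (cf.\ Lemma \ref{lem:eigenval_G(D)}), each column of $\bar Q_L$ corresponds to a nonzero eigenvalue, so by the previous paragraph its $i$th and $i'$th entries coincide. Equivalently, rows $i$ and $i'$ of $\bar Q_L$ are identical, and left-multiplying by the diagonal matrix $|\bar\Lambda_L|^{1/2}$ in the formula $\bar B = |\bar\Lambda_L|^{1/2} \bar Q_L^T$ preserves equality of the corresponding columns of $\bar B$, giving $\bar\b_i = \bar\b_{i'}$ as required.
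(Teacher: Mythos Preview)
Your proof is correct and follows essentially the same approach as the paper: equal rows of $\bar D$ give equal rows of $G(\bar D)$, which via the eigenvalue equation force equal $i$th and $i'$th entries in every eigenvector with nonzero eigenvalue, hence equal columns of $\bar B=|\bar\Lambda_L|^{1/2}\bar Q_L^T$. One cosmetic slip: you have interchanged the descriptions of the two rank-one correction terms---with the paper's convention, $\bar D(1,:)\mathbf{1}^T$ has $(i,j)$-entry $\bar D(1,i)$ (so rows $i,i'$ agree because $\bar D(1,i)=\bar D(1,i')$) while $\mathbf{1}\bar D(1,:)^T$ is the one with identical rows---but this does not affect the argument.
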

\begin{proof}
Since $i,j$ are in the same cluster, they
have the same entries in $\bar D$, i.e., $\bar D(i,:)=\bar D(j,:)$.   By
construction of $G(\cdot)$, this implies  $G(\bar D)(i,:)=G(\bar D)(j,:)$.
Let the eigendecomposition of $G(\bar D)$ be $G(\bar D)=\bar Q \bar \Lambda \bar Q^T$, in
which only the first $L$ diagonal entries of $\bar \Lambda$ are nonzero.
Then $G(\bar D) \bar Q= \bar Q \bar \Lambda$.  Since rows $i,j$ of $G(\bar D)$ are identical,
so are rows $i,j$ of $G(\bar D) \bar Q$, i.e., rows $i$ and $j$ of
$\bar Q \bar \Lambda=\bar Q_L\bar\Lambda_L$.  Since $\bar \Lambda_L$ is diagonal and invertible, this
implies that $\bar Q_L(i,:)= \bar Q_{L}(j,:)$.

It suffices to prove that columns $i$ and $j$ of $\bar B=\sqrt{|\bar \Lambda_L|} \bar Q_L^T$ coincide due to the construction of $\bar \b_i, \bar \b_j$.
  Observe that columns $i$ and $j$ of $Q_L^T$ agree by the result at the end
  of the last paragraph.  Multiplying on the
  left by $\sqrt{|\bar{\Lambda}_L|}$ does not alter this agreement,
  so we conclude that columns $i$ and $j$
  of $\bar B$ coincide.
\end{proof}

\begin{lemma}
  Suppose $i,j$ are in different clusters.  Then,
  with probability exponentially close to $1$, $\Vert \bar \b_i-\bar \b_j\Vert\ge c$,
  where $c>0$  is independent of $n$.
  \label{lem:bbar_inter_dist}
\end{lemma}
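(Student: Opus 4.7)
The plan is to reduce the problem to a finite-dimensional computation independent of $n$ asymptotically, exploiting the block structure of $G(\bar D)$. From $\bar B^T \bar B = |G(\bar D)|$, the distance admits the quadratic-form representation $\|\bar \b_i - \bar \b_j\|^2 = (\e_i - \e_j)^T |G(\bar D)|(\e_i - \e_j)$. Since $\bar D$ is block-constant on cluster--cluster blocks, I would write $G(\bar D) = T\,G(A_0)\,T^T$, where $T \in \R^{n \times K}$ is the cluster-indicator matrix and $A_0$ is the $K \times K$ matrix of squared leapfrog distances $A_0(k,k') = \LF(\bmu_k,\bmu_{k'})^2$ (assuming, WLOG by reordering, that point $1$ lies in cluster $1$).

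The next step is to diagonalize using the orthonormal frame $\hat{\mathbf{1}}_k := \mathbf{1}_k/\sqrt{n_k}$, where $\mathbf{1}_k \in \R^n$ is the indicator of cluster $k$ and $n_k = |C_k|$. Using the same change of variables as in Lemma~\ref{lem:eigGbarDa}, $G(\bar D)$ acts as the $K\times K$ matrix $M := N^{1/2} G(A_0) N^{1/2}$ on $\mathrm{span}(\hat{\mathbf{1}}_1,\ldots,\hat{\mathbf{1}}_K)$ and vanishes on its orthogonal complement, with $N = \Diag(n_1,\ldots,n_K)$; the same statement holds for $|G(\bar D)|$ with $|M|$ in place of $M$. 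Projecting $\e_i - \e_j$ (for $i \in C_k$, $j \in C_{k'}$, $k \ne k'$) onto this span yields $\w_{k,k'} := \e_k/\sqrt{n_k} - \e_{k'}/\sqrt{n_{k'}} \in \R^K$, so that $\|\bar \b_i - \bar \b_j\|^2 = \w_{k,k'}^T |M|\, \w_{k,k'}$. By the Chernoff concentration on the $n_k$'s already used in Lemma~\ref{lem:eigGbarDa}, $M/n \to F\,G(A_0)\,F$ and $\sqrt{n}\,\w_{k,k'}\to \tilde\w_{k,k'} := \e_k/\sqrt{p_k} - \e_{k'}/\sqrt{p_{k'}}$ with $F := \Diag(\sqrt{p_1},\ldots,\sqrt{p_K})$, so the quadratic form converges to the deterministic scalar $\tilde\w_{k,k'}^T\,|FG(A_0)F|\,\tilde\w_{k,k'}$.

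The hardest part will be establishing strict positivity of this limit, since $|FG(A_0)F|$ is only guaranteed to be PSD (the squared leapfrog distance matrix need not be a squared-Euclidean-distance matrix, so one cannot simply read $\|\bar\b_i - \bar\b_j\|^2$ off $A_0$). Positivity is equivalent to $\tilde\w_{k,k'}$ not lying in the null space of $FG(A_0)F$, and since $F$ is invertible with $F\tilde\w_{k,k'} = \e_k - \e_{k'}$, this further reduces to verifying $G(A_0)(\e_k - \e_{k'}) \ne 0$. I would read off the $k$-th and $k'$-th entries of this vector, which are $\mp\LF(\bmu_k,\bmu_{k'})^2$, and these are strictly nonzero: the supports $S_k, S_{k'}$ are disjoint compact sets with $\dist(S_k,S_{k'})>0$, and any leapfrog path between $\bmu_k$ and $\bmu_{k'}$ must include at least one edge of squared Euclidean length at least $\dist(S_k,S_{k'})^2$, giving $\LF(\bmu_k,\bmu_{k'}) > 0$. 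Taking $c$ to be half the minimum of the limiting quadratic form over the finitely many pairs $k \ne k'$ produces the constant required, and the exponentially-close concentration on $\{n_k\}$ upgrades the asymptotic statement to a high-probability bound for finite but large $n$.
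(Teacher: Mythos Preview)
Your approach is correct and genuinely different from the paper's. The paper argues directly on the rows of $G(\bar D)$: it lower-bounds $\Vert G(\bar D)(i,:)-G(\bar D)(j,:)\Vert$ by $\sqrt{\beta n}\,\delta^4$ using the fact that at least $\beta n$ entries differ by $\delta^4$ (where $\delta=\min_{k\ne k'}\dist(S_k,S_{k'})$), then divides by the eigenvalue upper bound $\tau n$ to control rows of $\bar Q_L$, and finally multiplies by the eigenvalue lower bound $\sqrt{\sigma n}$ to recover an $\Omega(1)$ distance between columns of $\bar B$. Your reduction via $\bar B^T\bar B=|G(\bar D)|$ to the fixed $K\times K$ quadratic form $\w_{k,k'}^T|M|\w_{k,k'}$, followed by passage to the deterministic limit $\tilde\w_{k,k'}^T|FG(A_0)F|\tilde\w_{k,k'}$, is cleaner and yields an explicit limiting constant; it also sidesteps the back-and-forth between row norms of $\bar Q_L$ and the two eigenvalue scales $\sigma n$, $\tau n$.

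There is one computational slip in your final step. With the paper's Gram convention $G(A_0)_{ab}=A_0(a,b)-A_0(1,a)-A_0(1,b)$, the $k$-th and $k'$-th entries of $G(A_0)(\e_k-\e_{k'})$ are \emph{not} $\mp\LF(\bmu_k,\bmu_{k'})^2$; a direct calculation gives $-A_0(1,k)+A_0(1,k')-A_0(k,k')$ and $A_0(k,k')-A_0(1,k)+A_0(1,k')$ respectively, and the first of these vanishes when $k=1$. The conclusion is nevertheless immediate, since
\[
(\e_k-\e_{k'})^T G(A_0)(\e_k-\e_{k'})=G(A_0)_{kk}-2G(A_0)_{kk'}+G(A_0)_{k'k'}=-2A_0(k,k')=-2\,\LF(\bmu_k,\bmu_{k'})^2\ne 0,
\]
so $\e_k-\e_{k'}\notin\ker G(A_0)$ as you need. (A minor side remark: the leapfrog path from $\bmu_k$ to $\bmu_{k'}$ may pass through other clusters, so the guaranteed hop is at least $\delta^2$ with $\delta=\min_{m\ne m'}\dist(S_m,S_{m'})$ rather than $\dist(S_k,S_{k'})^2$; this does not affect the strict positivity you require.)
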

\begin{proof}
  As in the preceding proof, assume $G(\bar D)=\bar Q \bar \Lambda \bar Q^T$ with the
  $L$ nonzero diagonal entries of $\bar \Lambda$ listed first.

  Let $\delta:=\min\{\mathrm{dist}(S_k,S_k'):1\le k<k'\le K\}$.  Note $\delta>0$ by asssumptions.
  Suppose $i\in C_1$ and $j\in C_2$.
  Consider the entries of row $i$ of $\bar D$ corresponding to $C_2$; these are at
  least $\delta^4$
  (squared once because of the definition of LF distance; squared again because
  $\bar D$ has squared distances).  On the other hand, these entries in row $j$ of $\bar D$
  are all 0's.
  Conversely, the entries corresponding to $C_1$ in row $j$ are at least
  $\delta^4$, while they are 0 in row $i$.
  By the Chernoff bound,
    with probability exponentially close to 1,
    $|C_k|\ge \beta n$ for all $k=1,\ldots,K$, where
    \begin{equation}
    \beta:=(2/3)\min(p_1,\ldots,p_K),
    \label{eq:betadef}
    \end{equation}
    with $p_1,\ldots,p_K$ given by \eqref{eq:pmdef}.
  Thus, there are at least $\beta n$
  entries of row $i$ exceeding the corresponding entries of row $j$ by $\delta^4$,
  and vice versa.

  In forming $G(\bar D)$
  from $\bar D$ we subtract the same
  row from every row, which does not affect the relationship in the last paragraph.
  We also subtract $\bar D(i,1)$ from each entry of row $i$ and $\bar D(j,1)$ from each entry
  of row $j$.  Depending on whether $\bar D(i,1)> \bar D(j,1)$, either one relationship
  or the other in the preceding paragraph still holds.  Thus, we
  conclude that, either way,
  $$\Vert G(\bar D)(i,:)-G(\bar D)(j,:)\Vert \ge \sqrt{\beta n}\delta^4.$$
  Since $\bar Q$ is an isometry, it follows that rows $i$ and $j$ of
  $G(\bar D)\bar Q$, and hence of $\bar Q \bar \Lambda$, differ by at least this amount.
  Note that all columns of $\bar Q \bar \Lambda$ after $L$ are zeros, so we can
  drop those columns and conclude
  that rows $i$ and $j$ of $\bar Q_L \bar \Lambda_L$ differ by at least
  $\sqrt{\beta n}\delta^4.$  Since the maximum absolute entry of
  $\bar \Lambda_L$ is $\tau n$,
  we conclude that rows $i$ and $j$ of $\bar Q_L$ differ by at least
  $\sqrt{\beta n}\delta^4/(\tau n)=\sqrt{\beta}\delta^4/(\tau\sqrt{n})$ in Euclidean distance.

  As in the previous proof, write the coordinates as
  columns of $$\bar B = \sqrt{|\bar \Lambda_L|} \bar Q_L^T.$$  We just showed
  that columns $i$ and $j$ of $\bar Q_L^T$ differ by at least
  $c'/\sqrt{n}$, where $c'=\sqrt{\beta}\delta^4/\tau$.  Furthermore, the
  minimum entry of $\sqrt{|\bar\Lambda_L|}$ is at least $\sqrt{\sigma n}$
  by Lemma~\ref{lem:eigGbarDa}.  Thus, the minimum distance between
  these two columns of
  $\sqrt{|\bar\Lambda_L|}\bar Q_L^T$ is at least
  $\sqrt{\sigma n}\cdot c'/\sqrt{n}$, which simplifies to $\Omega(1)$ independent
  of $n$.
\end{proof}

\subsection{Properties of $\b_i$}
\label{sec:b}

The geometry of the $\bar \b_i$'s is desirable for clustering: points in the same cluster coincide and distinct clusters are well separated. In this case,  one can find the perfect clustering by inspection. However, the algorithm has access only to the $\b_i$'s rather than the $\bar\b_i$'s.  Furthermore, it is not possible to claim that $\Vert B-\bar B\Vert$ is small because the $Q$ factor in an eigendecomposition is not stable under small perturbations.  Therefore, more elaborate arguments are required.

% #################### appendix 6.1 ends here ###################
\begin{manualtheorem}{\ref{thm:intra_dist_b_disjoint}} [Intracluster distance.]
Suppose $i,i'\in C_k$ for some $k=1,\ldots,K$. Then with probability exponentially close to $1$ as $n \to \infty$, there holds $\norm{\b_i - \b_{i'}}_2 \le \frac{6 c_g \sqrt{L}}{\sqrt{\sigma}} n^{-\gamma}  = o(1)$, where $c_g, \gamma, \sigma$ do not depend on $n$ and are defined by \eqref{eq:cgdef} and \eqref{eq:sigmadef}.
% \label{thm:intra_dist_b_disjoint}
\end{manualtheorem}

% ##################### appendix 6.1 starts here #################

\begin{proof}
%Suppose $i,i' \in C$ are two arbitrary datapoints that lie in the same support $C$. Our goal is to show that the intra-cluster distance $\norm{\b_i - \b_{i'}}$ is small.
Recall $B=|\Lambda_L|^{1/2}Q_L^T$, where $Q\Lambda Q^T$ is the eigendecomposition of $G(D)$, $Q_L=Q({:,1:L})$ and $\Lambda_L=\mathrm{Diag}(\lambda_1,\ldots,\lambda_L)$, where $\lambda_1,\ldots,\lambda_L$ are the eigenvalues of $G(D)$ with the largest magnitudes.
We can write the $\ell$th entry %of the embeddings of $\a_i,\a_{i'}$
explicitly:
$$
\b_i(\ell) = \sqrt{|\lambda_\ell|}\q_\ell(i), \quad \b_{i'}(\ell) = \sqrt{|\lambda_\ell|}\q_\ell(i'), \quad \forall \ell \in [L].
$$
%where $(\lambda_\ell, \q_\ell)$ is the $\ell$-th largest (in magnitude) eigenpair of $G(D)$ \textcolor{red}{We assume q is unit length later - is this implied?}.
Using the properties of eigenpairs, one can rewrite $\b_i$ as follows:
\begin{equation}
    \begin{aligned}
\b_i(\ell) &= \sqrt{|\lambda_\ell|}\q_\ell(i) \\
&= \frac{\text{sign}(\lambda_\ell)}{\sqrt{|\lambda_\ell|}}\sum_j G(D)(i,j) \q_\ell(j)\\
&=\frac{\text{sign}(\lambda_\ell)}{\sqrt{|\lambda_\ell|}}\sum_j G(D)(i',j) \q_\ell(j) + \frac{\text{sign}(\lambda_\ell)}{\sqrt{|\lambda_\ell|}}\sum_j (G(D)(i,j) - G(D)(i',j)) \q_\ell(j)\\
&=\sqrt{|\lambda_\ell|}\q_\ell(i')  + \frac{\text{sign}(\lambda_\ell)}{\sqrt{|\lambda_\ell|}}\sum_j (G(D)(i,j) - G(D)(i',j)) \q_\ell(j) \\
&= \b_{i'}(\ell) + \underbrace{\frac{\text{sign}(\lambda_\ell)}{\sqrt{|\lambda_\ell|}}\sum_j (G(D)(i,j) - G(D)(i',j)) \q_\ell(j)}_{T_\ell}\\
\end{aligned}
\label{eq:bil-bil'}
\end{equation}
Recall the construction of $G(\bar D)$, we have $G(\bar D)(i,:) = G(\bar D)(i',:)$ as $i,i'$ lie in the same support. Hence, we could further upper-bound $T_\ell$ by adding $G(D)(i,j) - G(D)(i',j)$ to each term inside the summation.
\begin{align*}
    T_\ell &= \frac{\text{sign}(\lambda_\ell)}{\sqrt{|\lambda_\ell|}}\sum_j (G(D)(i,j) - G(\bar D)(i,j) + G(\bar D)(i',j) - G(D)(i',j)) \q_\ell(j)\\
    &=\frac{\text{sign}(\lambda_\ell)}{\sqrt{|\lambda_\ell|}}\sum_j (G(E)(i,j) - G(E)(i',j)) \q_\ell(j)\\
    &\le \frac{1}{\sqrt{|\lambda_\ell|}} \norm{G(E(i,:)-G(E(i',:)}_{2} \norm{\q_\ell}_2 \\
    &\le \frac{2}{\sqrt{|\lambda_\ell|}} \norm{G(E)}_{2,\infty} \norm{\q_\ell}_2 \\
    &\le \frac{2}{\sqrt{\sigma n}} \cdot 3 c_g n^{1/2-\gamma}\\
    &= \frac{6 c_g }{\sqrt{\sigma}} n^{-\gamma}
\end{align*}
where the last few lines are due to the Cauchy-Schwarz and triangle inequalities, and the last inequality is due to Lemma \ref{lem:E_2_norm}.
Hence, the difference between $\b_i(\ell),\b_{i'}(\ell)$ is bounded as follows:
$$
\norm{\b_i - \b_{i'}}_2 = \sqrt{\sum_{\ell=1}^L T_\ell^2} \le \frac{6 c_g \sqrt{L}}{\sqrt{\sigma}} n^{-\gamma} = o(1).
$$
We still get $o(1)$ as the dimension of the $\b_i$'s and $\bar\b_i$'s is $L$, which is independent of $n$.
\end{proof}

\begin{manualtheorem}{\ref{thm:inter_dist_X_disjoint}}
[Intercluster distance.]
Suppose $i \in C_m, j\in C_{m'}$ with $m\ne m'$. Then with probability exponentially close to $1$ as $n \to \infty$, there holds $\norm{\b_i - \b_{j}}_2 = \Omega(1)$.
% \label{thm:inter_dist_X_disjoint}
\end{manualtheorem}

% ##################### appendix 6.1 starts here #################
\begin{proof}
Before we dive into the proof, let us recall that the construction of $B=|\Lambda_L|^{1/2} Q_L$ and $\bar B=|\bar \Lambda_L|^{1/2} \bar Q_L$ implies:
\[
B^T B = |\hat G| = |\hat G|^{1/2} |\hat G|^{1/2}, \qquad \bar B^T \bar B = |G(\bar D)| = |G(\bar D)|^{1/2} |G(\bar D)|^{1/2},
\]
\newcommand{\Bpadzero}[1]{\left(\begin{array}{c} #1 \\0\end{array}\right)}
which implies the existence of $n\times n$ orthogonal matrices $R, \bar R$ such that
\[
\Bpadzero{B} = R |\hat G|^{1/2}
\quad\mbox{and} \quad
\Bpadzero{\bar B} = \bar R |G(\bar D)|^{1/2},
\]
where the two blocks of zeros are $(n-L)\times n$.  Applying inequality \eqref{eq:cpnh} yields
\begin{equation}
    \left\Vert \Bpadzero{B} - R^T \bar R \Bpadzero{\bar B} \right\Vert_F = \left\Vert R\Bpadzero{B} - \bar R \Bpadzero{\bar B} \right\Vert_F = \left \Vert |G(\bar D)|^{1/2}-|\hat G|^{1/2}\right\Vert_F \le c_g' n^{1/2-\gamma}.
    \label{eq:B-barB}
\end{equation}
Without loss of generality, take $m=1$ and $m'=2$ in the Theorem.  Suppose $i$ lies in $C_1$ and $j$ lies in $C_2$. With probability exponentially close to 1 asymptotically, there are at least $\beta n$ data points in each of $C_1$ and $C_2$, where $\beta$ is defined in \eqref{eq:betadef}. Pick an arbitrary subset of $beta n$ indices from $C_1, C_2$ and call these subsets $F_1, F_2$ respectively. Let $B_{F_1}$ and $B_{F_2}$ denote the columns of $B$ indexed by $F_1, F_2$. As a consequence of \eqref{eq:B-barB}, we obtain
\begin{equation}
    \left\Vert \Bpadzero{B_{F_k}} - R^T \bar R \Bpadzero{\bar B_{F_k}} \right\Vert_F \le \left\Vert \Bpadzero{B} - R^T \bar R \Bpadzero{\bar B} \right\Vert_F \le c_g' n^{1/2-\gamma}, \quad k = 1,2.
    \label{eq:B_S1-barB_S1}
\end{equation}
Moreover, we have proven that $\Vert\bar \b_i - \bar \b_j\Vert > c$ for some $c$ that does not depend on $n$ for each pair $(i,j)\in C_1\times C_2$ in Lemma \ref{lem:bbar_inter_dist}, which implies
\begin{equation}
    \left\Vert R^T \bar R \Bpadzero{\bar B_{F_1}} - R^T \bar R \Bpadzero{\bar B_{F_2}} \right\Vert_F = \left\Vert \Bpadzero{\bar B_{F_1}} - \Bpadzero{\bar B_{F_2}} \right\Vert_F > c\sqrt{\beta n}.
    \label{eq:barB_S1-barB_S2}
\end{equation}
Adding and subtracting
\[
R^T \bar R \Bpadzero{\bar B_{F_1}} - R^T \bar R \Bpadzero{\bar B_{F_2}},
\]
we obtain
\begin{align*}
    \Vert B_{F_1} - B_{F_2} \Vert_F &=
    \left\Vert \Bpadzero{B_{F_1}} - \Bpadzero{B_{F_2}}\right\Vert_F
    \\
    &=
    \left\Vert \Bpadzero{B_{F_1}} - R^T \bar R
    \Bpadzero{\bar B_{F_1}} + R^T \bar R
    \Bpadzero{\bar B_{F_1}} - R^T \bar R
    \Bpadzero{\bar B_{F_2}} \right. \\
    &\qquad\left.\mbox{} + R^T \bar R
    \Bpadzero{\bar B_{F_2}}- \Bpadzero{B_{F_2}} \right\Vert_F\\
    &\ge \left\Vert R^T \bar R \Bpadzero{\bar B_{F_1}} - R^T \bar R \Bpadzero{\bar B_{F_2}} \right\Vert_F -\left\Vert\Bpadzero{ B_{F_1}} - R^T \bar R \Bpadzero{\bar B_{F_1}} \right\Vert_F \\
    &\qquad\mbox{}- \left\Vert R^T \bar R \Bpadzero{\bar B_{F_2}} - \Bpadzero{B_{F_2}} \right\Vert_F\\
    &> c\sqrt{\beta n} - 2c_g' n^{1/2-\gamma},
\end{align*}
where the last inequality is due to \eqref{eq:B_S1-barB_S1} and \eqref{eq:barB_S1-barB_S2}. Squaring both sides of the inequality, we have
\[
\Vert B_{F_1} - B_{F_2} \Vert_F^2 > c'\beta n,
\]
for $c'=c/2$ (valid as $n \to \infty$). Since there are $\beta n$ columns in $B_{F_1} - B_{F_2}$, then there exist some $i' \in F_1 \subseteq{C_1}, j' \in F_2 \subseteq{C_2}$ such that $\Vert \b_{i'} - \b_{j'} \Vert_2 > c'$. Coupled with the intra-cluster distance (Theorem \ref{thm:intra_dist_b_disjoint}), there also holds $\Vert \b_{i} - \b_{i'}\Vert_2 \le\frac{6 c_g \sqrt{L}}{\sqrt{\sigma}} n^{-\gamma}$ for $i,i'\in C_1$. Adding and subtracting $\b_{i'} - \b_{j'}$ from $\b_{i} - \b_{j}$, we obtain
\begin{align*}
\Vert \b_{i} - \b_{j} \Vert_2 &= \Vert \b_{i} - \b_{i'} +  \b_{i'} - \b_{j'} + \b_{j} - \b_{j} \Vert_2\\
&\ge \Vert \b_{i'} - \b_{j'} \Vert_2 - \Vert \b_{i} - \b_{i'}\Vert_2 - \Vert \b_{j} - \b_{j} \Vert_2  \\
&\ge c' - \frac{12 c_g \sqrt{L}}{\sqrt{\sigma}} n^{-\gamma}.
\end{align*}
Asymptotically, we get $\Vert \b_{i} - \b_{j} \Vert_2 = \Omega(1)$ as desired.
\end{proof}
% ##################### appendix 6.1 ends here #################

\section{Derivation of Recovery results}
\label{app:recovery}
\subsection{Recovery of non-convex clusters on disjoint supports (proof of theorem \ref{thm:disjoint_recovery})}
\label{app:recovery_disjoint}
\begin{manualtheorem}{\ref{thm:disjoint_recovery}}
% \label{thm:disjoint_recovery}
Suppose data $\a_1, \dots, \a_n$ are independent and identically distributed with a common law $f$, which is admissible and supported on the disjoint union of path-connected components $S_1, \dots, S_K$.
Then there exists $\lambda$ such \eqref{eq:son-clustering}
applied to the re-embeddings $\b_1, \dots, \b_n$
achieves perfect recovery of clusters $C_1, \dots, C_K$,
where $C_k=\{i:\a_i\in S_k\}$ for $k=1,\ldots,K$,
%with the leapfrog metric
with probability exponentially close to $1$ as $n\rightarrow \infty$. %provided $\lambda$ satisfies \eqref{eq:son_lambda_lb} and \eqref{eq:son_lambda_ub}.%tj
\end{manualtheorem}

% ##################### appendix 7 starts here #################

\begin{proof}
By Theorems \ref{thm:intra_dist_b_disjoint} and \ref{thm:inter_dist_X_disjoint}, we have the following properties for intra-cluster and inter-cluster distances of the re-embeddings $\b_1, \dots, \b_n$:
\begin{align*}
\norm{\b_i-\b_j} &= o(1) \; \forall i,j \in C_k, \\
 \norm{\b_i-\b_j} &= \Omega(1) \; \forall i\in C_k, j \notin C_k,
\end{align*}
for all  $k = 1, \dots, K$.
% Therefore, if we define $\lambda = p\min$
Thus, the upper bound of $\lambda$ stated in Equation \eqref{eq:son_lambda_ub} is on the order of $\frac{\norm{
\b_i - \b_j}}{2(n-1)} = \Omega\left(\frac{1}{n}\right)$.  Moreover, as argued in the proof of Lemma \ref{lem:bbar_inter_dist}, with probability exponentially close to 1 as $n \to \infty$, each cluster has size at least $\gamma n$ for some $\gamma > 0$. Thus, the lower bound of $\lambda$ stated in \eqref{eq:son_lambda_lb} is on the order of $\frac{\norm{\b_i - \b_j}}{|C_k|} = o\left(\frac{1}{n}\right)$.
Therefore,
\[
\frac{\norm{\b_i - \b_j}}{|C_k|} < \frac{\norm{\b_i - \b_j}}{2(n-1)},
\]
which implies the existence of $\lambda$ that satisfies the lower and upper bounds stated in \eqref{eq:son_lambda_lb} and \eqref{eq:son_lambda_ub}. As the existence of such $\lambda$ is a sufficient condition for successful clustering as stated in Theorem \ref{thm:recovery}, we obtain perfect recovery of clusters $C_1, \dots, C_K$.
\end{proof}

\subsection{Recovery of mixture of Gaussians in 1D (proof of Theorem \ref{thm:1DGaussiansRecovery})}
\label{app:recovery_gaussian}
\begin{manualtheorem}{\ref{thm:1DGaussiansRecovery}}
% \label{thm:1DGaussiansRecovery}
Let the vertices $a_1,\ldots,a_n$ be chosen i.i.d.~according to a Lipschitz-continuous probability density function $f(x)$ that is positive for all $x\in\R$.  Assume SON clustering is applied to this data after re-embedding according to new coordinates $b_1,\ldots,b_n$. Let $S_1$, \ldots, $S_K$ be disjoint bounded closed intervals ordered from left to right.
 Let $\rho_m$, $m=1,\ldots,K$, denote $\int_{S_m}f(x)\,dx$.
For a particular $m\in\{1,\ldots,K\}$, for any $\eps>0$, all the data points in $S_m$ will be clustered together with probability exponentially close to $1$ as $n\rightarrow\infty$ (that also depends on $\epsilon$) provided that
\begin{equation}
\lambda \ge      \frac{2\int_{S_m}(1/f(x))\,dx}{(\rho_m-\epsilon)n^2} + O(n^{-2.04}).
\label{eq:lambda_lb_a}
\end{equation}
Furthermore, the cluster associated with $S_m$ is distinct from the cluster associated with $S_{m'}$, $1\le m<m'\le K$, with probability exponentially close to $1$ as $n\rightarrow\infty$  provided that
\begin{equation}
\lambda < \frac{\min_{m=1,\ldots,K-1}\int_{T_m}(1/f(x))\,dx}{n^2} + O(n^{-2.04}),
    \label{eq:lambda_ub_a}
\end{equation}
where $T_m$ for $m\in\{1,\ldots,K-1\}$ denotes the interval comprising the gap between $S_m$ and $S_{m+1}$, i.e., $T_m=[\max_{x\in S_m} x, \min_{x\in S_{m+1}} x]$.
\end{manualtheorem}

\begin{proof}
%Due to the choice of $\theta$ and the definition of $V_m$, the intervals $I_1,\ldots,I_K$ are pairwise disjoint, hence so are all the $V_m$'s.
Let $V_m:=\{i:a_i\in I_m\}$ for $m=1,\ldots,K$.
In this proof, we will show that the two bounds in Theorem \ref{thm:1DGaussiansRecovery} satisfy both \eqref{eq:son_lambda_lb} and \eqref{eq:son_lambda_ub}, which verifies the sufficiency of Theorem \ref{thm:recovery} for the recovery of a mixture of Gaussians.

Let $\epsilon>0$ be fixed.  Fix an $m\in\{1,\ldots,K\}$.
First, we show that all the points indexed by $V_m$ are in the same
cluster.
%By definition of $V_m$, $\Vert\a_i-\a_j\Vert\le 2\theta\sigma_m$.
By definition of $b_i$'s, we have that with exponentially high probability for $a_i,a_j\in V_m$, $a_i<a_j$,
\begin{align}
|b_i-b_j| &=
\mathrm{LF}(a_i,a_j)\notag\\
&=\frac{2}{n}\int_{a_i}^{a_j}\frac{dx}{f(x)} +O(n^{-1.04})
\notag \\
&\le\frac{2}{n}\int_{S_m}\frac{dx}{f(x)} +O(n^{-1.04}),
\label{eq:ED_same_cluster}
\end{align}
%By the proof of Theorem 2 in \cite{jiang2020recovery},
%\begin{equation}
%\mathbb P\left[|V_m|\ge (F(\theta,1)w_m-\epsilon)n\right]
%\ge 1 - \exp(-2\epsilon^2 n),
%    \label{eq:epsndep1}
%\end{equation}
due to Theorem \ref{thm:ELF_U} and \ref{thm:LF_main}. Furthermore, with probability exponentially close to 1 that depends also on $\epsilon$, we have
\[
|V_m|\ge (\rho_m-\eps)n,
\]
by the Chernoff bound and Lemma~\ref{lemma:Chernoff}, since the points lie in $S_1,\ldots,S_K$ according to a multinomial distribution. Thus, by Theorem \ref{thm:recovery}, provided
% \textcolor{red}{TODO: add weight and fix the second term below}
$$\lambda \ge
     \frac{ (2/n)\int_{S_m}(1/f(x))\,dx+ O(n^{-1.04})}{(\rho_m-\epsilon)n}
     %\ge  \frac{|b_i - b_j|}{|V_m|}
     $$
the points indexed by $V_m$ are in the same cluster with probability exponentially close to 1 as $n \to \infty$.

For the second part of the theorem,
%suppose $\{\mu_{1},\dots, \mu_{K}\}$ are ascending and $m<m'$.
let $a_i,a_j$ be data points in $S_m$, $V_{m'}$ for some $m<m'$.  Then with probability exponentially close to 1,
\begin{align*}
\mathrm{LF}(a_i,a_j)&=\frac{2}{n}\int_{a_i}^{a_j}\frac{dx}
{f(x)} + O(n^{-1.04}) \\
&\ge \frac{2}{n}\int_{T_{m}\cup\cdots\cup T_{m'-1}}
\frac{dx}{f(x)}+O(n^{-1.04})
\end{align*}
Therefore, by \eqref{eq:lambda_ub_a}, with probability exponentially close to 1, $a_i$ and $a_j$ will be in different clusters provided
\[
\lambda \le \frac{(2/n)\min_{m=1,\ldots,K-1}\int_{T_m}(1/f(x))\,dx}{2n} + O(n^{-2.04})
\]
where we have replaced $n-1$ by $n$ in the denominator since the difference is smaller than the remainder term.
\end{proof}
% ##################### appendix 7 ends here #################

\subsection{Remarks and setup of numerical comparison on $\lambda$}
It is important to make two observations about this testing procedure.
First, neither the bounds on $\lambda$ of Theorem~\ref{thm:1DGaussiansRecovery} nor those in \cite{jiang2020recovery} are tight.  Therefore, we are comparing only sufficient conditions rather than exact conditions.  This is in keeping with our goal for this section to show that re-embedding strengthens the guarantees on the clusters (rather than the actual computed clusters).  For results on determination of actual clusters, we carry out experiments on data, which is the subject of Section \ref{sec:exper}.

Second, the bounds here as well as the bounds in \cite{jiang2020recovery} hold only for large $n$, and the computations in this section do not provide insight into valid values of $n$.

For a Gaussian mixture model with $S_m$ defined as above, we have the following lower bound of the quantity $\rho_m$ appearing in \eqref{eq:lambda_lb}:
\begin{align*}
\rho_m &= \int_{S_m}f(x)\,dx \\
&=\sum_{p=1}^K \frac{w_p}{\sigma_p\sqrt{2\pi}}\int_{S_m}\exp(-(x-\mu_p)^2/(2\sigma_p^2))\,dx\\
&\ge  \frac{w_m}{\sigma_m\sqrt{2\pi}}\int_{S_m}\exp(-(x-\mu_m)^2/(2\sigma_m^2))\,dx \\
&=\frac{w_m}{\sigma_m\sqrt{2\pi}}\int_{\mu_m-\theta\sigma_m}^{\mu_m+\theta\sigma_m}\exp(-(x-\mu_m)^2/(2\sigma_m^2))\,dx \\
&=\frac{w_m}{\sqrt{2\pi}}\int_{-\theta}^{\theta}\exp(-y^2/2)\,dy \\
&=w_m\mathrm{erf}(\theta).
\end{align*}

In our computations, instead of evaluating the quantity $\rho_m$ appearing in \eqref{eq:lambda_lb} with numerical quadrature, we use the lower bound of  $\rho_m$ above
in order to be consistent with the derivation of the bound in \cite{jiang2020recovery}.
On the other hand, we use numerical quadrature to evaluate the integrals of $1/f(x)$ since there are no obvious good upper and lower bounds on this quantity.  Finally, we take $\epsilon=0$ since we are interested only at asymptotic ranges.

\section{Additional figures}
\label{app:figures}

\subsection{Sum-of-norms Clustering}
\label{app:figure_son}
\begin{figure}[H]
\subfloat[][Original points]{
\includegraphics[scale = 0.5]{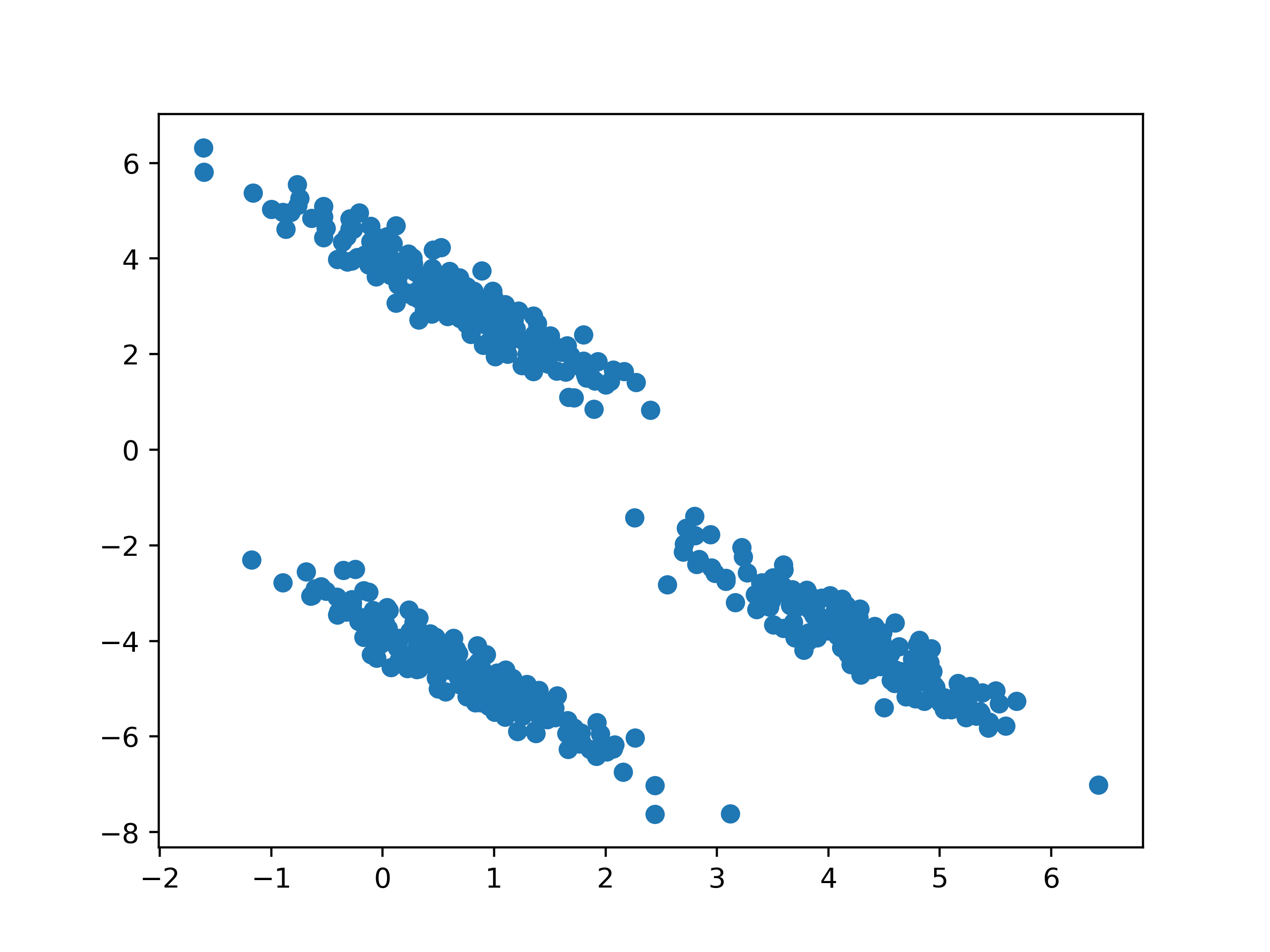}}
\subfloat[][Embedded points]{
\includegraphics[scale = 0.5]{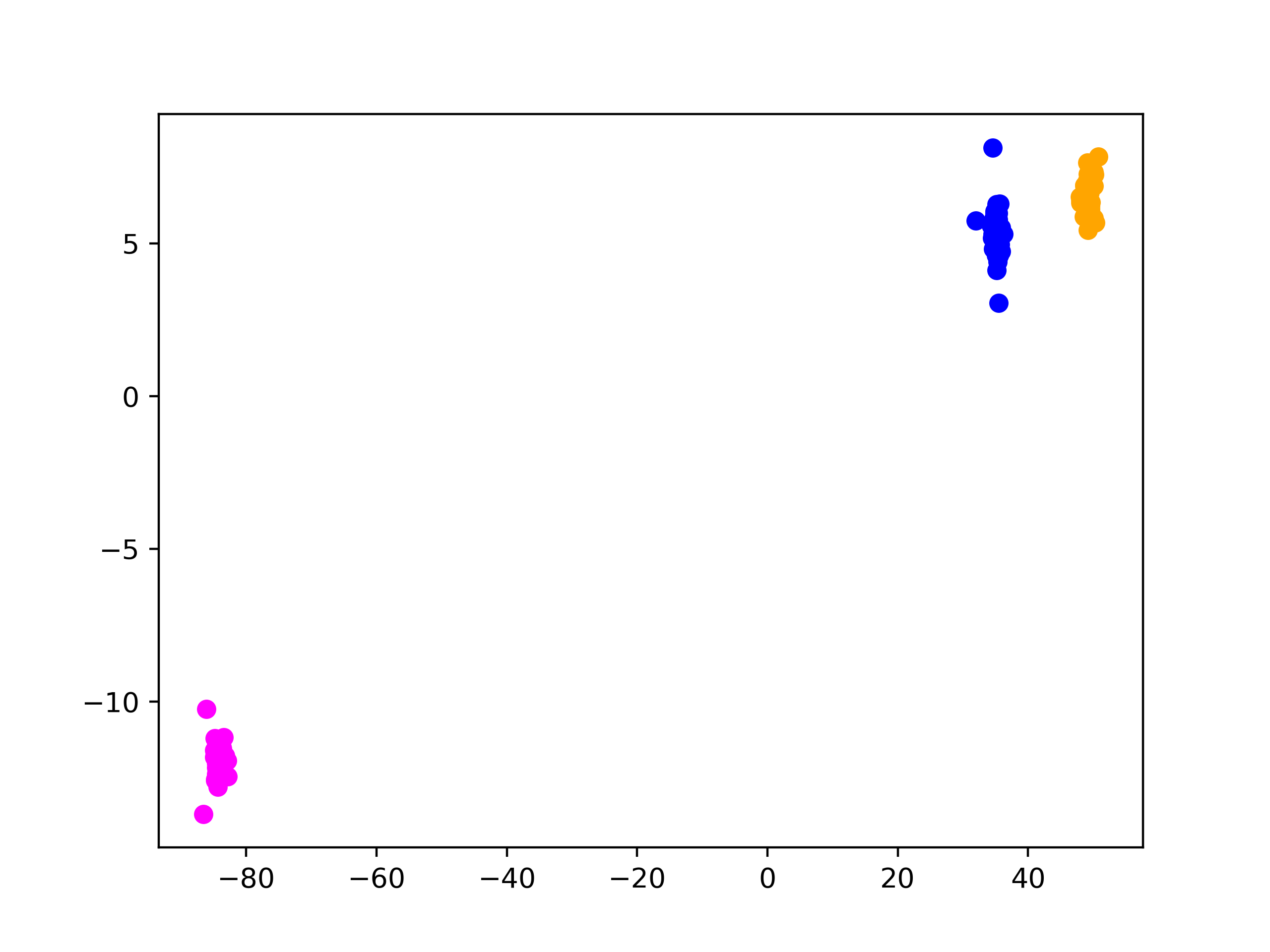}}
\caption{We apply the linear transformation $\begin{pmatrix} 0.6 & -0.6 \\ -0.4 & 0.8 \end{pmatrix}$ to the \texttt{make\_blobs} function in \texttt{sci-kit learn} with 600 points. (a) The original blobs in 2D space. (b) The embedded points, colored by the clusters as determined via sum-of-norms in this space. Perfect recovery is achieved.}
\label{fig:aniso}
\end{figure}

\begin{figure}[H]
\subfloat[][Original points]{
\includegraphics[scale = 0.5]{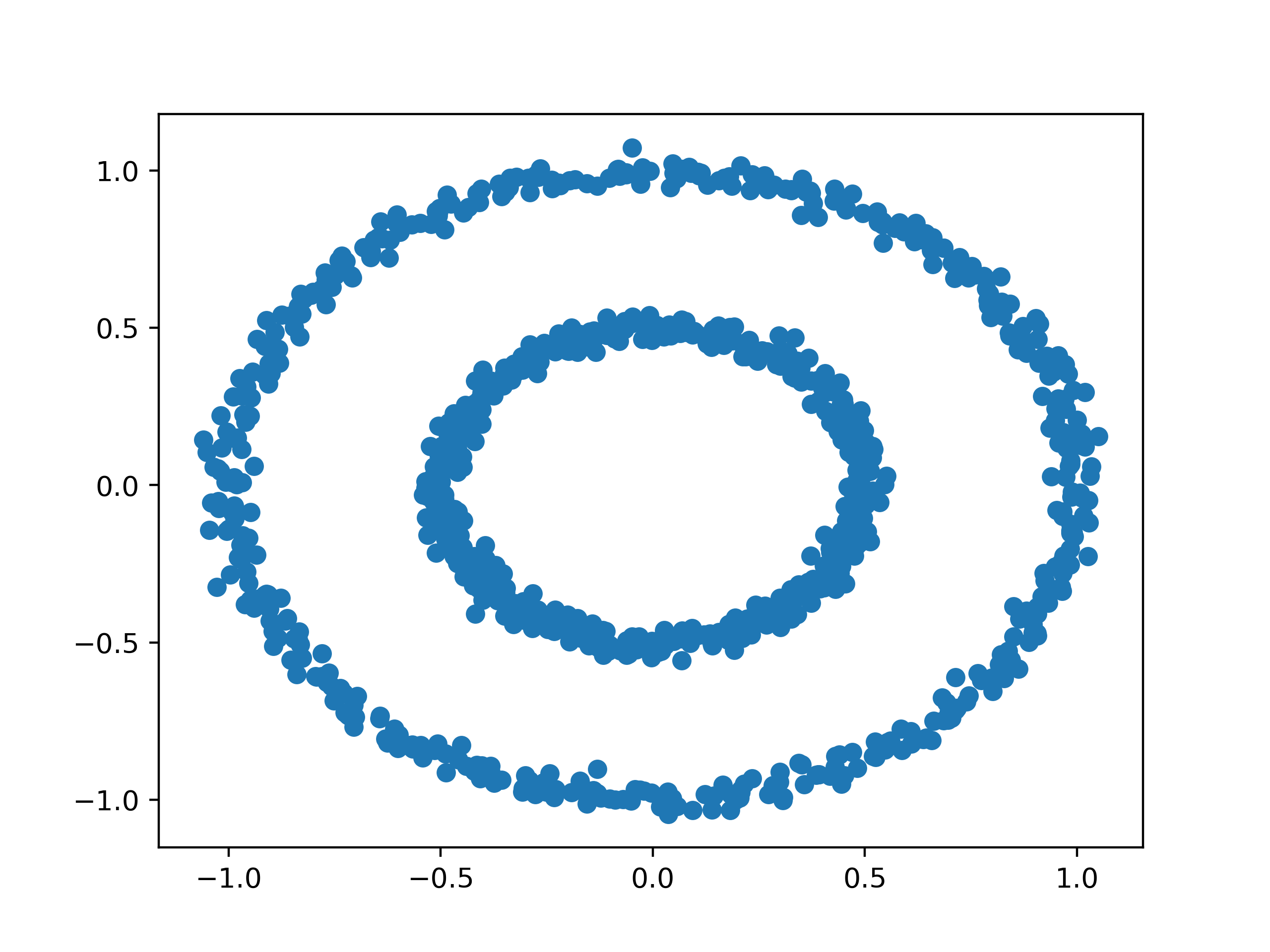}}
\subfloat[][Embedded points]{
\includegraphics[scale = 0.5]{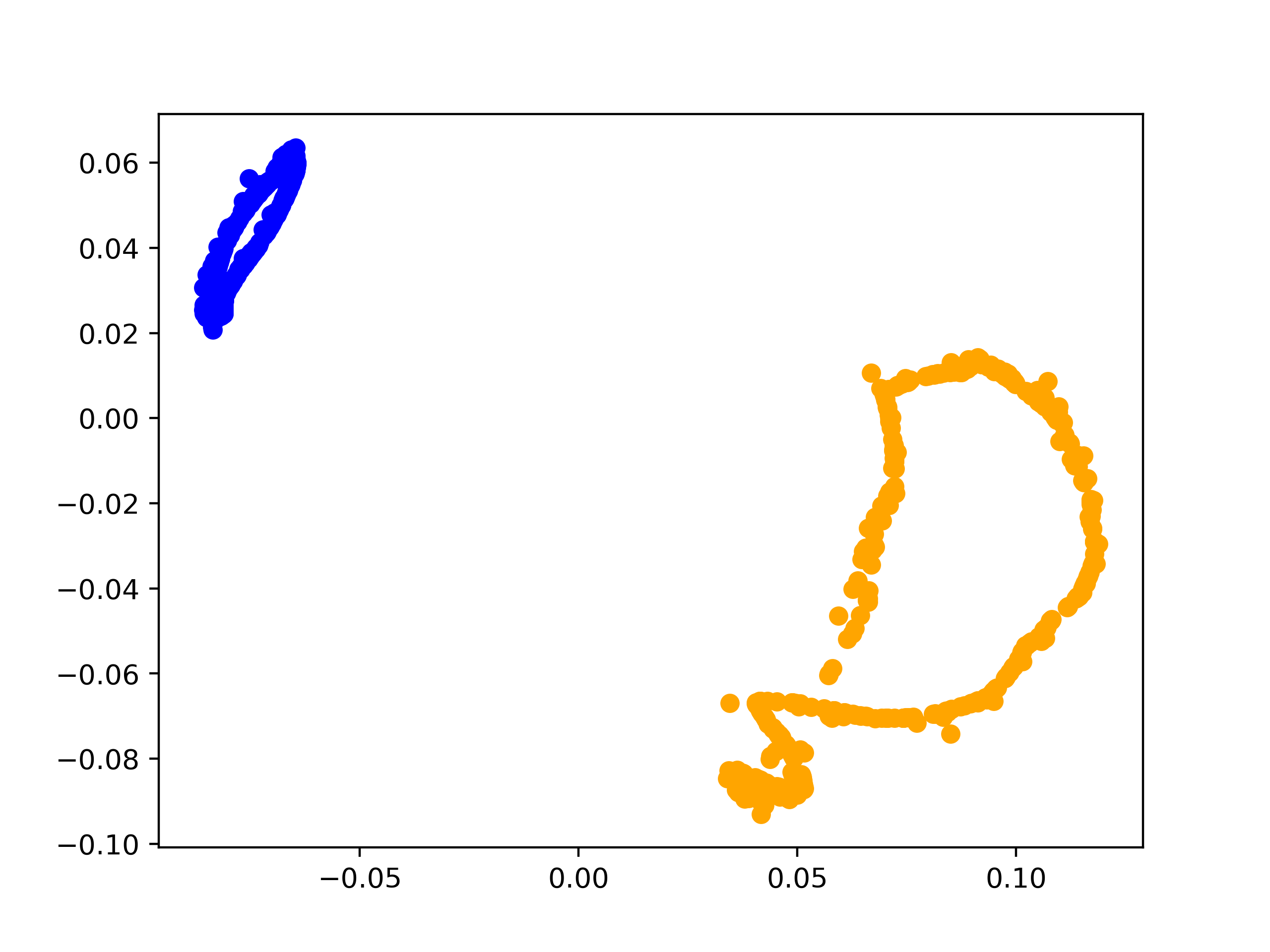}}
\caption{We use the \texttt{make\_circles} function in \texttt{sci-kit learn} with 1000 points, \texttt{noise = 0.025}, \texttt{factor = 0.5}. (a) The original moons in 2D space. (b) The re-embedded points, colored by the clusters as determined via sum-of-norms in this space. Perfect recovery is achieved.}
\label{fig:circ}
\end{figure}

\subsection{Agglomerative Clustering}
\label{app:figure_agg_clustering}
\begin{figure}[H]
\subfloat[][Original points]{
\includegraphics[scale = 0.5]{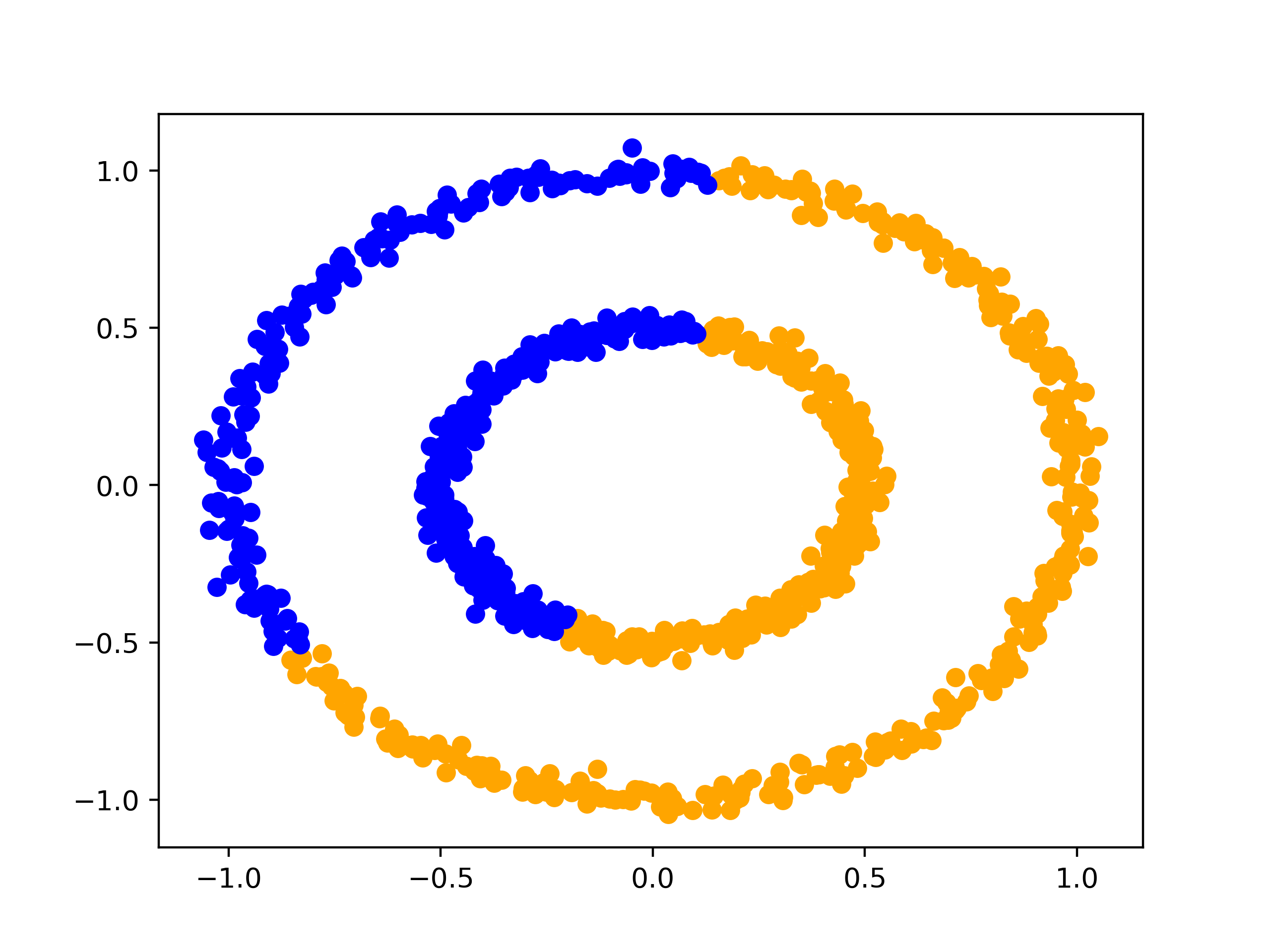}}
\subfloat[][Embedded points]{
\includegraphics[scale = 0.5]{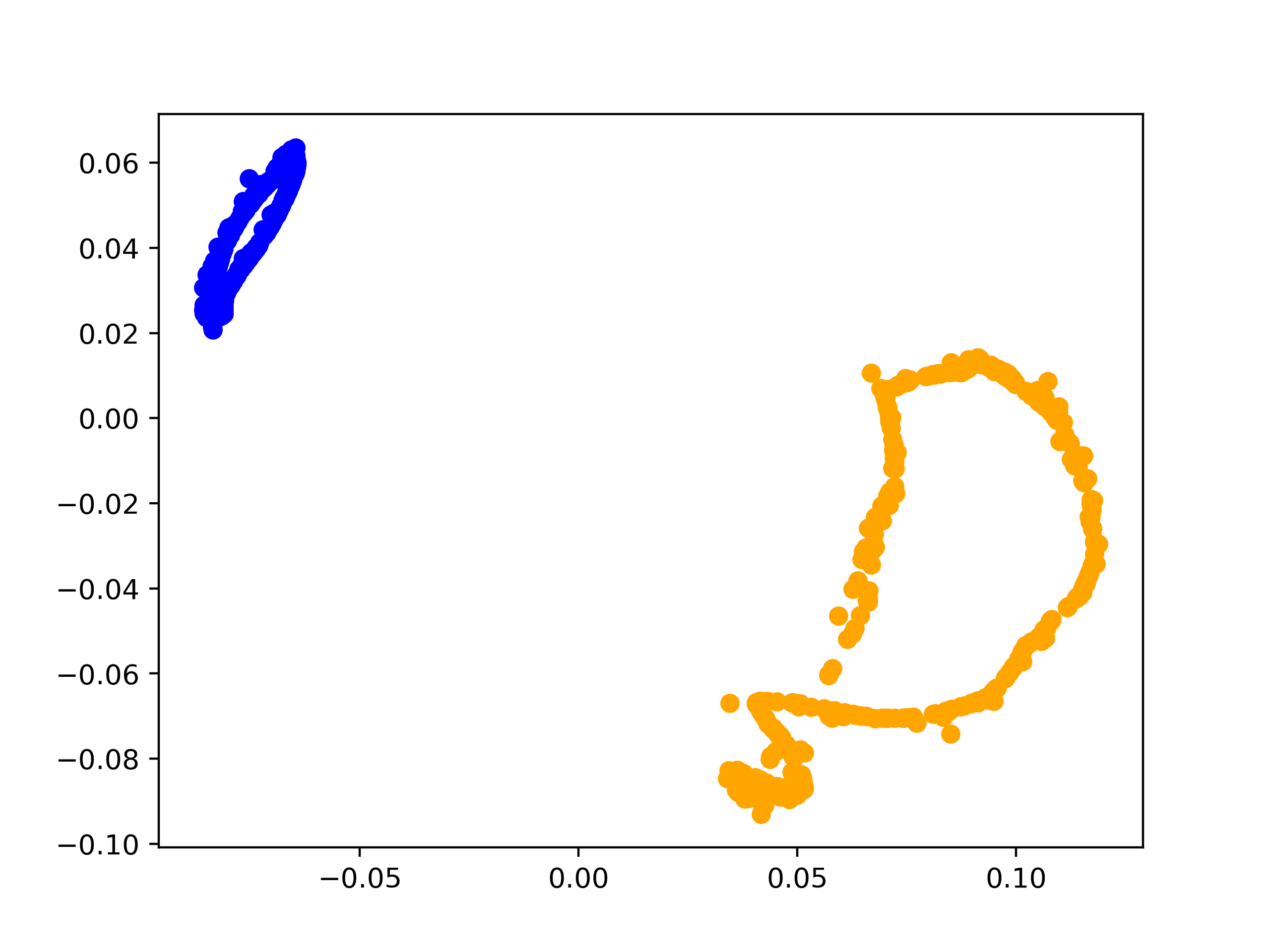}}
\caption{The same circles dataset as in Figure \ref{fig:circ}, but we use agglomerative clustering as implemented in \texttt{scikit-learn}. (a) The original circles in 2D space, with colors denoting the two clusters. (b) The re-embedded points, colored by the clusters as determined via sum-of-norms in this space. Perfect recovery is achieved.}
\label{fig:circ_agg}
\end{figure}

\end{document}